\definecolor{beige}{RGB}{250, 250, 235}
\theoremstyle{plain}
\newtheorem{theorem}{Theorem}[section]
\theoremstyle{definition}
\newtheorem{definition}[theorem]{Definition}
\theoremstyle{remark}
\definecolor{fuchsia(web)}{rgb}{0.65, 0.2, 0.65}
\newcommand{\argmin}{\operatornamewithlimits{argmin}}
\newcommand{\R}{\mathbb{R}}
\newcommand{\E}{\operatornamewithlimits{\mathbb{E}}}
\newcommand{\inner}[2]{\left\langle #1, #2 \right\rangle}
\newcommand{\xx}{\mathbf{x}}
\newcommand{\pp}{\mathbf{p}}
\newcommand{\es}{\mathbf{s}} 
\newcommand\myeq{\mathrel{\overset{\makebox[0pt]{\mbox{\normalfont\tiny\sffamily def}}}{=}}}
\newcommand\derxx[1]{\frac{\partial{#1}}{{\partial \xx}}}
\newcommand\dderxx[1]{\frac{\partial^2{#1}}{{\partial \xx^2}}}
\newcommand{\mb}{\mathbf}
\newcommand{\mc}{\mathcal}
\LetLtxMacro{\originaleqref}{\eqref}
\renewcommand{\eqref}{Eq.~\originaleqref}
\newcommand\fref{Fig.~\ref}
\newcommand\sref{Section~\ref}
\icmltitlerunning{Action-Minimization Meets Generative Modeling: Efficient Transition Path Sampling with the Onsager-Machlup Functional}
\begin{document}

\twocolumn[
\icmltitle{Action-Minimization Meets Generative Modeling: \\ Efficient Transition Path Sampling with the Onsager-Machlup Functional}



\icmlsetsymbol{equal}{*}

\begin{icmlauthorlist}
\icmlauthor{Sanjeev Raja}{equal,eecs}
\icmlauthor{Martin \v{S}\'{i}pka}{equal,mffUK,prior}
\icmlauthor{Michael Psenka}{equal,eecs}
\icmlauthor{Tobias Kreiman}{eecs}
\icmlauthor{Michal Pavelka}{mffUK}
\icmlauthor{Aditi S. Krishnapriyan}{eecs,cheme,lbl}

\end{icmlauthorlist}

\icmlaffiliation{eecs}{Dept. of EECS, UC Berkeley}
\icmlaffiliation{cheme}{Dept. of Chemical and Biomolecular Engineering, UC Berkeley}
\icmlaffiliation{lbl}{Applied Mathematics and Computational Research, LBNL}
\icmlaffiliation{mffUK}{Faculty of Mathematics and Physics, Charles University}
\icmlaffiliation{prior}{Done prior to joining Amazon}

\icmlcorrespondingauthor{Sanjeev Raja, Aditi S. Krishnapriyan}{\{sanjeevr, aditik1\}@berkeley.edu}

\icmlkeywords{Transition path sampling, molecular dynamics, generative models, diffusion models, flow matching}

\vskip 0.3in
]


\printAffiliationsAndNotice{\icmlEqualContribution} 

\begin{abstract}
Transition path sampling (TPS), which involves finding probable paths connecting two points on an energy landscape, remains a challenge due to the complexity of real-world atomistic systems. Current machine learning approaches use expensive, task-specific, and data-free training procedures, limiting their ability to benefit from high-quality datasets and large-scale pre-trained models. In this work, we address TPS by interpreting candidate paths as trajectories sampled from stochastic dynamics induced by the learned score function of pre-trained generative models, specifically denoising diffusion and flow matching. Under these dynamics, finding high-likelihood transition paths becomes equivalent to minimizing the Onsager-Machlup (OM) action functional. This enables us to repurpose pre-trained generative models for TPS in a zero-shot manner, in contrast with bespoke, task-specific approaches in previous work. We demonstrate our approach on varied molecular systems, obtaining diverse, physically realistic transition pathways and generalizing beyond the pre-trained model's original training dataset. Our method can be easily incorporated into new generative models, making it practically relevant as models continue to scale and improve with increased data availability. Code is available at \texttt{github.com/ASK-Berkeley/OM-TPS}.
\end{abstract}

\vspace{-15pt}
\section{Introduction}

\begin{figure*}[h]
    \centering
\includegraphics[width=\linewidth] {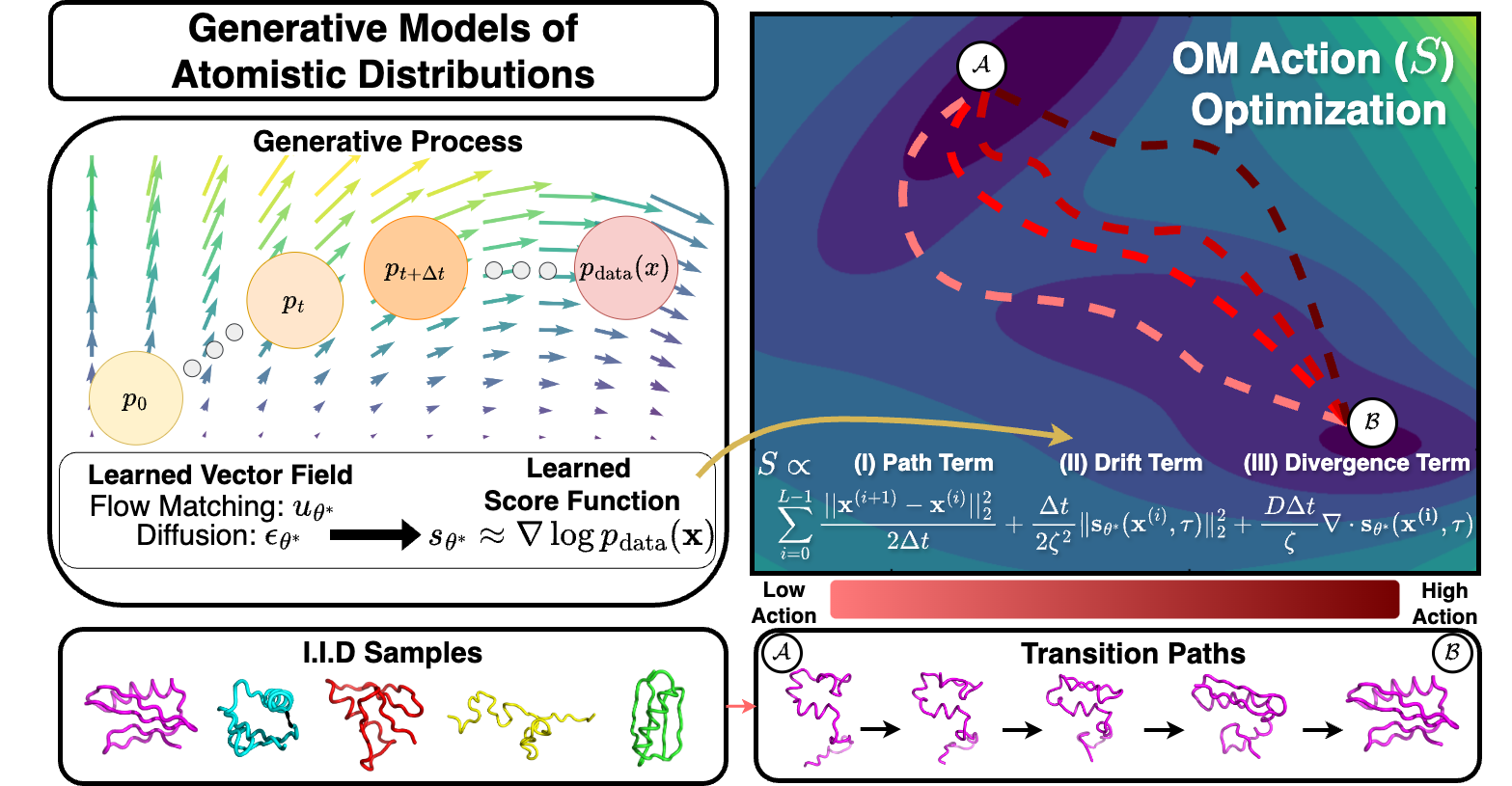}
    \caption{\textbf{Proposed Onsager-Machlup Action Optimization Schematic.} \textbf{(Left)} Atomistic generative models produce statistically independent samples via integration along a learned vector field, from which a score estimate, $s_{\theta^*} \approx \nabla \log  p_\text{data}(\xx)$, can be extracted. The score can be interpreted as a drift term in the stochastic dynamics induced by the generative model. \textbf{(Right)} This connection can be leveraged to repurpose atomistic generative models to find high-probability transition paths between samples on the data manifold by minimizing the OM action functional, \eqref{eq:generative_om_action}. The OM action has three terms which prioritize (I) low distance between adjacent points on the discretized path, (II) low-norm drifts, and (III) convexity of the underlying energy landscape. The variables are as follows: path point $i$ ($\xx^{(i)}$), trajectory length ($L$), timestep ($\Delta t$), friction ($\zeta$), diffusion ($D$), latent time ($\tau$). The underlying score $s_{\theta^*}$ remains frozen throughout OM optimization. Although valid for any data distribution, in the special case of Boltzmann-distributed data, our approach has the natural interpretation of transition path sampling on a potential energy surface with an atomistic force field.}
    \label{fig:mb_result}
    \vspace{-5pt}
\end{figure*}

Efficiently sampling the configurational distribution of high-dimensional molecular systems is a grand challenge in statistical mechanics and computational science \cite{tuckerman2023statistical, frenkel2023understanding}. A key area of interest is the sampling of rare, transition events between two stable configurations, such as in chemical reactions or protein folding \cite{bolhuis2002transition, dellago2002transition}. This task, broadly known as transition path sampling (TPS), is challenging due to the presence of energy barriers, which create a substantial difference in timescales between rare events and the fastest dynamical motions of the system (e.g., bond vibrations). This has inspired a rich line of literature on enhanced sampling techniques \cite{torrie1977nonphysical, swendsen1986replica, laio2002escaping, laio2008metadynamics, tiwary2013metadynamics, valsson2014variational}. More recently, machine learning (ML) based methods have gained popularity for accelerating TPS by learning a control drift term to bias a system towards a desired target state \cite{sipka2023differentiable, holdijk2024stochastic, du2024doob, seong2024transitionpathsamplingimproved}. However, these approaches rely on highly specialized training procedures and fail to exploit the growing quantity of atomistic simulation and structural data \cite{bank1971protein, vander2024atlas, lewis2024scalable}, and the increasing availability of high-quality atomistic conformational generative models \cite{abramson2024accurate, lewis2024scalable, jing2024alphafoldmeetsflowmatching, zheng2024predicting}. 

Generative models can produce unbiased, independent samples from atomistic conformational ensembles \cite{noe2019boltzmann, zheng2024predicting, jing2024alphafoldmeetsflowmatching} and have shown the potential to generalize across chemical space \cite{klein2024transferable, lewis2024scalable}. However, they have not been directly used for TPS due to the use of uncorrelated states during training. In this work, we propose a conceptually simple post-training method to repurpose generative models to perform TPS in a zero-shot manner. Our core idea exploits the fact that generative models based on denoising diffusion \cite{ho2020denoising} and flow matching \cite {lipman2022flow} induce a set of stochastic Langevin dynamics on the data manifold, governed by their learned \textit{score} function. Drawing inspiration from statistical mechanics, the probability of paths sampled from these dynamics can be characterized by a quantity known as the Onsager-Machlup (OM) action functional \cite{OMIbasic}. As a result, we can identify high-probability paths between arbitrary points on the data manifold directly from first principles by minimizing the OM action with gradient-based optimizers. In the specific case where the data are Boltzmann-distributed, the learned score approximates the underlying atomistic force field \cite{arts2023two}, and our approach has the direct, physical interpretation of TPS on a potential energy surface at a finite temperature.

Our approach has a number of key advantages:
\vspace{-5pt}
\begin{enumerate}
    \itemsep-1pt
    \item \textbf{Scalability:} We do not require any training procedure specific to TPS, and instead exploit pre-trained generative models. This makes our approach scalable as models and datasets continue to grow, and generalizable to new systems without retraining. 
    \item \textbf{Flexibility:} We can incorporate physical parameters such as the time horizon and temperature when sampling paths without modifying the underlying generative model.
    \item \textbf{Efficient Diversity:} We leverage the stochasticity of generative models to produce diverse candidate paths more efficiently than traditional methods.
\end{enumerate}
\vspace{-7 pt}
We validate our approach on systems of increasing complexity. Starting with the 2D Müller-Brown potential (Section \ref{sec:mb_results}), we build intuition and show accurate estimation of reaction rates and committor functions using our sampled transition paths. On alanine dipeptide (Section \ref{sec:alanine_dipeptide_results}), our method recovers accurate free energy barriers and outperforms metadynamics and shooting algorithms in efficiency. For fast-folding proteins (Section \ref{sec:fastfolders}), OM action minimization with diffusion or flow-matching models yields transition path ensembles closely aligned with reference MD at a fraction of the cost of traditional MD. Finally, we show that OM optimization with generative models trained on tetrapeptide configurations enables zero-shot TPS on new sequences (Section \ref{sec:tetrapeptides}). Overall, our work demonstrates the promise of repurposing pre-trained generative models as a general-purpose approach for transition path sampling.


\vspace{-5pt}
\section{Related Work}
\vspace{-3pt}
\paragraph{Transition path sampling.}
Traditional approaches to TPS like umbrella sampling \cite{torrie1977nonphysical} and metadynamics \cite{Laio2002a} employ biasing potentials along representative collective variables (CV). However, defining suitable CVs is challenging near transition states, even with automated approaches \cite{Sultan2018AutomatedLearning, Sipka2023}. Meanwhile, shooting techniques \cite{mullen2015easy, borrero2016avoiding, jung2017transition, bolhuis2021transition}, which use a Metropolis-Hastings criterion to sample trajectories, suffer from slow sampling, high rejection rates, and the need for expensive simulations. While ML approaches, including reinforcement learning \cite{das2021reinforcement, rose2021reinforcement, singh2023variational, seong2024transitionpathsamplingimproved, liang2023probing}, differentiable simulations \cite{sipka2023differentiable}, and $h$-transform learning \cite{singh2023variational, du2024doob}, have been used to design CVs or biasing potentials with promising results, they require expensive sampling procedures, must be retrained for each new system of interest, and do not exploit atomistic simulation data. Minimizing the OM action has been used for TPS in low-dimensional systems \cite{faccioli2006dominant, vanden2008geometric, autieri2009dominant, 10.1063/1.3372802, Lee2017}, but has faced computational challenges, such as a lack of integration with modern auto-differentiation frameworks, preventing scalable, gradient-based optimization \cite{a2012dominant}. To our knowledge, our work is the first to propose a gradient-based optimization of the complete OM action, and to connect it to the stochastic dynamics induced by generative models. For a more extensive review of TPS methods, see Appendix \ref{sec:extended_related_work}.

\vspace{-7pt}
\paragraph{Atomistic generative models.}
Generative models can produce unbiased, independent samples from the configurational ensemble of molecular systems, pioneered by Boltzmann generators \cite{noe2019boltzmann} and since further developed for proteins \cite{arts2023two, zheng2024predicting, jing2024alphafoldmeetsflowmatching, lewis2024scalable, schreiner2023implicit}, small molecules \cite{huang2024dual, schneuing2024structure,reidenbach2024coarsenconf,igashov2024equivariant}, and materials \cite{zeni2023mattergen, zheng2024predicting, xie2022crystal}. Generative models are typically trained to match the distribution of atomistic configurations from large-scale datasets, including structural databases \cite{bank1971protein} and long-timescale MD simulations \cite{lindorff2011fast, vander2024atlas}. Recent works adapt generative models to produce more diverse samples \cite{corso2023particle}, perform rare event sampling \cite{falkner2023conditioning}, perform MD simulations using the connection between diffusion models and force fields \cite{arts2023two}, and learn generative models directly over trajectories \cite{jing2024generative}.

\vspace{-7pt}
\paragraph{Interpolations in generative models.}
Analagous to TPS, interpolation has been used to evaluate the smoothness and continuity of learned data manifolds and to generate realistic transitions between data points using generative models. While linear interpolation in model latent spaces is known to capture some continuity \cite{kingma2013auto, goodfellow2014generative}, geometric techniques such as geodesic interpolation and optimal transport \cite{arjovsky2017wasserstein,arvanitidis2018latent, lesniak2018distribution, michelis2021linear, struski2023feature, psenka2024representation} better align with the intrinsic manifold structure of the data. Our OM optimization approach can be seen as a novel interpolation mechanism which leverages the inductive bias of stochastic dynamics to generate high-probability transition paths.

\vspace{-5pt}
\section{Theory}
We now introduce the OM action as a way to compute path probabilities under a particular stochastic differential equation (SDE), and describe its application to score-based generative modeling and transition path sampling.
\vspace{-5pt}
\subsection{Probability of paths under stochastic dynamics} 
\label{sec:stoch_dyn}
We introduce the following constant variance SDE which will underpin our proposed framework for TPS: 
\begin{equation}
    \label{eq:overdamped_langevin_main}
    \odif{\xx} = \frac{1}{\zeta}\mathbf{\Phi(x)}\odif{t} + \sqrt{2 D} \odif{\mathbf{W_t}},
\end{equation}
where $\mathbf{\Phi}(\xx) : \R^k \to \R^k$ is a drift function, $\mb{W}_t$ is a standard Weiner process, and $D, \zeta > 0$ are scalar constants governing diffusion noise levels and damping respectively. We consider drifts which can be written as the gradient of a scalar: $\mathbf{\Phi(x)} = -\frac{\partial \phi(\xx)}{\partial \xx}$, where $\phi(\xx) : \R^k \to \R$. By solving a Fokker-Planck equation for the time-varying state distribution $p(\xx, t)$, we can obtain the probability of a path $\xx(\cdot) = \left \{ \xx(t) \right \}_{t=0}^1$ sampled from the SDE in \eqref{eq:overdamped_langevin_main} (see Appendix \ref{appendix:OM_action} for complete details):
\begin{equation}
\label{eq:path_probability_main}
    P( \xx(\cdot)) \propto e^{\left(-S[ \xx(\cdot)]\right)}.
\end{equation}
To maximize this probability with respect to a path, we can equivalently minimize the negative log probability $S$, which is called the Onsager-Machlup action functional. This is the stochastic analogue of the well-known principle of least action from optics and quantum mechanics \cite{rojo2018principle}. Since we only consider discretized paths in this work, we focus on the discretized form of the OM action:
\begin{tcolorbox}[colback=cyan!15!white, boxrule=0.1mm, arc=1mm]
\begin{definition}
For a discrete path $\mathbf{X} = \{\xx^{(0)}, \dots, \xx^{(L)}\}$ generated by the SDE in \eqref{eq:overdamped_langevin_main} with drift $\mathbf{\Phi}$ and timestep size $\Delta t$, the discretized form of the \textbf{Onsager-Machlup action functional} is given by: 
\vspace{-2ex}
\begin{align}
\begin{split}
\label{eq:final_OM}
    S[\mathbf{X}] &= \frac{1}{2D} \Bigg(A[\mathbf{X}]
    + B[\mathbf{X}] +  C[\mathbf{X}] \Bigg), \\
    A[\mathbf{X}] &= \frac{1}{2\Delta t} \sum_{i=0}^{L-1} \left\|\xx^{(i+1)} - \xx^{(i)} \right\|_2^2, \\ 
    B[\mathbf{X}] &= \frac{\Delta t}{2\zeta^2} \sum_{i=1}^{L-1} \left\|\mathbf{\Phi}(\xx^{(i)})\right\|_2^2, \\ 
    C[\mathbf{X}] &= \frac{D\Delta t}{\zeta} \sum_{i=1}^{L-1} \nabla \cdot \mathbf{\Phi}(\xx^{(i)}).
\end{split}
\end{align}
\end{definition}
\end{tcolorbox}
The three summands of the OM action each have an intuitive interpretation. Term $A$ encourages smooth transitions along the discretized path. Term $B$ encourages paths remaining in regions with low-norm drifts, which are equilibria or saddle points of the underlying dynamics. Finally, term $C$ encourages paths to remain in regions with low divergence of the drift, which can be interpreted as entropically favoring regions of convexity in the landscape of $\phi$, where dynamics are more stable. The parameters $\zeta, \Delta t$, and $D$ control the relative contribution of these three terms in a physically intuitive manner. For instance, at larger values of $\Delta t$, the contribution of term $A$ is diminished, consistent with the intuition that larger ``jumps" are more probable with larger timesteps. In the limiting case of negligible diffusivity $D$ (analogous to temperature, see Appendix \ref{appendix:OM_action}), the divergence term can be omitted, yielding the Truncated OM Action:
\begin{tcolorbox}[colback=cyan!15!white, colframe=black, boxrule=0.1mm, arc=1mm]
\begin{definition}
The \textbf{truncated OM action} of a discretized path $\mathbf{X}$ is given by:
\begin{align}
\label{eq:final_OM_truncated}
S[\mathbf{X}] &= \frac{1}{2D} \big( A[\mathbf{X}] + B[\mathbf{X}] \big).
\end{align}
\end{definition}
\end{tcolorbox}
In this work, we use both the truncated and full OM actions depending on the particular system considered.  
\vspace{-3pt}
\subsection{Transition path sampling in molecular systems} 
The connection between the SDE in \eqref{eq:overdamped_langevin_main} and TPS is straightforward. Formally, in TPS we consider $d$-dimensional molecular systems with $N_p$-many particles interacting under a potential energy function $U(\mathbf{\xx}): \mathbb{R}^{N_p \times d} \rightarrow \mathbb{R}$, where $\xx \in \Omega$ is a configuration of the system, and $\Omega \in  \mathbb{R}^{N_p \times d}$ is the configuration space. The goal of TPS is to, for the given system and temperature, find most likely paths $\left\{\xx^{(i)} \right\}_{0 \leq i \leq L}$ over a time horizon $T_p$, where $\Delta t$ is the simulation timestep and $L = \frac{T_p}{\Delta t}$ is the number of discretization points in the trajectory. The trajectory traverses the two endpoints $\xx^{(0)} \in \mathcal{A} \subset \Omega$, $\xx^{(L)} \in \mathcal{B} \subset \Omega$, where $\mathcal{A}, \mathcal{B}$ typically represent distinct minima on the potential energy surface $U(\xx)$. The underlying particle dynamics are governed by the SDE in \eqref{eq:overdamped_langevin_main}, known in this context as overdamped Langevin dynamics. The scalar $\phi$ and gradient-based drift $\mathbf{\Phi}$ terms have a clear physical interpretation as the potential energy and forces, respectively:
\vspace{-5pt}
\begin{align}
    \phi(\xx) &:= U(\xx), \\
    \mathbf{\Phi}(\xx) &:= \mb{F}(\xx) = -\nabla U(\xx).
\end{align}
If the forces $\mb{F}(\xx)$ are known, the OM action (\eqref{eq:final_OM}) can be used to compute the probability of paths connecting endpoints $\xx^{(0)}, \xx^{(L)}$ under the governing dynamics.

\vspace{-3pt}
\subsection{Score-based generative modeling}
We now describe the connection between the SDE in \eqref{eq:overdamped_langevin_main} and generative models, namely denoising diffusion and flow matching. Specifically, we show that these models induce a set of stochastic dynamics whose drift is given by their learned $\textit{score}$ function. This provides a powerful framework to reason about high-probability transition paths.
\subsubsection{Stochastic dynamics under denoising diffusion models.}
\label{sec:stoch_dynamics_gen}
\emph{Denoising diffusion probabilistic models} (DDPM) \cite{ho2020denoising} are a class of score-based generative models that learn how to de-noise corrupted samples. The DDPM objective \eqref{eq:ddpm_loss} is closely linked to the score-matching objective 
\cite{vincent2011connection, song2019generative} for training a score model $s_\theta (\xx, \tau): \R^k \times \R^+ \to \R^k$ parameterized by $\theta$:
\begin{equation}
     \mathcal{L}_\text{SM}(\theta) = \mathbb{E}_{\tau, \xx \sim p_\tau} \Bigg[ \|s_\theta(\xx, \tau) - \nabla \log \left(p_\tau(\xx)\right)\|_2^2 \Bigg],
     \label{eq:score_matching}
\end{equation}
where $\nabla \log p_\tau(\xx)$ is the score of the noised marginal distribution $p_\tau$ at time $\tau$. This establishes the connection between the score and the optimal noise model with parameters $\theta^*$: $\epsilon_{\theta^*}(\xx, \tau) \propto -\nabla \log p_\tau(\xx)$. See \Cref{sec:ddpm_score_proof} for detailed statements and proofs. In order to use the OM action to compute path probabilities with a DDPM, we must construct a surrogate SDE in the form of \eqref{eq:overdamped_langevin_main}, such that paths under this SDE have high likelihood under the data distribution used to train the model. While the denoising (i.e., sampling) process of a DDPM (see \Cref{sec:ddpm_sampling}) may appear to be a natural candidate, a closer inspection reveals that it is unsuitable, as it optimizes for different likelihoods at different points of the trajectory. A large portion of the denoising trajectory thus has low likelihood under the data distribution. Therefore, we need to consider an alternative approach.
\vspace{-3pt}
\paragraph{Iterative denoising and noising as a candidate SDE.} Another hypothesis for constructing an SDE is to leverage the process of iterative one-step denoising and noising at a fixed time marginal $\tau$ of the diffusion process. Intuitively, this balances the likelihood-maximizing drift of the denoising step with the stochasticity of the noising step. Specifically, we consider the following iterated denoise-noise updates, where $\epsilon_\theta(\xx, \tau)$ is the trained denoising model from the DDPM:
\begin{align}
    \xx^{(i, \text{mid})} &= \frac{1}{\sqrt{1-\beta_\tau}}\left(\xx^{(i)} - \frac{\beta_\tau}{\sqrt{1 - \bar{\alpha}_\tau}} \epsilon_\theta(\xx^{(i)}, \tau)\right) + \sqrt{\beta_\tau} z, \\
    \xx^{(i+1)} &= \sqrt{1 - \beta_\tau} \xx^{(i, \text{mid})} + \sqrt{\beta_\tau} z',
\end{align}
where $z, z' \sim \mc N(0, I)$, $\alpha, \beta$ denote the usual diffusion model noise schedule variables, and $\bar{\alpha}_\tau = \prod_{i=1}^\tau \alpha_i$. Combining the two updates yields a single update equivalent in distribution, writing $\mb{s}_\theta(\xx, \tau) = -\left(1 / \sqrt{1-\bar{\alpha}_{\tau-1}}\right)\epsilon_\theta(\xx, \tau)$, yields:
\begin{align}
\label{eq:noise_denoise_combo}
\begin{split}
    \xx^{(i+1)} &= \xx^{(i)} + \frac{\beta_\tau\sqrt{1-\bar{\alpha}_{\tau-1}}}{\sqrt{1-\bar{\alpha}_\tau}} \mb{s}_\theta(\xx^{(i)}, \tau) + \sqrt{2\beta_\tau - \beta_\tau^2} z,
\end{split}
\end{align}
where $z \sim \mc N(0, I)$. Taking the continuum limit of DDPM sampling discretization steps to infinity, we get that $\beta_\tau \to 0$. Noting that $2\beta_\tau - \beta_\tau^2 \approx 2\beta_\tau$ and $\bar{\alpha}_\tau \to \bar{\alpha}_{\tau - 1}$ at this limit, we see that \eqref{eq:noise_denoise_combo} is an Euler-Maruyama discretization, with timestep $\beta_\tau$, of the following SDE:
\begin{equation}\label{eq:latent_sde_def}
    d{\xx} =  \mb{s}_\theta(\xx, \tau)\odif{t} + \sqrt{2} \, \odif{\mathbf{W_t}}.
\end{equation}
Note that the above construction holds for any $\tau \in \{1, \ldots, T_d\}$ where $T_d$ is the maximum diffusion time. A similar derivation can be found in \citet{arts2023two} for $\tau = 0$.

Then, \eqref{eq:latent_sde_def} is equivalent (up to constants) to \eqref{eq:overdamped_langevin_main}:
\begin{align}
    \phi(\xx) &\propto - \log p_\tau(\xx), \\
    \mathbf{\Phi}(\xx) &\propto \mb{s}_\theta(\xx, \tau) \approx \nabla \log p_\tau(\xx).
\end{align}
This means that, as introduced in \eqref{eq:path_probability_main} and \eqref{eq:final_OM}, the likelihood of discrete paths $\mathbf{X} = \left(\xx^{(t)} \right)_{0 \leq t \leq L}$ under the constructed SDE \eqref{eq:latent_sde_def} can be evaluated via the OM action defined by the trained DDPM score $\mb{s}_{\theta^*}(\xx, \tau)$:
\begin{align}
\begin{split}
    &S(\mb{X}; \theta^*) = \frac{1}{2D}\Bigg(\sum_{i=0}^{L-1}  \frac{1}{ 2\Delta t}\left\|\xx^{(i+1)} - \xx^{(i)} \right\|^2_2 \\
    &\quad 
    + \frac{\Delta t}{2\zeta^2} \left\|\mb{s}_{\theta^*}(\xx^{(i)}, \tau)\right\|_2^2 + \frac{D \Delta t}{\zeta} \ \nabla \cdot \mb{s}_{\theta*}(\xx^{(i)}, \tau)\Bigg),
\end{split}
\label{eq:generative_om_action}
\end{align}

where $\zeta = 1$, $D=1$, and $\Delta t = \beta_\tau$. Note that $\beta_\tau$ can be effectively tuned as a hyperparameter by changing the sampling discretization fidelity. In our specific scenario of TPS, we can set these parameters to physically interpretable values, with $\Delta t$, $\zeta$, and $D$ corresponding to the MD simulation timestep, friction coefficient, and diffusion coefficient, respectively (see Section \ref{sec:phys_interp}).

\subsubsection{Extension to flow matching}
We now link OM action-minimization with a broader class of generative models beyond DDPM. Flow matching models \cite{lipman2022flow} are a natural choice due to their strong performance in generative modeling tasks across modalities \cite{jing2024alphafoldmeetsflowmatching, polyak2024movie}. Similarly to DDPM, flow matching generates samples through a repeated integration process over a learned vector field. For affine flows considered in this work, the training objective for the learned velocity field $u_\theta(x, \tau)$ takes the form,
\begin{align}
    \mathcal{L}_\text{FM}(\theta) &= \mathbb{E}_{\tau, \xx \sim p_\tau} \Bigg[ \|u_\theta(\xx, \tau) - u_\tau(\xx)\|_2^2 \Bigg],\\
    u_\tau(\xx) &= \E_{\xx_0 \sim p_0, \xx_1 \sim p_1}\left[ \dot{\alpha_\tau} \xx_1 + \dot{\sigma_\tau} \xx_0 \mid \xx = \alpha_\tau \xx_1 + \sigma_\tau \xx_0\right],
\end{align}
where $p_0$ and $p_1$ are the source and target distributions, $\alpha_\tau, \sigma_\tau : [0,1] \to [0,1]$ define a curve from $\xx_0$ to $\xx_1$: $\alpha_0 = \sigma_1 = 0$, $\alpha_1 = \sigma_0 = 1$, and $\alpha_\tau, -\sigma_\tau$ are both strictly increasing functions.

While extracting the learned score $\mb{s}_\theta$ from DDPM is straightforward via $\mb{s}_\theta (\xx, \tau) = -\left(1 / \sqrt{1-\bar{\alpha}_{\tau-1}}\right)\epsilon_\theta (\xx, \tau)$, it is less clear how to do so for flow matching. However, we note the following:
\begin{enumerate}
    \item By \eqref{eq:score_matching}, the targets for the denoising model $\epsilon_\theta(\xx, \tau)$ in DDPM are equivalently the negative scores of the noised distribution, $-\nabla \log p_\tau (\xx)$.
    \item The targets $u_t(\xx)$ for the flow matching model $u_\theta(\xx, \tau)$ can be converted to scores of the flow marginal distribution $\nabla \log p_\tau (\xx)$ through the following formula (see \Cref{sec:proof_flowmatch_score} for the proof):
    \begin{equation}
        \label{eq:flow_force}
          \nabla \log p_\tau(\mathbf{x}) =  \frac{\alpha_\tau}{\dot{\sigma}_\tau \sigma_\tau \alpha_\tau - \dot{\alpha}_\tau\sigma_\tau^2} \left(\frac{\dot{\alpha}_\tau}{\alpha_\tau}\mathbf{x} - u_\tau(\mathbf{x})\right).
    \end{equation}
\end{enumerate}
\vspace{-7pt}

We can thus extract an approximate score $\mb{s}_{\theta^*}^{\mathrm{FM}}(\xx, \tau)$ from a trained flow matching model $u_{\theta^*}(\xx, \tau)$ via,
\begin{equation}
    \label{eq:flow_force_theta}
     \mb{s}_\theta^{\mathrm{FM}}(\xx, \tau) =  \frac{\alpha_\tau}{\dot{\sigma}_\tau \sigma_\tau \alpha_\tau - \dot{\alpha}_\tau\sigma_\tau^2} \left( \frac{\dot{\alpha}_\tau}{\alpha_\tau}\mathbf{x} - u_{\theta^*}(\mathbf{x}, \tau) \right).
\end{equation}

By inserting $\mb{s}_{\theta^*}^{\mathrm{FM}}(\xx, \tau)$ into the denoise-noise process defined in \eqref{eq:latent_sde_def}, we again obtain an SDE of the form of \eqref{eq:overdamped_langevin_main}. Hence, we can use the OM action to compute the log-probabilities of paths between arbitrary datapoints. 
\subsubsection{Physical interpretation for Boltzmann-distributed data}
\label{sec:phys_interp}
By assuming underlying dynamics of the form \eqref{eq:latent_sde_def}, our framework combining generative models and the OM action can be used to compute the log-probabilities of paths between samples from \textit{any} data distribution with a well-defined score. However, in the special case where the data used to train the generative model are Boltzmann distributed, i.e., $p(\mathbf{x}) \propto \exp{(-\frac{U(\mathbf{x})}{k_BT})}$, where $k_B$ is Boltzmann's constant and $T$ is the temperature, the learned score $\mb{s}_{\theta^*}(\xx, \tau = 0): \mathbb{R}^{N_p \times d} \rightarrow \mathbb{R}^{N_p \times d}$ 
is interpretable as a physical, atomistic force field:
\begin{align}
    \mb{s}_{\theta^*}(\xx, \tau =0) \approx \nabla \log p(\xx) \propto - \nabla U(\xx) = \mathbf{F}(\xx).
\end{align}
The constructed dynamics in \eqref{eq:latent_sde_def} at $\tau=0$ reduce to overdamped Langevin dynamics governing particles on a potential energy surface. Thus, we can directly set the constants in the OM action ($\Delta t$, $\zeta$, $D$, $T$) to physical values used in MD simulations for a given atomistic system. Under our OM framework, finding high-probability paths between points on a Boltzmann-distributed data manifold thus directly aligns with the conventional notion of TPS for atomistic systems.

\vspace{-5pt}
\section{Repurposing Atomistic Generative Models via Action Minimization}
\label{sec:main_method}
We now introduce our OM optimization approach to produce high probability transition paths between atomistic structures using pre-trained generative models. Given a pre-trained generative model with a fixed score function $\mathbf{s}_{\theta^*}$, and two atomistic configurations, $\mathbf{x}^{(0)}, \mathbf{x}^{(L)} \in \mathbb{R}^{N_p \times d}$ where $\mathbf{x}^{(0)} \in \mathcal{A}, \mathbf{x}^{(L)} \in \mathcal{B}$, we aim to sample a transition path $\mathbf{X} = \left\{ \mathbf{x}^{(i)} \right\}_{i \in \left[0, L \right]}$ consisting of $L$ discrete steps. Our core inductive bias is to interpret candidate paths as realizations of the denoise-noise SDE in \eqref{eq:latent_sde_def}, enabling tractable computation and optimization of path log-likelihoods via the OM action. Our approach proceeds in three primary steps: 1) computing an initial guess path, 2) performing OM optimization, and (in some cases) 3) decoding the optimized path back to the configurational space $\Omega$. Algorithm \ref{alg:om_optimization} summarizes the procedure without the decoding step.
\vspace{-5pt}
\paragraph{Computing an initial guess path.}
The choice of initial path connecting the endpoints $\mathbf{x}^{(0)} \in \mathcal{A}$ and $\mathbf{x}^{(L)} \in \mathcal{B}$ is crucial in determining the quality of the subsequently optimized path. Na\"ive linear interpolations in configurational space are likely unphysical, since plausible atomistic structures typically lie on a highly non-convex, low-dimensional manifold of $\Omega$. Instead, we opt to linearly interpolate at a \textit{latent} level $\tau_\text{initial}$ of our pre-trained generative model, which is known to produce samples closer to the data manifold \citep{ho2020denoising}. After interpolation, we can either (1) decode the interpolated path back into configurational space and subsequently optimize the OM action there, or (2) directly optimize the OM action at the latent level $\tau_{\text{initial}}$. Both are justified, since the proposed dynamics in \eqref{eq:latent_sde_def} are valid for any $\tau$. See Appendix \ref{sec:om_details} for complete details. 
\vspace{-5pt}
\paragraph{Optimization of the OM action.}
Starting from the initial guess, we find paths which have high probability (low action) under the SDE in \eqref{eq:latent_sde_def} induced by the generative model. The optimization problem simply becomes,
\vspace{-3 pt}
\begin{equation}
\label{eq:action_minimization_problem_fixed_endpoints}
    \mathbf{X}^* = \argmin_{\mathbf{X} = \left\{ \mathbf{x}^{(i)} \right\}_{i \in \left[0, L \right]}} \ S[\mathbf{X}; \theta^*],
\end{equation}
where $S[\cdot ; \theta^*]$ is the generative model action in \eqref{eq:generative_om_action} (computed with the frozen, pre-trained score $\mathbf{s}_{\theta^*}$), and $\xx^{(0)}$ and $\xx^{(L)}$ are kept fixed. We approximate this minimum via gradient descent on the path until the action is converged. Since $S[\cdot;\theta^*]$ is a discretized integral, the entire trajectory is optimized in parallel, which is amenable to multi-device acceleration. In line with previous work \cite{arts2023two}, we find that a small, nonzero value of $\tau_\text{opt}$ leads to better alignment with the true forces, and thus treat it as a hyperparameter (see Appendix \ref{sec:om_details}). To accelerate computation of the divergence term in the OM action, we use the Hutchinson estimator \cite{hutchinson1989stochastic}(see Appendix \ref{sec:om_details} for details).
\vspace{-10pt}
\paragraph{Decoding back to configurational space.}
If OM-optimization was performed at a non-zero latent time $\tau_{\text{initial}}$, we decode the final path, obtained after $K$ iterations of gradient descent, back to the configurational space. If optimization was performed in configurational space (i.e., decoding was already done in the first step), then this step is skipped.
\begin{algorithm}[t]
\caption{Onsager-Machlup Transition Path Optimization with Generative Models}
\begin{algorithmic}[1]
\label{alg:om_optimization}
    \STATE \textbf{Input:}
    \STATE Optimization time $\tau_{\text{opt}}$
    \STATE Generative model with time-conditional score $\mb{s}_{\theta^*}(\cdot, \tau_\text{opt})$
    \STATE Two atomistic configurations $\xx^{(0)} \in \mathcal{A} \subset \mathbb{R}^{N_p \times d}$, $\xx^{(L)} \in \mathcal{B} \subset \mathbb{R}^{N_p \times d}$
    \STATE \textbf{Compute} initial guess $\mathbf{X} = \left \{\xx^{(0)},\dots,\xx^{(L)} \right \} = \text{InitialGuess}(\xx^{(0)}, \xx^{(L)}, \tau_{\text{initial}})$ (Algorithm \ref{alg:gen_initial_guess})
    \WHILE{not converged}
        \STATE \textbf{Compute} the OM action, $S[\mathbf{X}; \theta^*]$, with the learned vector field $\mb{s}_{\theta^*}(\cdot, \tau_{\text{opt}})$, using \eqref{eq:generative_om_action}.
        \STATE \textbf{Update} $\mathbf{X} \leftarrow \text{optimizer}(\mathbf{X}, \nabla_\mathbf{X} S[\mathbf{X}; \theta^*])$ (keeping the endpoints $\xx^{(0)}, \xx^{(L)}$ fixed)
    \ENDWHILE
\end{algorithmic}
\end{algorithm}
\begin{figure}
    \centering
    \includegraphics[width=\linewidth]{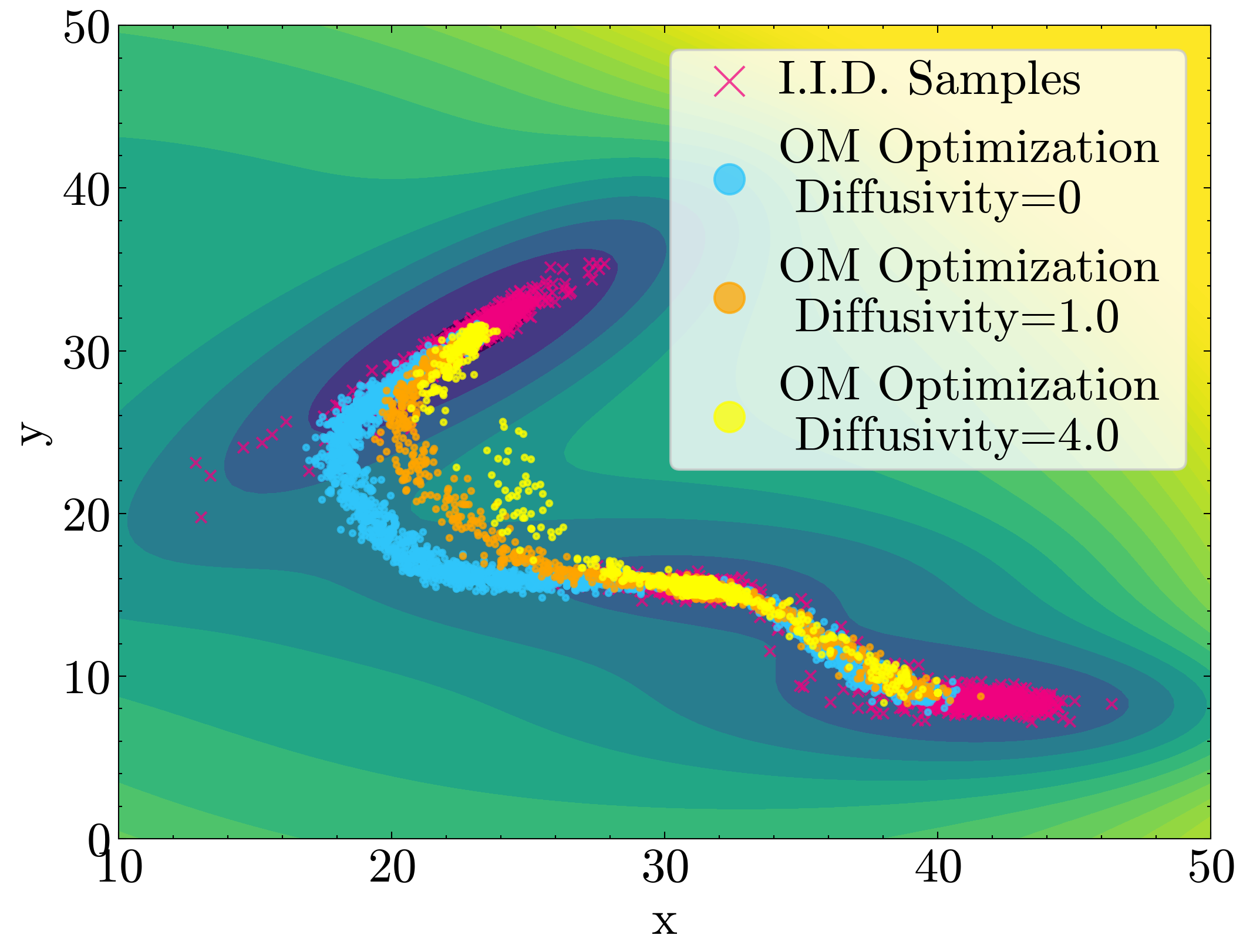}
    \caption{\textbf{OM optimization with a diffusion model on the 2D Müller-Brown potential.} Individual points along the OM-optimized paths are shown as dots. Increasing the diffusivity $D$ causes the path to cross at a higher barrier. An equivalent number of I.I.D samples (red) fails to sample the transition region. }
    \label{fig:mb_hessian_diff}
\end{figure}
\begin{figure}
    \centering
    \includegraphics[width=1.0\linewidth]{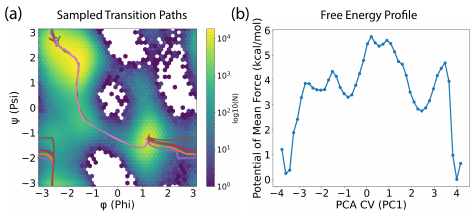}
    \caption{\textbf{OM optimization and free energy barrier estimation on alanine dipeptide using a pre-trained diffusion model.} \textbf{(a)} Sampled alanine dipeptide transition paths from OM optimization with a pretrained diffusion model, overlaid on a Ramachandran plot of sample density at 600K. \textbf{(b)} Free energy barrier for a selected path, estimated via umbrella sampling, is approximately 6 kcal/mol, in line with our metadynamics calculations (\ref{sec:alanine_dipeptide_details}).}
    \label{fig:free_energy}
\end{figure}
\vspace{-5pt}
\begin{table}[t]
\caption{\textbf{Benchmarking the speed of OM optimization against traditional enhanced sampling baselines on alanine dipeptide.} We report the number of force field evaluations required per 1,000 step transition path (for OM optimization we instead report the number of score function evaluations). OM optimization requires significantly fewer evaluations and thus lower computational cost per path than the traditional approaches.}
\label{tab:ala_method_comparison}
\scriptsize
\centering
\resizebox{\columnwidth}{!}{%
\begin{tabular}{llll}
\toprule
\textbf{Method} & \textbf{CVs} & \textbf{\# FF Evals / Path ($\downarrow$)} & \textbf{Runtime / Path} ($\downarrow$)\\
\midrule
\textbf{MCMC (Two-Way Shooting)} & No & $\geq$ 1B & $\geq$ 100 hours\\
\textbf{Metadynamics} & Yes & 1M & 10 hours\\
\textbf{OM Opt. (Diffusion Model) (ours)} & No & \textbf{10K} & \textbf{50 min}\\
\bottomrule
\end{tabular}%
}
\end{table}
\vspace{-5pt}
\begin{figure*}[h]
    \centering
    \includegraphics[width=1.0\linewidth]{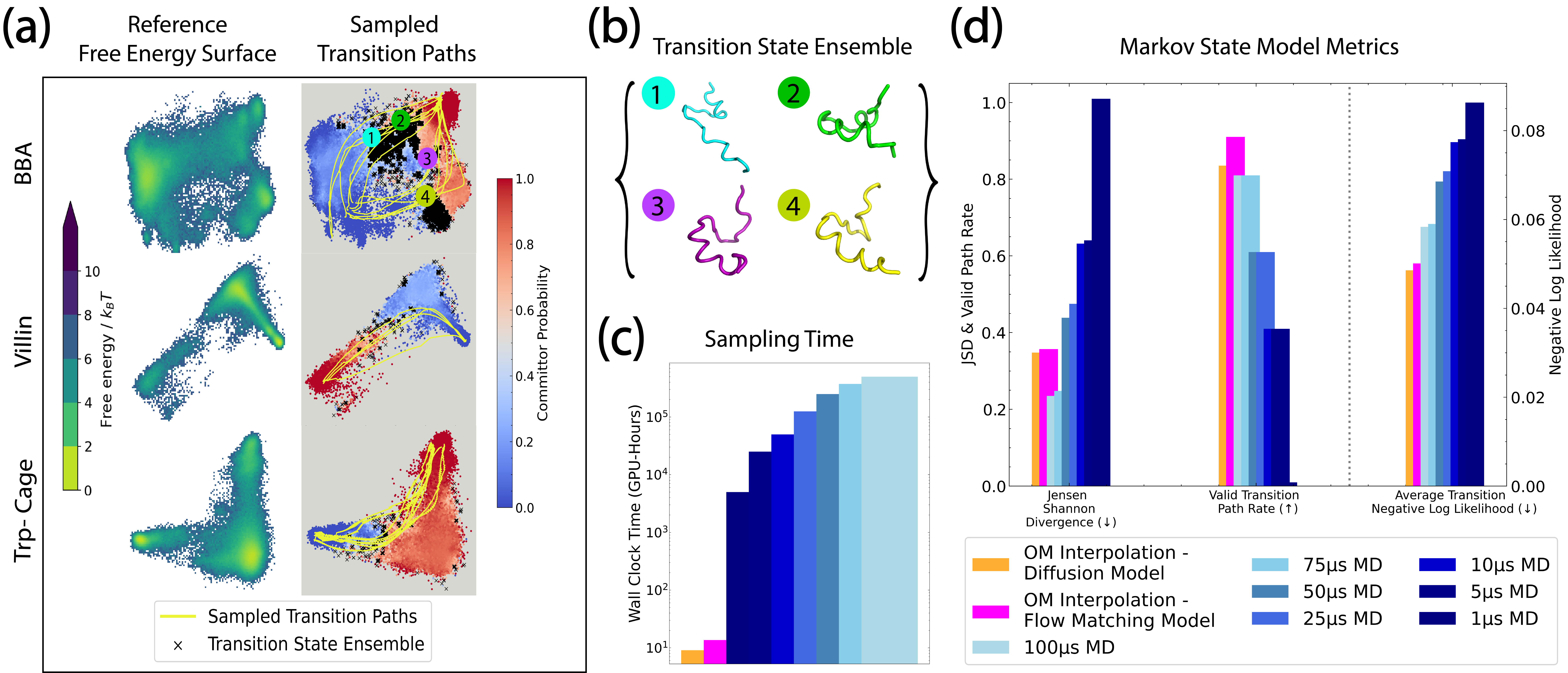}
    \caption{\textbf{OM optimization with diffusion and flow matching models trained on coarse-grained, fast-folding proteins.} \textbf{(a)} Reference free energy surfaces of fast-folding proteins, alongside transition paths produced by OM optimization (yellow) overlaid against the landscape of the empirically computed committor function $q(\xx)$. The transition state ensemble (black) is the set $\left \{ \xx : 0.45 \leq q(\xx) \leq 0.55 \right \}$. \textbf{(b)} Samples from the predicted transition state ensemble of BBA. \textbf{(c)} Runtime of OM optimization and varying lengths of unbiased MD simulation of BBA. MD simulation is performed with the diffusion model score as an approximate force field, as in \citet{arts2023two}. \textbf{(d)} MSM results averaged across 5 fast-folding proteins. Comparisons are with respect to reference, unbiased MD simulation of varying lengths. Plotted are the Jensen-Shannon Divergence (a measure of distributional dissimilarity) between the sampled and reference path MSM state distributions, fraction of sampled paths which are valid (i.e., have non-zero probability under the reference MSM), and average transition negative log likelihood under the reference MSM (indicating the realism of the paths).}
    \label{fig:fastfolders}
\end{figure*}
\section{Results}
We now present the results of our OM optimization approach for TPS with pre-trained generative models. In all cases other than the Müller-Brown potential (\sref{sec:mb_results}), we perform OM optimization in configuration space, so we skip the final decoding step and use physical parameter values. In Appendix \ref{sec:classical_ff}, we demonstrate a use case beyond generative modeling, namely using a classical force field to generate all-atom transition paths for Chignolin and Trp-Cage. 
\vspace{-5pt}
\subsection{2D Müller-Brown potential}
\label{sec:mb_results} 
\vspace{-5pt}
We first demonstrate our method on the 2D Müller-Brown (MB) potential \cite{muller1979location}, a canonical test system for TPS with a global minimum and two local minima, separated by saddle points.
\vspace{-10pt}
\paragraph{Problem setup.} Using a denoising diffusion model pretrained on samples from the underlying potential energy surface, we generate transitions between the two deepest energy minima using OM optimization. We generate the initial guess path via linear interpolation at $\tau_\text{initial} = 8$ (the maximum diffusion model time is $T_d = 1,000$). We perform 200 steps of OM optimization directly at $\tau_\text{initial} = 8$ using $\tau_\text{opt} = 8$, and finally decode the path back to the data distribution via the denoising process. 
\vspace{-5pt}
\paragraph{Results.} As shown in \fref{fig:mb_hessian_diff}, OM optimization with the truncated action yields a transition path (shown in blue) between the energy minima which passes through the lowest energy barrier. Due to the stochastic decoding process, the samples around the transition path exhibit natural diversity. Increasing the diffusion coefficient $D = \frac{k_BT}{\gamma M}$ results in qualitatively different transition paths (shown in orange and yellow). Samples are more concentrated in the three energy minima, and the path crosses a higher energy barrier, consistent with the larger scale of thermal fluctuations at increased temperatures. Meanwhile, 2500 i.i.d. samples (shown in red) from the diffusion model sample only the three energy minima, and fail to sample the transition region. This provides a proof-of-concept that our OM optimization procedure can be used to repurpose generative models to sample transition paths without specialized training. See Appendix \ref{sec:mb_details} for additional results and analysis on the MB potential, including with a flow matching model.
\vspace{-5pt}
\paragraph{Committor and Transition Rate Estimation.} Committor functions and rates are fundamental quantities in the study of transition events \cite{vanden2006towards}. The committor function $q(\xx)$ is defined as the probability that a trajectory initiated at $\mathbf{x}^{(0)} = \mathbf{x}$ reaches the target state $\mathcal{B}$ before the starting state $\mathcal{A}$. Transition paths obtained via OM optimization can be used as an enhanced sampling method to accurately compute the committor function, and subsequently the transition rates, on the Müller-Brown potential (see Appendix \ref{sec:mb_details} for complete details). Specifically, we initiate MD simulations from points along the OM-optimized paths shown in \fref{fig:mb_hessian_diff} to collect a dataset of samples $\mathcal{D}_\text{train}$. We then train a committor function $q_\theta(\xx)$ by solving a functional optimization problem given by the Backward Kolmogorov Equation (BKE) over the sampled points, and finally compute an estimate of the reaction rate using the trained model via the relation \cite{vanden2006towards}:
\begin{equation}
\label{eq:final_rate}
    k_\theta = \frac{k_BT}{\gamma}\langle |\nabla_\xx q_\theta(\xx)|^2 \rangle_\Omega \approx \hat{\mathbb{E}}_{\xx \sim \mathcal{D}_\text{train}} |\nabla_\xx q_\theta(\xx)|^2,
\end{equation}
where $\langle \cdot \rangle_\Omega$ denotes an ensemble average over the configurational space $\Omega$ and $\hat{\mathbb{E}}$ denotes an empirical mean . Using this procedure, we obtain a transition rate estimate of $1.3 \times 10^{-5}$, compared with the true rate of $5.4 \times 10^{-5}$. It is often challenging to compute the reaction rate even within the correct order of magnitude \cite{rotskoff2022active, hasyim2022supervised}, suggesting the promise of our OM optimization method to enable accurate rate estimation on more challenging systems.
\vspace{-5pt}
\subsection{Alanine dipeptide}
\label{sec:alanine_dipeptide_results}
Alanine dipeptide is a classic benchmark system for TPS, with 22 atoms and CVs described by the dihedral angles $\phi, \psi$. 
\vspace{-5pt}
\paragraph{Problem setup.} We start with a denoising diffusion model pre-trained on samples from the alanine dipeptide potential energy surface. Using this model, we perform OM optimization to find transition paths between the two standard minima defined in the CV space. 
\vspace{-10pt}
\paragraph{Results.} We successfully sample two likely transition paths between the metastable basins (\fref{fig:free_energy}a). In Table \ref{tab:ala_method_comparison}, we benchmark the computational efficiency of OM optimization against traditional enhanced sampling techniques, namely metadynamics \cite{laio2008metadynamics} and MCMC-based two-way shooting \cite{bolhuis2021transition}, and find that OM optimization is considerably more efficient. As shown in \fref{fig:free_energy}b, we can also use the transition paths resulting from OM optimization as a natural CV for umbrella sampling \cite{torrie1977nonphysical}, from which we can accurately and efficiently estimate free energy profiles along the transition path. See Appendix \ref{sec:alanine_dipeptide_details} for complete details on model training, OM optimization, and free energy calculations. 
\begin{figure}[t]
    \centering
    \includegraphics[width=1.0\linewidth]{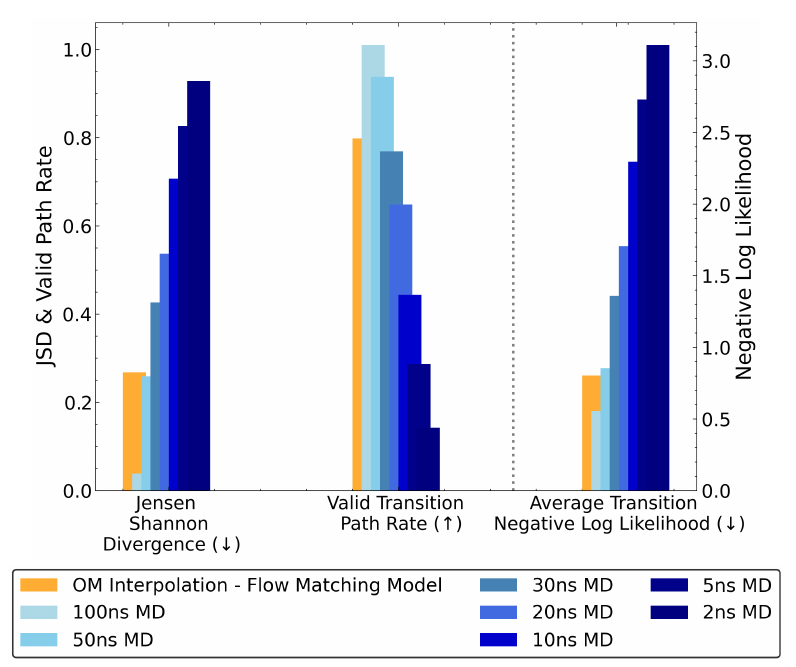}
    \caption{\textbf{OM optimization on unseen tetrapeptide sequences.} OM optimization with a flow matching model yields transition paths which compare strongly with variable-length MD simulations, indicating the generalization potential of our approach. Plotted are the Jensen Shannon Divergence (a measure of distributional dissimilarity) between state distributions visited by the reference and generated paths, the fraction of valid (non-zero probability) paths under the reference MSM, and the negative log likelihood of state transitions under the reference MSM (indicating path realism).}
    \label{fig:tetrapeptides}
    \vspace{-7pt}
\end{figure}
\vspace{-5pt}
\subsection{Fast-folding coarse-grained proteins}
\label{sec:fastfolders}
We next consider proteins exhibiting fast dynamical transitions, for which millisecond-scale, reference MD simulations were performed in \citet{lindorff2011fast}. 
\vspace{-10pt}
\paragraph{Problem setup.} We adopt a coarse-graining (CG) scheme which represents each amino acid with the position of its C$_\alpha$ atom. We utilize the pre-trained diffusion models from \citet{arts2023two}, and we train our own flow matching models. Separate models are trained for each protein. To facilitate analysis and interpretation of results, we divide the conformational space into discrete states and make use of Markov State Models (MSMs) \cite{prinz2011markov, noe2013projected} to obtain state transition probabilities. Similar to \citet{jing2024generative}, we evaluate the quality of transition paths by discretizing them over the MSM states and computing the following metrics (see Appendix \ref{sec:fastfolding_details} for complete details):
\vspace{-7pt}
\begin{enumerate}
    \item \textbf{Transition negative log likelihood.} The negative log likelihood of MSM state transitions under the reference MSM, averaged over all paths with non-zero probability.
    \vspace{-7pt}
    \item \textbf{Fraction of valid paths.} The fraction of paths with non-zero probability under the reference MSM.
    \vspace{-7pt}
    \item \textbf{Jensen-Shannon divergence.} The JSD (distributional dissimilarity) between the distribution of states visited by the generated paths and those sampled from the reference MSM.
\end{enumerate}
We compute these metrics for our generated transition paths, as well as  for $1 - 100 \mu s$ subsets of the reference MD simulations \footnote{We did not consider enhanced sampling baselines for the fast-folding proteins due to their reliance on an energy function, which is generally not available for our level of coarse-graining.}. To compare wall-clock time for generating transition paths, since reference simulations were performed at all-atom resolution and their speeds are unavailable, we follow \citet{arts2023two} and run coarse-grained MD using the generative model's learned score function as an approximate force field.
\vspace{-7pt}
\vspace{-5pt}
\paragraph{Results.} As shown in \fref{fig:fastfolders}a, OM optimization yields diverse transition paths which intuitively pass through high density regions of the free energy landscape, projected onto the two slowest Time Independent Component (TIC) \cite{noe2013projected} axes. We can also vary the physical parameters of the OM action (e.g., the time horizon $T_p$) to obtain paths traversing different regions of phase space (see Appendix \ref{sec:fastfolding_details}). For the BBA protein, the paths robustly sample the transition state ensemble, empirically defined by the level set $\left \{ \xx : 0.45 \leq q(\xx) \leq 0.55 \right \}$ of the committor function (\fref{fig:fastfolders}b). Sampling transition paths of the BBA protein with OM optimization requires considerably less wall-clock time than using the diffusion model's learned score as a coarse-grained force field and performing unbiased MD simulations (\fref{fig:fastfolders}c). Across all proteins and both classes of generative model (diffusion and flow matching), OM optimization yields a higher percentage of valid paths and lower transition negative log likelihood under the reference MSM, compared with unbiased, reference MD simulations of any of the considered lengths up to 100 $\mu$s (\fref{fig:fastfolders}d). The JSD is also lower than any MD simulation length up to 50 $\mu$s, indicating that the sampled paths traverse a similar distribution of MSM states as the reference simulations. See Appendix \ref{sec:fastfolding_details} for comparisons to transition trajectories found in the reference, unbiased MD simulations. 
\vspace{-5pt}
\paragraph{Robustness to sparse data in transition regions.} To simulate the scenario in which transition states are not well-represented in the training data, we retrain diffusion models on datasets from which 99\% of configurations with committor probability between 0.1 and 0.9 are removed. OM optimization is still able to sample plausible transition paths using this data-starved model (see Appendix \ref{sec:fastfolding_details} for more details), suggesting that our approach can be useful even if the underlying data distribution is not an exhaustively sampled Boltzmann distribution.
\vspace{-5pt}
\subsection{Generalization to new tetrapeptides}
\label{sec:tetrapeptides}
As a final evaluation, we consider all-atom tetrapeptide systems, which exhibit interesting dynamics and pose the challenge of generalization to held-out amino acid sequences.
\vspace{-5pt}
\paragraph{Problem setup.} We train a flow matching generative model on approximately 3,000 tetrapeptides simulated in \citet{jing2024generative}, and apply our OM optimization procedure to generate an ensemble of 16 transition paths for each of 100 tetrapeptides \textit{not seen} during training. We use the same MSM-based metrics as in Section \ref{sec:fastfolders} to evaluate the quality of generated transition paths. 

\paragraph{Results.} As shown in \fref{fig:tetrapeptides}, OM optimization achieves MSM metrics which are competitive with MD simulations of 50-100 ns, which are considerably more computationally expensive to generate. This suggests the promise of OM optimization to generate transition paths on atomistic systems not explicitly seen during training. See Appendix \ref{sec:tetra_details} for further details and path visualizations.
\vspace{-7pt}
\section{Conclusion}
We have presented a method to repurpose atomistic generative models for transition path sampling by finding paths over the data manifold which minimize the Onsager-Machlup action under the model's learned score function. Our approach, which requires no TPS-specific training procedure, aligns well with the growing trend of leveraging large-scale, well-tested, general-purpose generative models—a direction already standard in the language and vision communities. 
\vspace{-6pt}
\paragraph{Limitations.} Our approach does not provably sample the full posterior distribution over paths, as in traditional shooting methods and recent ML approaches \cite{du2024doob}. However, we sample diverse paths by exploiting the stochastic generative model encoding and decoding process. Initiating traditional MD or umbrella sampling simulations is another way to explore the potential energy surface around the OM-optimized paths (see Appendix \ref{sec:alanine_dipeptide_details}).
\vspace{-6pt}
\paragraph{Future work.} Incorporating OM optimization into larger generative models trained on more diverse data \cite{lewis2024scalable, jing2024alphafoldmeetsflowmatching} is a natural area for future development. Given the success of large-scale, co-evolutionary modeling of proteins \cite{jumper2021highly}, it would be interesting to investigate the extent to which pre-training generative models on large structural databases enables TPS on unseen systems. More broadly, OM action-minimization could be a powerful framework to generate interpolation paths in a variety of data modalities, including images, videos, and audio.

\newpage
\section*{Impact Statement} This paper presents work whose goal is to advance the field of Machine Learning. There are many potential societal consequences of our work, none which we feel must be specifically highlighted here.
\section*{Acknowledgments}
The authors thank Rasmus Lindrup, Aditya Singh, Muhammad Hasyim, Kranthi Mandadapu, Simon Olsson, Soojung Yang, Lukáš Grajciar, Johannes Dietschreit and David Limmer for helpful discussions that benefited this paper, as well as Bowen Jing for assistance with reproducing the min-flux endpoints for the tetrapeptide evaluations. The authors acknowledge support of this work from the U.S. Department of Energy, Office of Science, Energy Earthshot initiatives as part of the Center for Ionomer-based Water Electrolysis at Lawrence Berkeley National Laboratory under Award Number DE-AC02-05CH11231. This work was also supported by the Toyota Research Institute as part of the Synthesis Advanced Research Challenge.  MP and MŠ were supported by the Czech Science Foundation, project 23-05736S.
\bibliography{references}
\bibliographystyle{icml2025}

\newpage
\appendix
\onecolumn

\title{Theory}

\section{Extended Related Work: Transition Path Sampling}
\label{sec:extended_related_work}
A rich landscape of tools has been developed for TPS, for which we refer to existing surveys for a more exhaustive description \cite{bolhuis2002transition, dellago2002transition, vanden2010transition}. Traditional shooting methods perturb the initial or intermediate states of a known trajectory to generate new trajectories via a Metropolis-Hasting criterion \cite{mullen2015easy, borrero2016avoiding, jung2017transition, bolhuis2021transition}. These often suffer from high rejection rates, correlated samples, and the need for expensive molecular dynamics (MD) simulations during sampling. Another class of methods is based on adding an adjustable biasing potential to enhance the sampling of slow events, which includes umbrella sampling \cite{torrie1977nonphysical}, metadynamics \cite{Laio2002a}, and more advanced techniques such as eABF \cite{Darve2001a}. These approaches require a carefully constructed, low-dimensional mapping of the problem via collective variables (CVs), which can be challenging, particularly when the characterization of the system around the transition state is uncertain. Attempts were made to design CVs with ML methods \cite{Sultan2018AutomatedLearning,rogal2019neural, chen2018molecular,sun2022multitask,Sipka2023}, yet they remain a challenge for many-atom systems. ML approaches have also been used to learn the biasing potential directly, including approaches based on stochastic optimal control \cite{holdijk2024stochastic, yan2022learning}, differentiable simulation \cite{sipka2023differentiable}, reinforcement learning \cite{das2021reinforcement, rose2021reinforcement, singh2023variational, seong2024transitionpathsamplingimproved, liang2023probing}, and $h$-transform learning \cite{singh2023variational, du2024doob}. In all of these approaches, unlimited access to the underlying potential energy and force field are assumed, but samples from the underlying data distribution are not available. As a result, the methods must be retrained from scratch for every new system of interest. Additionally, expensive, simulation-based training procedures are often employed, limited scalability to larger systems. Interpolation-based methods, such as the Nudged Elastic Band (NEB) method \cite{nebmethod} and the spring method \cite{dellago}, introduce springs between images to construct transition paths (spring method) or to directly locate saddle points (NEB). Recent approaches have used neural networks to parameterize transition paths as continuous functions, using NEB-inspired loss functions to optimize the path \cite{ramakrishnan2025implicit, petersen2025pinn}. A significant challenge for these approaches lies in generating physical initial guess paths, which are inherently unknown a priori. These approaches also typically compute a single minimum energy pathway rather than an ensemble of paths, making the estimation of converged transition rates challenging. Among interpolation-like approaches, the Onsager-Machlup (OM) action has also been explored for TPS. Due to the limited availability of automatic differentiation techniques at the time, Laplacian operators were consistently avoided. This restriction limited its application to very low-dimensional problems \cite{vanden2008geometric, 10.1063/1.3372802}, or led to the development of Laplace-free action formulations \cite{Lee2017}. 




\section{Proofs for Score-Related Generative Model Objectives}
\label{sec:proof_diffusion}
For the sake of readability, we replicate proofs showing that both the training objectives for DDPM and flow matching models are equivalent to training against the score of a noised version of the data distribution (or in the case of flow matching, an invertible polynomial transformation of this score). See \citet{vincent2011connection} and \citet{lipman2024flow} for example proofs for DDPM and flow matching respectively.

\subsection{A note on the DDPM reverse process}\label{sec:ddpm_sampling}

The sampling process for a DDPM can be written as the terminal condition $\xx^{(0)}$ of the following process:
\begin{align}
    \xx^{(T_d)} &\sim \mc N(0, I),\\
    \xx^{(i - 1)} &= \frac{1}{\sqrt{1-\beta_\tau}}\left(\xx^{(i)} + \frac{\beta_\tau}{\sqrt{1 - \bar{\alpha}_\tau}} \mb{s}_\theta(\xx^{(i)}, i)\right) + \sqrt{\beta_\tau} z.
\end{align}
While this process is definable as an Euler-Maruyama discretization of an SDE, it is not well suited for optimization over trajectories over the data distribution, since the vector field $\mb{s}_\theta(\xx^{(i)}, i)$ is changing throughout the trajectory, and iterates near the noise $i=T_d$ will not necessarily follow dynamics determined by the data distribution.

\subsection{DDPM and score matching}\label{sec:ddpm_score_proof}

Below is a proof for the equivalence of standard DDPM training to score matching.

\begin{theorem}[DDPM-Score Matching Equivalence]
\label{thm:score_matching_ddpm}
Let $p_{\text{data}}(x_0)$ be the data distribution, and let $x_\tau$ be the noised variable defined through the forward process:
\begin{equation}
x_\tau = \sqrt{\bar{\alpha}_\tau}x_0 + \sqrt{1 - \bar{\alpha}_\tau}\epsilon, \quad \tau \sim \mathrm{Unif}(\{1, \ldots, T_d\}), \quad x_0 \sim p_{\text{data}}, \quad \epsilon \sim \mathcal{N}(0, I),
\end{equation}
where $\bar{\alpha}_\tau \in (0,1)$. Let $p_\tau(x_\tau) = \int p_{\text{data}}(x_0) \mathcal{N}(x_\tau; \sqrt{\bar{\alpha}_\tau}x_0, (1 - \bar{\alpha}_\tau)I) dx_0$ be the marginal distribution of $x_\tau$. Then the DDPM objective, defined as the following:
\begin{equation}\label{eq:ddpm_loss}
\mathcal{L}_\text{DDPM}(\theta) \myeq \mathbb{E}_{\tau, x_0, \epsilon} \left[ \|\epsilon - \epsilon_\theta(x_\tau, \tau)\|_2^2 \right],
\end{equation}
satisfies the following equality:
\begin{equation}
\mathcal{L}_\text{DDPM}(\theta) = \mathbb{E}_{\tau, x_\tau} \left[ (1 - \bar{\alpha}_\tau) \left\| \nabla_{x_\tau} \log p_\tau(x_\tau) - s_\theta(x_\tau, \tau) \right\|_2^2 \right] + C,
\end{equation}
where $s_\theta(x_\tau, \tau) \myeq -\epsilon_\theta(x_\tau, \tau)/\sqrt{1 - \bar{\alpha}_\tau}$, and $C$ is a constant independent of $\theta$.
\end{theorem}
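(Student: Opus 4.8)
The plan is to establish the claimed identity by the standard denoising-score-matching argument of \citet{vincent2011connection}, specialized to the Gaussian forward kernel of DDPM. The key observation is that the conditional distribution $p_\tau(x_\tau \mid x_0) = \mathcal{N}(x_\tau; \sqrt{\bar\alpha_\tau}x_0, (1-\bar\alpha_\tau)I)$ has a score in closed form, namely $\nabla_{x_\tau}\log p_\tau(x_\tau\mid x_0) = -(x_\tau - \sqrt{\bar\alpha_\tau}x_0)/(1-\bar\alpha_\tau) = -\epsilon/\sqrt{1-\bar\alpha_\tau}$, since $x_\tau = \sqrt{\bar\alpha_\tau}x_0 + \sqrt{1-\bar\alpha_\tau}\,\epsilon$. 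Hence $\epsilon = -\sqrt{1-\bar\alpha_\tau}\,\nabla_{x_\tau}\log p_\tau(x_\tau\mid x_0)$, and by the definition $s_\theta \myeq -\epsilon_\theta/\sqrt{1-\bar\alpha_\tau}$ the DDPM integrand becomes $\|\epsilon - \epsilon_\theta\|_2^2 = (1-\bar\alpha_\tau)\,\|\nabla_{x_\tau}\log p_\tau(x_\tau\mid x_0) - s_\theta(x_\tau,\tau)\|_2^2$.

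First I would rewrite $\mathcal{L}_\text{DDPM}(\theta)$ with this substitution, so that for each fixed $\tau$ we are looking at $\mathbb{E}_{x_0,\epsilon}\big[(1-\bar\alpha_\tau)\|\nabla\log p_\tau(x_\tau\mid x_0) - s_\theta(x_\tau,\tau)\|_2^2\big]$, which I can equivalently write as an expectation over the joint $(x_0, x_\tau)$. Then I would expand the square into three terms: the cross term, the $\|s_\theta\|^2$ term, and the $\|\nabla\log p_\tau(x_\tau\mid x_0)\|^2$ term. The last of these depends only on the forward process and not on $\theta$, so it gets absorbed into the constant $C$. The crucial step is the cross term: I need to show $\mathbb{E}_{x_0,x_\tau}\big[\langle \nabla_{x_\tau}\log p_\tau(x_\tau\mid x_0),\, s_\theta(x_\tau,\tau)\rangle\big] = \mathbb{E}_{x_\tau}\big[\langle \nabla_{x_\tau}\log p_\tau(x_\tau),\, s_\theta(x_\tau,\tau)\rangle\big]$. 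This follows from the identity $\mathbb{E}_{x_0\mid x_\tau}[\nabla_{x_\tau}\log p_\tau(x_\tau\mid x_0)] = \nabla_{x_\tau}\log p_\tau(x_\tau)$, which in turn comes from writing $p_\tau(x_\tau) = \int p_{\text{data}}(x_0)p_\tau(x_\tau\mid x_0)\,dx_0$, differentiating under the integral sign, and using $\nabla_{x_\tau}p_\tau(x_\tau\mid x_0) = p_\tau(x_\tau\mid x_0)\nabla_{x_\tau}\log p_\tau(x_\tau\mid x_0)$ together with Bayes' rule $p(x_0\mid x_\tau) = p_{\text{data}}(x_0)p_\tau(x_\tau\mid x_0)/p_\tau(x_\tau)$.

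With the cross term matched, I would then complete the square in the other direction: $\mathbb{E}_{x_\tau}[\|\nabla\log p_\tau(x_\tau) - s_\theta(x_\tau,\tau)\|_2^2]$ expands to $\mathbb{E}_{x_\tau}[\|s_\theta\|^2] - 2\mathbb{E}_{x_\tau}[\langle\nabla\log p_\tau, s_\theta\rangle] + \mathbb{E}_{x_\tau}[\|\nabla\log p_\tau\|^2]$, whose first two terms agree (after multiplying by $1-\bar\alpha_\tau$ and taking the expectation over $\tau$) with what we derived from $\mathcal{L}_\text{DDPM}$, and whose third term is again $\theta$-independent and folds into $C$. Collecting the $\theta$-independent pieces into a single constant $C = \mathbb{E}_{\tau,x_0,x_\tau}[(1-\bar\alpha_\tau)(\|\nabla\log p_\tau(x_\tau\mid x_0)\|^2 - \|\nabla\log p_\tau(x_\tau)\|^2)]$ yields the stated equality. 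The main obstacle is purely technical rather than conceptual: justifying the interchange of differentiation and integration in the step $\nabla_{x_\tau}\int p_{\text{data}}(x_0)p_\tau(x_\tau\mid x_0)dx_0 = \int p_{\text{data}}(x_0)\nabla_{x_\tau}p_\tau(x_\tau\mid x_0)dx_0$, which is standard for Gaussian kernels under mild integrability assumptions on $p_{\text{data}}$ and which I would simply state as a regularity hypothesis; everything else is bookkeeping of Gaussian moments.
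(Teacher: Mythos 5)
Your proposal is correct, and it arrives at the same identity with the same constant $C$, but it takes the classical denoising-score-matching route of Vincent rather than the paper's Tweedie-formula route. Concretely: you substitute $\epsilon = -\sqrt{1-\bar\alpha_\tau}\,\nabla_{x_\tau}\log p_\tau(x_\tau\mid x_0)$ to recast the DDPM integrand in terms of the \emph{conditional} score, expand the square, and match cross terms using the identity $\mathbb{E}_{x_0\mid x_\tau}[\nabla_{x_\tau}\log p_\tau(x_\tau\mid x_0)] = \nabla_{x_\tau}\log p_\tau(x_\tau)$, which you prove directly via Bayes' rule and differentiation under the integral. The paper instead conditions on $x_\tau$, applies the bias--variance (orthogonality) decomposition $\mathbb{E}[\|\epsilon-c\|^2\mid x_\tau] = \|\mathbb{E}[\epsilon\mid x_\tau]-c\|^2 + \mathrm{Var}(\epsilon\mid x_\tau)$, and identifies $\mathbb{E}[\epsilon\mid x_\tau]$ by invoking Tweedie's formula as a known result. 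These are two packagings of one argument --- your integral-swap identity is exactly what underlies Tweedie's formula for Gaussian kernels, and your cross-term matching is the expanded form of the paper's projection step --- but your version is more self-contained (it proves the key lemma rather than citing it) and makes the constant $C = \mathbb{E}_{\tau,x_0,x_\tau}\bigl[(1-\bar\alpha_\tau)\bigl(\|\nabla\log p_\tau(x_\tau\mid x_0)\|^2 - \|\nabla\log p_\tau(x_\tau)\|^2\bigr)\bigr]$ explicit, whereas the paper's route is shorter and identifies $C$ as a conditional variance. You are also right that the only technical point in either version is the interchange of $\nabla_{x_\tau}$ with the integral over $x_0$, which is harmless for Gaussian kernels; the paper implicitly assumes the same regularity by using Tweedie's formula.
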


\begin{proof}
We begin with the DDPM training objective:
\begin{equation}
\mathcal{L}_\text{DDPM}(\theta) = \mathbb{E}_{\tau, x_0, \epsilon} \left[ \|\epsilon - \epsilon_\theta(x_\tau, \tau)\|_2^2 \right].
\end{equation}

The core of the desired result is Tweedie's formula, which relates Gaussian-based denoising to the score of the noised distribution. For any random variable $z$ generated as $z = \mu + \sigma \eta$ where $\mu$ is an arbitrary random vector, $\eta \sim \mathcal{N}(0, I)$, and $\sigma > 0$, Tweedie's formula gives the following posterior expectation:
\begin{equation}
\mathbb{E}[\mu | z] = z + \sigma^2 \nabla_z \log p(z),
\end{equation}
where $p(z) = \int \mathcal{N}(z; \mu, \sigma^2 I) p(\mu)d\mu$ is the full marginal distribution of $z$. In the forward process, $x_\tau$ is generated via $x_\tau = \sqrt{\bar{\alpha}_\tau}x_0 + \sqrt{1 - \bar{\alpha}_\tau}\epsilon$, which corresponds to:
\begin{equation}
\mu = \sqrt{\bar{\alpha}_\tau}x_0, \quad \sigma = \sqrt{1 - \bar{\alpha}_\tau}, \quad z = x_\tau, \quad \eta = \epsilon.
\end{equation}
Here, $\mu$ is a random variable (dependent on $x_0$), not a fixed parameter. Applying Tweedie's formula to the marginal distribution $p_\tau(x_\tau)$, we obtain:
\begin{equation}
\mathbb{E}\left[\sqrt{\bar{\alpha}_\tau}x_0 \,\big|\, x_\tau\right] = x_\tau + (1 - \bar{\alpha}_\tau)\nabla_{x_\tau} \log p_\tau(x_\tau).
\end{equation}
Dividing through by $\sqrt{\bar{\alpha}_\tau}$ gets the posterior of the original sample $x_0$:
\begin{equation}
\mathbb{E}[x_0 | x_\tau] = \frac{x_\tau}{\sqrt{\bar{\alpha}_\tau}} + \frac{1 - \bar{\alpha}_\tau}{\sqrt{\bar{\alpha}_\tau}} \nabla_{x_\tau} \log p_\tau(x_\tau).
\end{equation}

From the forward process definition, we rewrite in terms of $\epsilon$:
\begin{equation}
\epsilon = \frac{x_\tau - \sqrt{\bar{\alpha}_\tau}x_0}{\sqrt{1 - \bar{\alpha}_\tau}},
\end{equation}
and take conditional expectations given $x_\tau$ to get the following:
\begin{align}
    \mathbb{E}[\epsilon | x_\tau] &= \frac{x_\tau - \sqrt{\bar{\alpha}_\tau}\mathbb{E}[x_0 | x_\tau]}{\sqrt{1 - \bar{\alpha}_\tau}},\\
    &= \frac{x_\tau - \sqrt{\bar{\alpha}_\tau}\left( \frac{x_\tau}{\sqrt{\bar{\alpha}_\tau}} + \frac{1 - \bar{\alpha}_\tau}{\sqrt{\bar{\alpha}_\tau}} \nabla_{x_\tau} \log p_\tau(x_\tau) \right)}{\sqrt{1 - \bar{\alpha}_\tau}},\\
    &= \frac{x_\tau - x_\tau - (1 - \bar{\alpha}_\tau)\nabla_{x_\tau} \log p_\tau(x_\tau)}{\sqrt{1 - \bar{\alpha}_\tau}},\\
    &= -\sqrt{1 - \bar{\alpha}_\tau} \nabla_{x_\tau} \log p_\tau(x_\tau).
\end{align}

Using the law of total expectation, we can expand the DDPM loss conditioned on $x_\tau, \tau$:
\begin{equation}
\mathcal{L}_\text{DDPM}(\theta) = \mathbb{E}_{\tau, x_\tau} \left[\mathbb{E}_{\epsilon} \left[ \|\epsilon - \epsilon_\theta(x_\tau, \tau)\|_2^2 \mid x_\tau, \tau \right] \right].
\end{equation}
For any random vector $\xi$, $\mathbb{E}[\|\xi - c\|^2]$ is minimized when $c = \mathbb{E}[\xi]$. We can then make the following bias-variance decomposition:
\begin{equation}
\mathbb{E}_{\epsilon} \left[ \|\epsilon - \epsilon_\theta(x_\tau, \tau)\|_2^2 \mid x_\tau, \tau \right] = \|\mathbb{E}[\epsilon \mid x_\tau, \tau] - \epsilon_\theta(x_\tau, \tau)\|_2^2 + \mathbb{E}\left[\|\epsilon - \mathbb{E}[\epsilon \mid x_\tau, \tau]\|_2^2 \mid x_\tau, \tau\right].
\end{equation}
Since the variance term is independent of $\theta$, substituting $\mathbb{E}[\epsilon\mid x_\tau, \tau]$ and factoring out $-\sqrt{1 - \bar{\alpha}_\tau}$ leads to:
\begin{equation}
\mathcal{L}_\text{DDPM}(\theta) = \mathbb{E}_{\tau, x_\tau} \left[ (1 - \bar{\alpha}_\tau) \left\| \nabla_{x_\tau}\log p_\tau(x_\tau) - \left( -\frac{\epsilon_\theta(x_\tau, \tau)}{\sqrt{1 - \bar{\alpha}_\tau}} \right) \right\|_2^2 \right] + C.
\end{equation}
By defining $s_\theta(x_\tau, \tau) \coloneqq -\epsilon_\theta(x_\tau, \tau)/\sqrt{1 - \bar{\alpha}_\tau}$, we obtain the score matching objective.
\end{proof}

\subsection{Flow matching and score matching}\label{sec:proof_flowmatch_score}

We now provide proof for the flow matching setting, showing that the training objective is also similar to a score matching objective, with a simple transformation between the flow matching targets and the scores.

\begin{theorem}[Flow Matching -- Score Matching Conversion]
\label{thm:score_matching_fm}
Let $p_{\text{data}}(x_0)$ be the data distribution, and let $x_\tau$ be the noised variable defined through the interpolation process:
\begin{equation}
x_\tau = \alpha_\tau x_1 +\sigma_\tau x_0, \quad \tau \sim \mathrm{Unif}([0,1]), \quad x_1 \sim p_{\text{data}}, \quad x_0 \sim \mathcal{N}(0, I),
\end{equation}
where $\alpha_\tau, \sigma_\tau : [0,1] \to [0,1]$ are strictly increasing and decreasing functions respectively that satisfy $\alpha_0 = \sigma_1 = 0$, $\alpha_1 = \sigma_0 = 1$. Let $p_\tau(x_\tau) = \int p_{\text{data}}(x_0) \mathcal{N}(x_\tau; \alpha_\tau x_1, \sigma_\tau^2I) dx_0$ be the marginal distribution of $x_\tau$. Then the flow matching objective, defined as the following:
\begin{equation}
\mathcal{L}_\text{FM}(\theta) \myeq \mathbb{E}_{\tau, x_0, x_1} \left[ \|u_\theta(x_\tau, \tau) - v_\tau (x_0, x_1)\|_2^2 \right],
\end{equation}
where $v_\tau(x_0, x_1) = \dot \alpha_\tau x_1 + \dot \sigma_\tau x_0$ the instance-wise curve velocity. The flow matching objective then satisfies the following equalities:
\begin{enumerate}
    \item \label{enum:fm_pf_pt1} We can equivalently train against targets of the unconditional velocities  $u_\tau(\xx) = \E_{\xx_0 \sim p_0, \xx_1 \sim p_1}\left[ \dot{\alpha_t} \xx_1 + \dot{\sigma_t} \xx_0 \mid \xx = \alpha_t \xx_1 + \sigma_t \xx_0\right]$:
    \begin{equation}
        \mathcal{L}_\text{FM}(\theta) = \mathbb{E}_{\tau, x_0, x_1} \left[ \|u_\theta(x_\tau, \tau) - u_\tau (x_\tau)\|_2^2 \right] + C,
    \end{equation}
    where $C$ is some constant independent of $\theta$, and
    \item \label{enum:fm_pf_pt2} The equality $\nabla_x \log p_\tau (x) = \frac{\dot \alpha_\tau}{\alpha_\tau}x - \frac{\dot \sigma_\tau \sigma_\tau \alpha_\tau - \dot \alpha_\tau \sigma_\tau^2}{\alpha_\tau} u_\tau(x)$ holds, allowing us to write the flow matching objective in terms of the score:
    \begin{equation}
        \mathcal{L}_\text{FM}(\theta) = \mathbb{E}_{\tau, x_0, x_1} \left[ \|u_\theta(x_\tau, \tau) -\left( \frac{\dot \alpha_\tau}{\alpha_\tau} x_\tau - \frac{\dot \sigma_\tau \sigma_\tau \alpha_\tau - \dot \alpha_\tau \sigma_\tau^2}{\alpha_\tau}\nabla_{x_\tau} \log p_\tau (x_\tau) \right)\|_2^2 \right] + C.
    \end{equation}
\end{enumerate}
\end{theorem}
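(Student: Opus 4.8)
The plan is to establish the two claims in turn, both by the same conditional-expectation (bias--variance) argument used in the proof of \Cref{thm:score_matching_ddpm}. For part~\ref{enum:fm_pf_pt1}, I would fix the conditioning pair $(x_\tau,\tau)$ in the inner expectation: the network output $u_\theta(x_\tau,\tau)$ is a deterministic function of $(x_\tau,\tau)$, whereas the instance-wise velocity $v_\tau(x_0,x_1) = \dot\alpha_\tau x_1 + \dot\sigma_\tau x_0$ is random given $(x_\tau,\tau)$. Applying the decomposition $\mathbb{E}\left[\|\xi - c\|_2^2\right] = \|\mathbb{E}[\xi] - c\|_2^2 + \mathbb{E}\left[\|\xi - \mathbb{E}[\xi]\|_2^2\right]$ with $\xi = v_\tau(x_0,x_1)$ and $c = u_\theta(x_\tau,\tau)$, and noting that $\mathbb{E}\left[v_\tau(x_0,x_1)\mid x_\tau,\tau\right] = u_\tau(x_\tau)$ is exactly the definition of $u_\tau$, the cross term vanishes and the residual variance term is independent of $\theta$. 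Absorbing that variance into $C$ and taking the outer expectation over $(\tau,x_0,x_1)$ (tower property) yields $\mathcal{L}_\text{FM}(\theta) = \mathbb{E}\left[\|u_\theta(x_\tau,\tau) - u_\tau(x_\tau)\|_2^2\right] + C$, which is the first identity.

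For part~\ref{enum:fm_pf_pt2}, the key tool is Tweedie's formula applied to the \emph{marginal} $p_\tau$. Conditioned on the data sample $x_1$, the interpolant obeys $x_\tau \mid x_1 \sim \mathcal{N}(\alpha_\tau x_1, \sigma_\tau^2 I)$ because $x_0$ is the Gaussian component, so Tweedie's formula gives $\mathbb{E}\left[\alpha_\tau x_1 \mid x_\tau\right] = x_\tau + \sigma_\tau^2\,\nabla_{x_\tau}\log p_\tau(x_\tau)$, hence $\mathbb{E}\left[x_1 \mid x_\tau\right] = \alpha_\tau^{-1}\left(x_\tau + \sigma_\tau^2\,\nabla\log p_\tau(x_\tau)\right)$. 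Using $x_0 = \sigma_\tau^{-1}(x_\tau - \alpha_\tau x_1)$ and taking conditional expectations gives $\mathbb{E}\left[x_0 \mid x_\tau\right] = -\sigma_\tau\,\nabla\log p_\tau(x_\tau)$. Substituting both into $u_\tau(x_\tau) = \dot\alpha_\tau\,\mathbb{E}\left[x_1 \mid x_\tau\right] + \dot\sigma_\tau\,\mathbb{E}\left[x_0 \mid x_\tau\right]$ and collecting the term linear in $x_\tau$ and the term linear in $\nabla\log p_\tau$ produces the stated relation between $u_\tau$ and $\nabla\log p_\tau$; since $\alpha_\tau$ is strictly increasing and $\sigma_\tau$ strictly decreasing, the coefficient $\dot\sigma_\tau\sigma_\tau\alpha_\tau - \dot\alpha_\tau\sigma_\tau^2$ is nonzero, so the relation inverts to \eqref{eq:flow_force}. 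Plugging this expression for the flow-matching target $u_\tau(x_\tau)$ into the objective from part~\ref{enum:fm_pf_pt1} immediately gives the final displayed equation.

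I do not expect a substantive obstacle; the computation is routine once the bookkeeping is set up. The two points needing care are: (i) Tweedie's formula must be applied to the marginal density $p_\tau$, not to the conditional $p(x_\tau \mid x_1)$, exactly as in \Cref{thm:score_matching_ddpm}; and (ii) separating the linear-in-$x_\tau$ and linear-in-$\nabla\log p_\tau$ pieces requires carefully tracking the coefficients $\alpha_\tau,\sigma_\tau,\dot\alpha_\tau,\dot\sigma_\tau$ so that the exact constants of \eqref{eq:flow_force} come out. It is also worth stating explicitly, as in the DDPM case, that the cross term in part~\ref{enum:fm_pf_pt1} vanishes precisely because $u_\theta$ is measurable with respect to $(x_\tau,\tau)$, which holds automatically for any parametric velocity field.
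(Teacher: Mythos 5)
Your proposal is correct and follows essentially the same route as the paper's proof: part~\ref{enum:fm_pf_pt1} is the same conditional-expectation argument (your bias--variance identity is equivalent to the paper's explicit telescoping, with the cross term vanishing because $\mathbb{E}\left[v_\tau(x_0,x_1)\mid x_\tau,\tau\right]=u_\tau(x_\tau)$), and part~\ref{enum:fm_pf_pt2} applies Tweedie's formula to the marginal $p_\tau$ in exactly the same way, differing only in that you compute $\mathbb{E}\left[x_0\mid x_\tau\right]=-\sigma_\tau\nabla\log p_\tau(x_\tau)$ directly rather than substituting $x_0=\sigma_\tau^{-1}(x_\tau-\alpha_\tau x_1)$ before collecting terms. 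Your added remarks on measurability of $u_\theta$ and nonvanishing of $\dot\sigma_\tau\sigma_\tau\alpha_\tau-\dot\alpha_\tau\sigma_\tau^2$ are correct and harmless.
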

\begin{proof}
For part \ref{enum:fm_pf_pt1}, we can expand the flow matching loss integrand by telescoping with respect to $u_\tau(x_\tau)$:
\begin{align}
    \|u_\theta(x_\tau, \tau) -  v_\tau(x_0, x_1)\|^2 &= \|u_\theta(x_\tau, \tau)  - u_\tau(x_\tau) + u_\tau(x_\tau) - v_\tau(x_0, x_1)\|^2,\\
    &= \|u_\theta(x_\tau, \tau) - u_\tau(x_\tau)\|^2 + 2\inner{u_\theta(x_\tau, \tau)  - u_\tau(x_\tau)}{ u_\tau(x_\tau) -  v_\tau(x_0, x_1)} + \|u_\tau(x_\tau) -  v_\tau(x_0, x_1)\|^2.
\end{align}
Since $\mathbb{E}_{\tau, x_0, x_1}\|u_\tau(x_\tau) -  v_\tau(x_0, x_1)\|^2$ is constant with respect to $\theta$, it suffices to show the following:
\begin{equation}
    \mathbb{E}_{\tau, x_0, x_1}\inner{u_\theta(x_\tau, \tau)  - u_\tau(x_\tau)}{ u_\tau(x_\tau) -  v_\tau(x_0, x_1)} = 0.
\end{equation}
Note that by definition we have the following relation between $u$ and $v$:
\begin{equation}
\E_{x_0, x_1} \left[ v_\tau(x_0, x_1) \mid x_\tau, \tau\right] = \E_{x_0, x_1} \left[ \dot \alpha_\tau x_1 + \dot \sigma_\tau x_0 \mid x_\tau, \tau\right] = u_\tau(x_\tau).
\end{equation}
We can then write the following by expanding the expectation using the tower rule:
\begin{align}
    \mathbb{E}_{\tau, x_0, x_1}\inner{u_\theta(x_\tau, \tau)  - u_\tau(x_\tau)}{ u_\tau(x_\tau) -  v_\tau(x_0, x_1)} &= \mathbb{E}_{x_\tau, \tau} \left[\E_{x_0, x_1}\left[\inner{u_\theta(x_\tau, \tau)  - u_\tau(x_\tau)}{ u_\tau(x_\tau) -  v_\tau(x_0, x_1)} \mid x_\tau, \tau\right]\right],\\
    &= \E_{x_\tau, \tau}\left[\inner{u_\theta(x_\tau, \tau)  - u_\tau(x_\tau)}{ u_\tau(x_\tau) -   \mathbb{E}_{x_0, x_1} \left[v_\tau(x_0, x_1)\mid x_\tau, \tau\right]} \right],\\
    &= \E_{x_\tau,\tau}\left[\inner{u_\theta(x_\tau, \tau)  - u_\tau(x_\tau)}{ \mb 0} \right],\\
    &= 0.
\end{align}
This concludes part \ref{enum:fm_pf_pt1}. Note that the interpolations $x_\tau = \alpha_\tau x_1 + \sigma_\tau x_0$ also follow proper form for Tweedie's formula, allowing us to write the following:
\begin{align}
    \E[\alpha_\tau x_1 | x_\tau, \tau] &= x_\tau + \sigma_\tau^2 \nabla_{x} \log p_\tau (x_\tau),\\
    \E[ x_1 | x_\tau, \tau] &=  \frac{1}{\alpha_\tau}x_\tau + \frac{\sigma_\tau^2}{\alpha_\tau} \nabla_{x} \log p_\tau (x_\tau).
\end{align}
Noting that $x_0 = \frac{x_\tau - \alpha_\tau x_1}{\sigma_\tau}$, we can write $u_\tau$ as the following:
\begin{align}
    u_\tau(x) &= \E_{x_0, x_1}\left[ \dot{\alpha_t} x_1 + \dot{\sigma_t} x_0 \mid x_\tau = x \tau\right],\\
    &=  \dot{\alpha_t} \E_{x_0, x_1}\left[x_1 \mid x_\tau = x, \tau\right] + \dot{\sigma_t} \E_{x_0, x_1}\left[x_0 \mid x_\tau = x, \tau\right] ,\\
    &= \frac{\dot \sigma_\tau}{\sigma_\tau} x + \left(\dot \alpha_\tau - \frac{\alpha_\tau \dot \sigma_\tau}{\sigma_\tau}\right)\E_{x_0, x_1}\left[x_1 \mid x_\tau = x, \tau\right],\\
    &= \frac{\dot \sigma_\tau}{\sigma_\tau} x + \left(\dot \alpha_\tau - \frac{\alpha_\tau \dot \sigma_\tau}{\sigma_\tau}\right)\left(\frac{1}{\alpha_\tau}x + \frac{\sigma_\tau^2}{\alpha_\tau} \nabla_{x} \log p_\tau (x)\right),\\
    &= \frac{\dot \alpha_\tau}{\alpha_\tau} x - \frac{\dot \sigma_\tau \sigma_\tau \alpha_\tau - \dot \alpha_\tau \sigma_\tau^2}{\alpha_\tau}\nabla_{x} \log p_\tau (x).
\end{align}
This proves the desired relation between $u_\tau$ and $\nabla \log p_\tau$, and plugging into part \ref{enum:fm_pf_pt1} achieves the desired flow matching loss equality.

\end{proof}
\section{Derivation of the Onsager-Machlup action} 
\label{appendix:OM_action}
\subsection{Overdamped Langevin dynamics}
We start the description of our system by formulating the well-known Hamilton equations. The variables we are solving are $\xx_i(t): \R^+ \to \R^d$ and the corresponding momenta $\pp_i(t): \R^+ \to \R^d$ with a constant vector $m_i$ representing the mass of every particle $i$ in the system. Hamiltonian equations are formulated as follows
\begin{equation} \label{eq:md_standard}
\begin{split}
   \dot{\xx}_i(t) &= \frac{\pp_i(t)}{m_i}, \\
   \dot{\pp}_i(t) &= -\frac{\partial U(\xx(t))}{\partial \xx_i}.
\end{split}
\end{equation}
While these equations maintain energy and contain no representation of temperature, a modified SDE, with the term $\mb{W}(t)$ representing a Wiener process and a damping constant $\gamma$:
\begin{equation} \label{eq:full_langevin}
\begin{split}
   \dot{\xx}_i(t) &= \frac{{\pp_i(t)}}{m_i}, \\
   \dot{\pp}_i(t) &= -\frac{\partial U(\xx(t))}{\partial \xx_i} - \gamma \frac{\pp_i(t)}{m_i} + \sqrt{2 \gamma k_B T} \odv{\mb{W}(t)}{t},
\end{split}
\end{equation}
or equivalently in one second order equation:
\begin{equation}
    m_i \Ddot{\xx}_i(t) = -\frac{\partial U(\xx(t))}{\partial \xx_i} - \gamma m_i \dot{\xx}_i + \sqrt{2 m_i \gamma k_B T} \odv{\mb{W}(t)}{t},
\end{equation}
can now represent a system that experiences thermal fluctuation. The above form with the symbol $\odv{\mb{W}(t)}{t}$ highlights physical significance, but one can multiply through by $\odif{t}$ to obtain a more standard SDE form. Although the original Hamiltonian system is trapped in an energy well forever, the one guided by Langevin dynamics may overcome barriers between wells in finite time.

A question then arises. Of all the possible paths of fixed physical parameters and time that connect two minima, which is the most probable? How do we calculate probabilities and penalize high energy regions or paths that are making too large steps? The answer is provided by Onsager and Machlup in their works \cite{OMIbasic, OMIIKinetic}. The second reference handles the full equation \eqref{eq:full_langevin}, while the first is a reduction to a so-called overdamped state where the term $\Ddot{\xx}(t)$ can be neglected. After introduction of two auxiliary vector quantities $\zeta_i = m_i \gamma$ and $D_i = \frac{k_b T}{\zeta_i}$ we get the form
\begin{equation}
    \odif{\xx}_i = - \frac{1}{\zeta_i} \frac{\partial U(\xx(t))}{\partial \xx_i}\odif{t} + \sqrt{2 D_i} \odif{\mb{W}(t)}.
\end{equation}
or equally just with $\mb{F}(\xx(t)) = - \frac{\partial U(\xx(t))}{\partial \xx_i}$
\begin{equation}
    \odif{\xx}_i = \frac{1}{\zeta_i} \mb{F}(\xx(t))\odif{t} + \sqrt{2 D_i} \odif{\mb{W}(t)}.
\end{equation}
Further, we will follow a more general setting of the Langevin equation consistent with \eqref{eq:overdamped_langevin_main}. To recall:
\begin{equation}
    \label{eq:overdamped_langevin_apendix}
    \odif{\xx} = \frac{1}{\zeta}\mathbf{\Phi(x)}\odif{t} + \sqrt{2 D} \odif{\mb{W}},
\end{equation}
and
\begin{align}
    \phi(\xx) &:= U(\xx) \\
    \mathbf{\Phi}(\xx) &:= \mb{F}(\xx) = -\nabla U(\xx)
\end{align}
\subsection{Most probable path under Langevin dynamics}
Considering a single particle (for more particle systems see e.g. \cite{PhysRevResearch.2.023407}), since \eqref{eq:overdamped_langevin_main} is a stochastic differential equation, we can also write a partial differential equation for the probability density of the particle guided by these equations. In this case it is a well-known Fokker-Planck equation (note $
\frac{\partial}{\partial \xx}$ of a vector will be understood as a divergence operator to save space)
\begin{equation}\label{eq:fokker_planck}
    \frac{\partial P}{\partial t} = - \frac{\partial \left( \frac{\mb{\Phi}}{\zeta} P \right)}{\partial \xx} + \frac{ \partial }{\partial \xx} \left(D \frac{\partial P}{\partial \xx} \right).
\end{equation}

As we will consider only potential forces in this work, let us denote $\mb{\Phi}(\xx) = - \frac{\partial \phi(\xx)}{\partial \xx}$.
Now we will split the derivation into two parts. 

\paragraph{1. $\mb{\Phi} = 0:$} \

Let us consider the solution in the following form:
\begin{equation}
\label{eq:FPSol}
    P(\xx, t \mid \xx_0) = \left( 4 \pi D t \right)^{-\frac{3}{2}} e^{\frac{- (\xx - \xx_0)^2}{4 D t}}.
\end{equation}
Then for the sequence of points in space and time $(\xx^1, t^1), (\xx^2, t^2), \dots (\xx^N, t^N)$ we can write the following probability:
\begin{equation}
    P(\xx^1, t^1 \mid \xx^2, t^2 \mid \dots \mid \xx^N, t^N) = \prod_{j=1}^{N} P (\xx^j, t^j - t^{j-1} \mid \xx_0).
\end{equation}
Let us denote $t^j - t^{j-1} = \epsilon$ as we pass through a continuum limit in time. The probability can be rewritten by plugging in a solution \eqref{eq:FPSol} into
\begin{equation}
    \prod_{j=1}^{N} P (\xx^j, t^j - t^{j-1} \mid \xx_0) = \left( 4 \pi D \epsilon \right)^{-\frac{3}{2} N} \exp \left( -\frac{1}{4 D \epsilon} \sum_{j=1}^{N} (\xx_j - \xx_{j-1})^2 \right).
\end{equation}
To make sure we can pass into the limit let us rewrite
\begin{equation}
    \epsilon^{-\frac{3}{2}} = e^{-\frac{3}{2} \ln\epsilon}.
\end{equation}
We now focus on the argument of the $\exp$ function. We can modify it to the form 
\begin{equation}
    \frac{1}{4D} \sum_{j=1}^{N} \left(\frac{\xx_j-\xx_{j-1}}{\epsilon} \right)^2 \epsilon.
\end{equation}
By passing into the limit $N \to \infty$ and realizing that epsilon can be rewritten by its definition to $\epsilon = \frac{t}{N}$, we get the following integral form:
\begin{equation}
    \frac{1}{4D} \int_{0}^{t} \left(\dot{\xx} \right)^2 dt.
\end{equation}
Note however, using the identity $a^{b} = e^{b \ln a}$, the first part of the product goes to infinity:
\begin{equation}
    \lim_{N \to \infty} \left( 4 \pi D \frac{t}{N} \right)^{-\frac{3}{2}N} = \infty,
\end{equation}
evaluation of this limit directly would be too hasty. One must consider the probability derived in the broader context of integration across the path. In that case, the constant will serve to normalize the probability. The fact that it does not depend on $\xx$ also means that the probability of the path does not, relative to other paths, depend on this prefactor, and only the exponential part is important. To find out more about precise mathematical justifications, we refer the reader to \cite{gelfand-yaglom}. We shall denote the constant before exponential as $C$ from now on because, as it is not dependent on $\xx$, it will not influence our calculations. The final probability of a path is then given as follows:
\begin{equation}
    P(\xx,t) = C \exp \left( - \frac{1}{4D} \int_0^t (\dot{\xx}(s))^2 ds \right).
\end{equation}
To maximize the likelihood of the path we clearly need to minimize the action
\begin{equation}
    S_0(\xx(t)) = \frac{1}{4D} \int_0^t (\dot{\xx}(s))^2 ds.
\end{equation}
Intuitively, the most probable path under no drift is the one that does not move from it's origin. The longer the trajectory, the less probable it is.
\paragraph{2. $\mb{\Phi} \neq 0$:} \

We will recall the assumption $\mb{\Phi} = - \nabla \phi(\xx)$ and shall use a transformation
\begin{equation}
\label{eq:useful_transform}
    P(\xx,t \mid \xx_0) = G(\xx,t, \mid \xx_0) \exp \left(\frac{1}{2D \zeta} \int_{\xx(0)}^{\xx(t)} \mb{\Phi}(\es) d\es \right),
\end{equation}
where from the properties of a potential function we can evaluate the integral to
\begin{equation}
    - \overline{\phi}(\xx) \myeq \frac{1}{2D \zeta}\int_{\xx(0)}^{\xx(t)} \mb{\Phi}(\es) d\es = \frac{1}{2D \zeta} \int_{\xx(0)}^{\xx(t)} \mb{\Phi}(\es) d\es = \frac{1}{2D \zeta} \left( -\phi(\xx) + \phi(\xx_0) \right).
\end{equation}
So for clarity:
\begin{align}
    P(\xx,t) &= G(\xx,t) e^{-\overline{\phi}(\xx)}, \\
    G(\xx,t) &= P(\xx,t) e^{\overline{\phi}(\xx)}, \\
    \nabla \overline{\phi}(\xx) &= \frac{1}{2D \zeta} \nabla \phi(\xx).
\end{align}
We then plug this transformed function into \eqref{eq:fokker_planck}.
We will now derive the equation that $G(\xx,t)$ has to fulfill. Let us evaluate left-hand side of the \eqref{eq:fokker_planck}
\begin{equation}
    \frac{\partial P(\xx, t)}{\partial t} = \frac{\partial G(\xx,t)}{\partial t} e^{-\overline{\phi}(\xx)}.
\end{equation}
For the right-hand side lets evaluate first the term:
    \begin{align}
    \begin{split}
    \frac{\partial \left( - \frac{1}{\zeta}\derxx{\phi(\xx)} P(\xx,t) \right)}{\partial \xx}  &= 
    -  P(\xx,t) \frac{1}{\zeta} \dderxx{\phi(\xx)} - \frac{1}{\zeta}\frac{\partial P(\xx,t)}{\partial \xx} \derxx{\phi(\xx)} ,\\
    &= - 2D P(\xx,t) \dderxx{\overline{\phi}} -2D \frac{\partial P(\xx,t)}{\partial \xx} \derxx{\overline{\phi}},\\
    &= -2D G e^{-\overline{\phi}(\xx)} \dderxx{\overline{\phi}} - 2D \derxx{G} e^{-\overline{\phi}(\xx)} \derxx{\overline{\phi}} + 2D P \left( \derxx{\overline{\phi}} \right)^2.
    \end{split}
\end{align}
While the other term can be written as follows:
\begin{equation}
    \begin{split}
        \derxx{P(\xx,t)} &= \derxx{G(\xx,t)} e^{-\overline{\phi}} - G(\xx,t) e^{-\overline{\phi}} \derxx{\overline{\phi}}, \\ &= \derxx{G(\xx,t)} e^{-\overline{\phi}} - P(\xx,t) \derxx{\overline{\phi}}.
    \end{split}
\end{equation}
The second derivative then with function arguments omitted for brevity, yet remaining the same
\begin{equation}
\begin{split}
    D\frac{\partial^2 P}{\partial \xx^2} &= D\dderxx{G} e^{-\overline{\phi}} - D\derxx{G} e^{-\overline{\phi}} \derxx{\overline{\phi}} - D\derxx{P} \derxx{\overline{\phi}}
    - D P \dderxx{\overline{\phi}}, \\ &= D\dderxx{G} e^{-\overline{\phi}} - 2D \derxx{G} e^{-\overline{\phi}} \derxx{\overline{\phi}} + D P \left(\derxx{\overline{\phi}}\right)^2
    - D G e^{-\overline{\phi}} \dderxx{\overline{\phi}}.
\end{split}
\end{equation}
After subtracting the terms on the right-hand side, we get the following:
\begin{equation}
    \begin{split}
        \frac{\partial G}{\partial t} e^{-\overline{\phi}} = D\dderxx{G} e^{-\overline{\phi}} - D G e^{-\overline{\phi}} \left(\derxx{\overline{\phi}}\right)^2 + D G e^{-\overline{\phi}} \dderxx{\overline{\phi}}.
    \end{split}
\end{equation}
Or written nicely after exponential cancels and we return to $\phi(\xx)$ from $\overline{\phi}$
\begin{equation}
    \frac{\partial G(\xx,t)}{\partial t} = D \dderxx{G(\xx,t)} - G(\xx,t) \left( \frac{1}{4D} \left(\frac{1}{\zeta}\derxx{\phi(\xx)} \right)^2 - \frac{1}{2 \zeta} \dderxx{\phi(\xx)} \right).
\end{equation}
This is a well studied diffusion-reaction equation
\begin{equation}
    \frac{\partial u(\xx,t)}{\partial t} = D \dderxx{u(\xx,t)} - k u(\xx,t).
\end{equation}
Notice for $\phi(\xx) \equiv 0$ we already solved this equation as it is identical to Fokker-Planck where $F=0$. Let us call this solution $u_0$. Another observation is that for this equation we can formulate a solution in the form
\begin{equation}
\begin{split}
    u(\xx,t) = u_0(\xx,t) e^{-\int_0^t k(s) ds}, \\
    u_0(\xx, t) = C e^{\frac{-( \xx - \xx_0)^2}{4 D t}},
\end{split}
\end{equation}
where $C$ is some arbitrary normalization constant as in the previous solution: 
\begin{equation}
    G(\xx,t) = C \exp \left( \frac{-( \xx - \xx_0)^2}{4 D \epsilon} -\int_{\xx(0)}^{\xx(t)}  k(\es) d\es \right),
\end{equation}
and the original $P(\xx,t)$ using \eqref{eq:useful_transform}:
\begin{equation}
    P(\xx,t) = C \exp \left( \frac{- ( \xx - \xx_0)^2}{4 D t} +\int_{\xx(0)}^{\xx(t)} \left(- k(\xx(\es))+ \frac{1}{2D \zeta} \mb{\Phi}(\es) \right) d\es \right).
\end{equation}
Now we repeat the same multiplication of probabilities for small time increments. However, this time, the situation is easier as integrals would simply extend in the sum. Therefore the only limit would be in the first term exactly as done before. The final probability of the path is then as follows:
\begin{equation}
    P(\xx, t) = C  \exp \Big[- \frac{1}{4D} \int_0^t  (\dot{\xx})^2 + \left(\frac{1}{\zeta} \frac{\partial \phi}{\partial \xx} \right)^2 - \frac{2D}{\zeta}\frac{\partial^2 \phi}{\partial \xx^2} - 2 \mb{\Phi} d\es \Big].
\end{equation}
The negative argument of the exponential will again be an action to minimize:
\begin{equation}
    S(\xx(t)) = \frac{1}{4D} \int_0^t  (\dot{\xx})^2 + \left(\frac{1}{\zeta} \frac{\partial \phi}{\partial \xx} \right)^2 - \frac{2D}{\zeta}\frac{\partial^2 \phi}{\partial \xx^2} - 2\mb{\Phi}(\es) d\es.
\end{equation}
This can be further modified, by integrating forces along the path and using forces instead of a potential, to the more common form:
\begin{equation}
    S(\xx(t)) = \frac{1}{2D}\left(\phi(\xx) - \phi(\xx_0)\right) + \frac{1}{4D} \int_0^t  \dot{\xx}^2 + \left(\frac{1}{\zeta} \frac{\partial \phi}{\partial \xx} \right)^2 - \frac{2D}{\zeta}\frac{\partial^2 \phi}{\partial \xx^2}  d\es.
\end{equation}
This procedure to derive the Onsager-Machlup action is similar to that in \cite{mauri}.

\section{Fixed endpoints}
For the entirety of this paper, we operate with fixed endpoints. This means the actual minimized action will be reduced simply to the following:
\begin{equation}
    S(x(t_e)) = \frac{1}{4D} \int_0^t  \dot{\xx}^2 + \left(\frac{1}{\zeta} \frac{\partial \phi}{\partial \xx} \right)^2 - \frac{2D}{\zeta}\frac{\partial^2 \phi}{\partial \xx^2}  d\es.
\end{equation} 
The first and simplest strategy to keep endpoint constant is to include a penalty in the form
\begin{equation}
    L_p = C_{spring} \big[ (\xx(0) - \overline{\xx}_0)^2 + (\xx(t) - \overline{\xx}_T)^2 \big].
\end{equation}
Interestingly, as verified experimentally, the penalty term effectively works the same as using a more simple and straightforward method. The method of choice was to set the endpoint gradients to 0 manually. Only a couple of points will be affected, and the majority of the trajectory is the same for both approaches.
\section{Numerical discretization}
Another aspect to consider is the numerical discretization of the action. \cite{Adib2008} discusses the different numerical evaluations of the action stemming from the stochastic nature of the Langevin equation. Namely, the Onsager-Machlup action depends on the discretization convention used for the SDE (It\^{o} or Stratonovich). We consider the Stratonovich convention for this paper, which yields the following Onsager-Machlup action: 
\begin{equation}
    S(\xx_0, \xx_1 \dots \xx_n) = \frac{1}{4D} \sum_{j=1}^{N} - \frac{(\xx_j - \xx_{j-1})^2}{\Delta t} + \Delta t\left(\frac{\mb{\Phi}(\xx_j)}{\zeta}\right)^2 + \frac{2D\Delta t}{\zeta} \nabla \cdot \mb{\Phi}.
\end{equation}
If the It\^{o} convention was used, there would not be a Jacobian term $\nabla \cdot \mb{\Phi}$ \cite{cugliandolo2017rules}.

To make the multidimensional, multiparticle system discretization clear, we extend the sum along spatial dimensions (index $j$) and we also sum particles (index $k$). The coefficient $\zeta$ is now a vector since it is originally $\zeta_k = \gamma/M_k$ where $M_k$ is the vector of masses. The total action is then,

\begin{equation}
\begin{split}
\label{eq:diffusion_action}
    S[\xx^{(0)}, \dots, \xx^{(L)}] = \sum_{i=1}^{L-1} \sum_{j=1}^{N_{p}} \frac{1}{4D \Delta t}\left\|\xx^{(i+1)}_{j} - \xx^{(i)}_{j} \right\|^2  + \frac{\Delta t}{4D\zeta_j^2} \left\|{\mb{\Phi}}_j(\xx^{(i)})\right\|^2  - \frac{\Delta t}{2\zeta_j}\nabla \cdot {\mb{\Phi}}_j(\xx^{(i)}).
\end{split}
\end{equation}

\section{Classical Force Fields on All-Atom Proteins}
\label{sec:classical_ff}
We demonstrate that our OM action optimization framework is broadly useful for transition path sampling even beyond the setting of generative modeling. Specifically, we aim to find all-atom transition paths between the unfolded and folded states of the protein Chignolin and Trp-Cage, using a differentiable PyTorch implementation \cite{doerr2020torchmd, sipka2023differentiable} of the Amber \textit{ff14SB}\cite{14amber} forcefield and the \textit{TIP3P} implicit water model. We choose the physical parameters of the OM action to be consistent with commonly used values in molecular simulations (see Table \ref{tab:om_optimization_details_classical}). Since we do not have a generative model from which to obtain an initial path guess via latent interpolation as described in $\ref{sec:main_method}$, we instead employ a \textbf{hierarchical unwrapping} warm-up procedure described in the Appendix \ref{ape:guesses} to obtain initial paths. As the classical force field is dominated by quadratic terms whose Laplacian is constant and thus uninformative for optimization, we use a zero-temperature approximation and optimize with the Truncated OM action. Using the Truncated action, we obtain a physical transition path of length 2.6ps (shown in \fref{fig:all_atom_proteins}). This is much lower than previously reported transition path lengths for Chignolin \cite{chignolinanalysis, lindorff2011fast}, which can be explained by the fact that our trajectories proceed between the target states without fluctuations that would occur in unbiased simulations. The entire optimization took on the order of hours on one NVIDIA RTX A6000 GPU, including the generation of initial trajectory. 

\begin{figure*}[h]
    \centering
    \includegraphics[width=\linewidth]{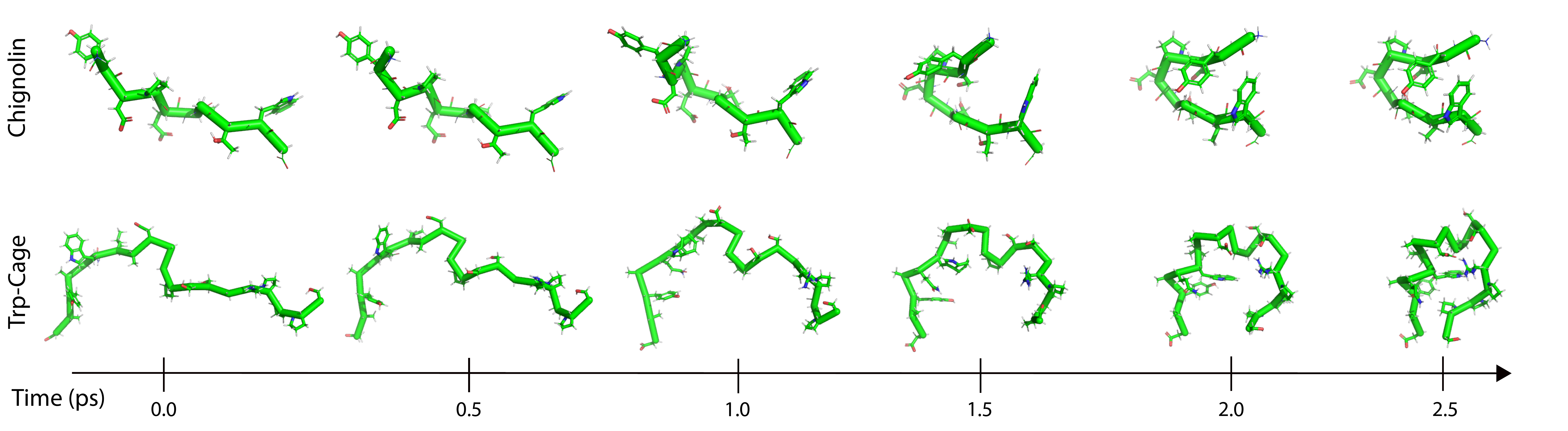}
    \caption{Transition paths from OM optimization of all-atom chignolin and trp-cage with a classical force field.}
    \label{fig:all_atom_proteins}
\end{figure*}

\begin{table}[h]
\caption{Hyperparameters used for OM action optimization on all atom proteins}
\label{tab:om_optimization_details_classical}
\scriptsize
\centering
    \begin{tabular}{lcccc}
        \toprule
        \textbf{Hyperparameter} & \textbf{Chignolin - warmup} & \textbf{Chignolin} & \textbf{TRP Cage - warmup} & \textbf{TRP Cage} \\
        \midrule
        Number of points per path & 40 - 2600 & 2600 & 40 - 2600 & 2600 \\
        Action Type & Truncated & Truncated & Truncated & Truncated \\
        Optimizer & Adam & Adam & Adam & Adam \\
        Learning Rate & $10^{-4}$ & $10^{-5}$ & $10^{-4}$ & $10^{-5}$ \\
        Action Timestep ($\Delta t$) & $1$ fs & $1$ fs & $1$ fs & $1$ fs \\
        Action Friction ($\gamma$) & $10$ ps$^{-1}$ & $10$ ps$^{-1}$ & $10$ ps$^{-1}$ & $10$ ps$^{-1}$ \\
        \bottomrule
    \end{tabular}
\end{table}

\section{Additional Details on Onsager-Machlup Action Minimization Method}
\label{sec:om_details}

\paragraph{Initial Path Guess Methods.}
\label{ape:guesses}
We provide a complete description and algorithmic formulation of the initial path guess method using a generative model, mentioned in Section \ref{sec:main_method}.

Formally, consider a generative model with a non-parametric, probabilistic encoding (i.e corruption) process $q(\xx_\tau | \xx_0)$, and a corresponding, parametric decoding (i.e generative) process $p_\theta(\xx_0 | \xx_\tau)$. We first roto-translationally align the endpoints of the path via a Kabsch alignment \cite {kabsch1993automatic}. We encode the aligned endpoints of the path into the chosen latent level for the initial guess, $\tau_{\text{initial}}$ to produce two latent endpoints $\mathbf{z}^{(0)} \sim q(\mathbf{z}_{\tau_{\text{initial}}} | \xx^{(0)})$ and $\mathbf{z}^{(L)} \sim q(\mathbf{z}_{\tau_{\text{initial}}} | \xx^{(L)})$. We then interpolate linearly to generate a latent path $\mathbf{Z} = \left \{ \mathbf{z}^{(i)} = (1-\frac{i}{L})\mathbf{z}^{(0)} + \frac{i}{L}\mathbf{z}^{(L)} \right \}_{i \in \left[0, L \right]}$. We can then either decode the path back to the configurational space via $p_\theta(\xx | \mathbf{z}^{(i)})$ to obtain a path $\xx$, in which case the subsequent OM optimization would occur in configurational space, or defer decoding, in which case optimization occurs at the latent level $\tau_{\text{initial}}$ starting from the latent path $\mathbf{Z}$. Intuitively, larger values of $\tau_\text{initial}$ produce more diverse initial guesses at the expense of decreasing correspondence to the endpoint states.

\begin{algorithm}
\caption{Initial Guess Path Generation with a Generative Model}
\begin{algorithmic}[1]
\label{alg:gen_initial_guess}
\STATE \textbf{Given} a generative model with non-parametric encoder $q$, and decoder $p_\theta$.
\STATE \textbf{Function} $\text{InitialGuess}(\xx^{(0)}, \xx^{(L)}, \tau_{\text{initial}})$
\STATE \quad \textbf{Align} both samples via Kabsch alignment: 
\STATE \quad \quad $\xx^{(0)}, \xx^{(L)} = \text{KabschAlign}(\xx^{(0)}, \xx^{(L)})$
\STATE \quad \textbf{Encode} both samples into latent level $\tau_{\text{initial}}$ of the generative model:
\STATE \quad \quad $\mathbf{z}^{(0)} \sim q(\mathbf{z}_{\tau_{\text{initial}}} | \xx^{(0)})$, $\mathbf{z}^{(L)} \sim q(\mathbf{z}_{\tau_{\text{initial}}} | \xx^{(L)})$
\STATE \quad \textbf{Interpolate} linearly (or spherically) in the latent space to generate an initial guess latent path:
\STATE \quad \quad $\mathbf{Z} = \left \{ \mathbf{z}^{(i)} = (1-\frac{i}{L})\mathbf{z}^{(0)} + \frac{i}{L}\mathbf{z}^{(L)} \right \}_{i \in \left[0, L \right]}$
\STATE \quad \textbf{Decode} each point on the initial latent path $\mathbf{Z}$ from $\tau_{\text{initial}}$ to $\tau = 0$ to produce a data path:
\STATE \quad \quad $\xx = \left\{\xx^{(i)} \sim p_\theta(\xx | \mathbf{z}^{(i)})\right\}_{i \in \left[0, L\right]}$
\STATE \quad \textbf{Return} $\xx$
\end{algorithmic}
\end{algorithm}

When using a classical FF, we do not have access to a generative model. Thus, we must use a different scheme than what is described in \ref{sec:main_method} to compute the initial guess path. We first start with a small number of replicas in each basin, creating a large gap in the middle of the path. We then optimize with an unphysically large path term, creating a short but interpolating trajectory. After we are satisfied with the initial guess we multiply each replica twice, creating a path of twice the length that we then optimize again. This simple procedure is repeated until we reach desired length of the path. This procedure is described in Algorithm \ref{algo:iterative_unwrap}. 

\begin{algorithm}
\caption{Initial Guess Path Generation with Iterative Unwrapping}
\label{algo:iterative_unwrap}
\begin{algorithmic}[1]
\STATE \textbf{Function} $\text{InitialGuess}(\xx^{(0)}, \xx^{(L)}, L_1, N)$
\STATE \textbf{Initialize} trajectory by copying boundary points $L_1/2$ times on both ends.
\STATE $\xx = \left\{\xx^{(0)}, \dots, \xx^{(0)}, \xx^{(L)}, \dots, \xx^{(L)}\right\}$

\STATE \textbf{For} $m$ from $1$ to $N$ repeat:
\STATE \quad \textbf{Duplicate} every point along the path: $\xx \leftarrow \left\{\xx^{(0)}, \xx^{(0)}, \xx^{(1)}, \xx^{(1)} \dots, \xx^{(L)} \xx^{(L)}\right\}$
\STATE \quad \textbf{Optimize} Truncated Onsager-Machlup action: $\xx \leftarrow \argmin_\xx S_\theta^{\text{trunc}}(\xx)$
\STATE \textbf{Obtain} initial guess $\xx = \left\{\xx^{(0)},\xx^{(1)}, \dots, \xx^{(2^N * L_1-1)}, \xx^{(L)}\right\}$
\end{algorithmic}
\end{algorithm}
\paragraph{Hutchinson Trace Estimator.}
The third term in \eqref{eq:generative_om_action} involves the trace of the Jacobian of the force, $\nabla \cdot {F_\theta}(\xx^{(i)},\tau_\text{opt} )$, or equivalently for conservative forces, the trace of the Hessian (Laplacian) of a scalar energy. Naively computing gradients of this quantity can be prohibitively expensive. We thus employ the Hutchinson trace estimator \cite{hutchinson1989stochastic} to accelerate computation. Formally, let $\mathcal{H}(\xx) = \nabla \cdot F_\theta(\xx,\tau_\text{opt} ) \in \mathbb{R}^{N_p \ast d \times N_p \ast d}$. We approximate the trace of $\mathcal{H}$ as $\operatorname{tr}(\mathcal{H}(\xx)) \approx \frac{1}{N} \sum_{i=1}^N \mathbf{v}^\intercal \mathcal{H}(\xx) \mathbf{v}$, where $\mathbf{v} \sim \mathcal{N}(0, I)$. By leveraging vector-Jacobian products (VJP), we can compute the trace without materializing $\mathcal{H}$ or its diagonal elements. 

In practical terms, the estimator converges rather slowly. Let us denote the approximated trace by $\operatorname{\hat{Tr}}$. One can derive the variance of the estimator as,
\begin{equation}
    Var(\operatorname{\hat{Tr}}) = \frac{1}{N} \operatorname{Var}(\mathbf{v}_i \cdot \mathcal{H}(\xx) \mathbf{v}_i),
\end{equation}
which, when $\mathbf{v}_i$ are distributed identically means the error of the trace estimator decays as,
\begin{equation}
    |\operatorname{\hat{Tr}} - \operatorname{Tr}(\mathcal{H}(\xx))| \leq \frac{C}{\sqrt{N}}.
\end{equation}
Where $C$ depends on the properties of the matrix. This convergence is rather slow and means that one requires many iterations to arrive to an accurate value of the trace. In practice, however, we found that $N = 1$ worked well and led to smooth OM optimization. This is likely due to the fact that our trajectories were composed of many neighboring points that had similar Laplacian values.

\paragraph{Selection of optimization time.}
As described in Section \ref{sec:main_method}, the time $\tau_\text{opt}$ used to condition the generative model score function $s_\theta(\xx, \tau_\text{opt}$ is treated as a hyperparameter. In principle, $\tau_\text{opt} = 0$ ensures maximal correspondence with the true atomistic force field for Boltzmann-distributed data, but consistent with \citet{arts2023two}, we find in practice that a small, nonzero value works better. In \fref{fig:muller_alanine}, we show the average cosine similarity between the true forces and our pretrained denoising diffusion model's score function at various values of $\tau_\text{opt}$ for the Müller-Brown and alanine dipeptide systems. Notably, $\tau_\text{opt}=0$ is not the optimal time conditioning with respect to true force recovery for DDPM.

\begin{figure*}[h]
    \centering
    \begin{subfigure}[t]{0.49\linewidth}
        \centering
        \includegraphics[width=\linewidth]{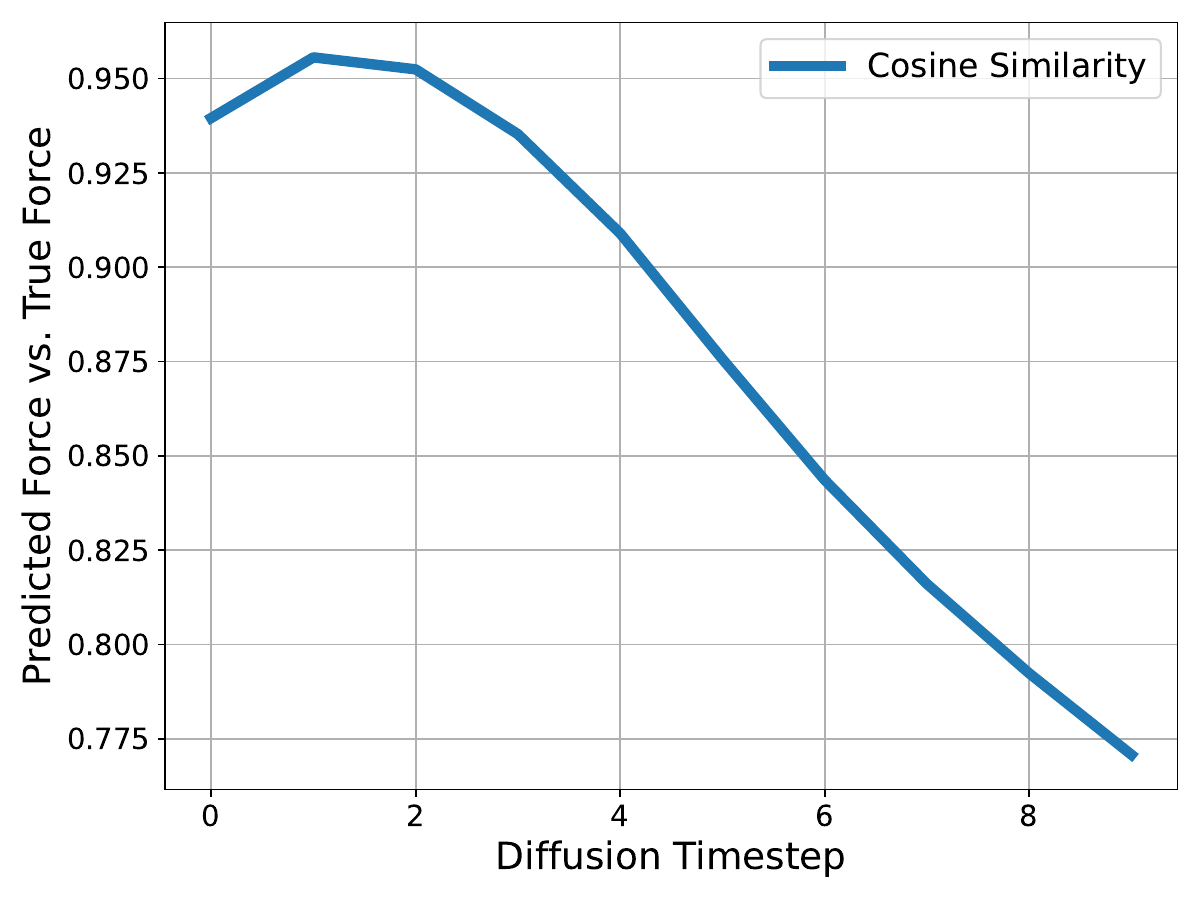}
        \caption{\textbf{Müller-Brown.}}
        \label{fig:muller_brown}
    \end{subfigure}
    \hfill
    \begin{subfigure}[t]{0.49\linewidth}
        \centering
        \includegraphics[width=\linewidth]{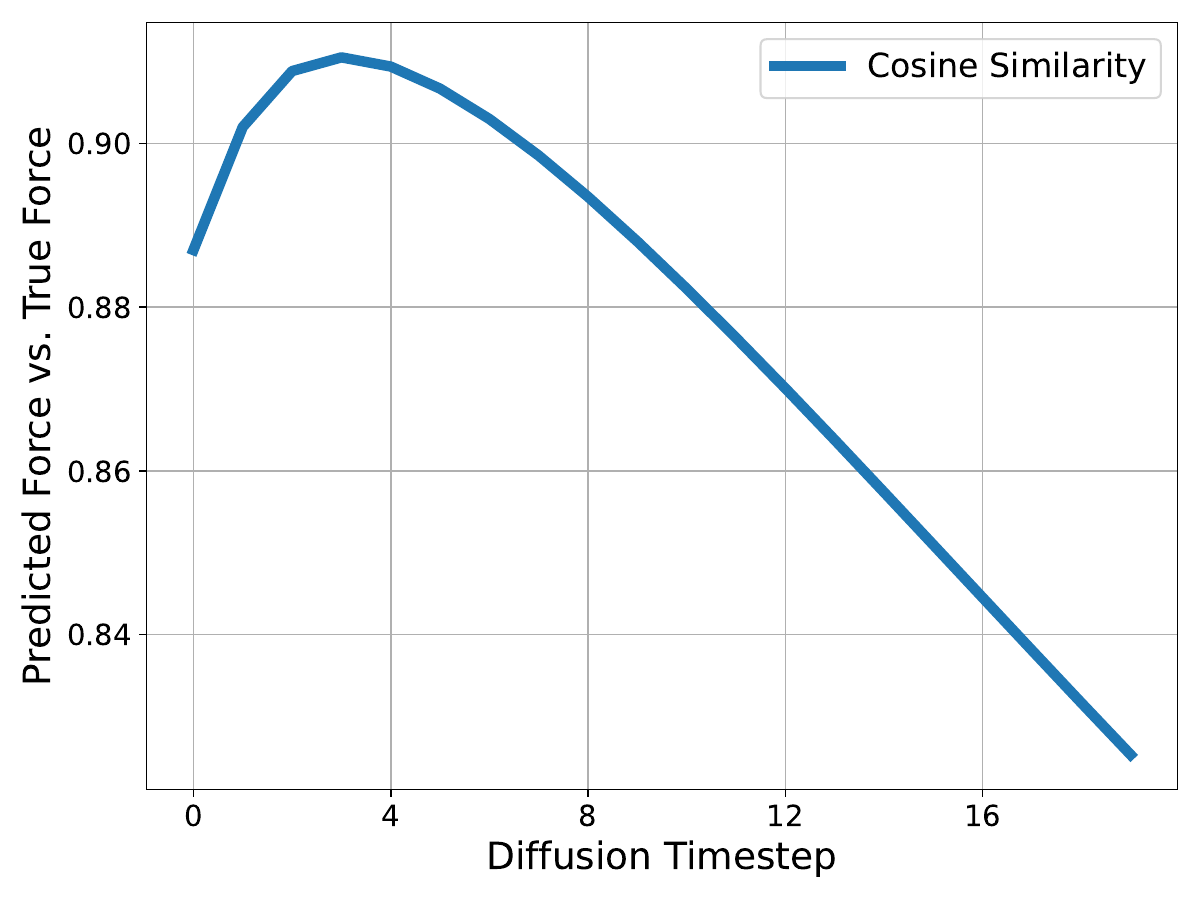}
        \caption{\textbf{Alanine Dipeptide.}}
        \label{fig:alanine_dipeptide}
    \end{subfigure}
    \vspace{2em}
    \caption{Cosine similarity between true force fields and learned force field from a DDPM's learned time-dependent score model $\mathbf{s}_\theta(\cdot, \tau_\text{opt})$ over different diffusion latent times $\tau_\text{opt}$. Results are averaged over all paths and atoms, for a Müller-Brown potential and Alanine Dipeptide. Notably, $\tau_\text{opt}=0$ is not the optimal time conditioning with respect to true force recovery for DDPM.}
    \label{fig:muller_alanine}
\end{figure*}

\section{Müller-Brown Potential Experiments}
\label{sec:mb_details}
We provide further details on the Müller-Brown experiments in Section \ref{sec:mb_results}. All experiments were performed on a single NVIDIA RTX A6000 GPU. 

\paragraph{Potential Parameters.} The exact form of the potential used is the following:


\begin{align*}
    U(x,y) &= -17.3 e^{-0.0039(x-48)^2 - 0.0391(y-8)^2} \\
    &\quad -8.7 e^{-0.0039(x-32)^2 - 0.0391(y-16)^2} \\
           &\quad - 14.7 e^{-0.0254(x -24)^2 + 0.043 (x-24)(y-32) -0.0254 (y-32)^2} \\
           &\quad + 1.3 e^{0.00273 (x-16)^2 + 0.0023 (x-16)(y-24) + 0.00273 (y-24)^2}
\end{align*}


This generates the potential shown in \fref{fig:mb_result}. \fref{fig:mb_analytical} shows OM optimization using the analytical potential as the force field. Increasing the diffusivity yields paths that cross higher energy barriers, aligning with physical intuition. The results with the diffusion model in Section \ref{sec:mb_results} align with the paths derived from the analytical potential, confirming the validity of the diffusion model as an approximation of the forces.   

\begin{figure}
    \centering
    \includegraphics[width=0.5\linewidth]{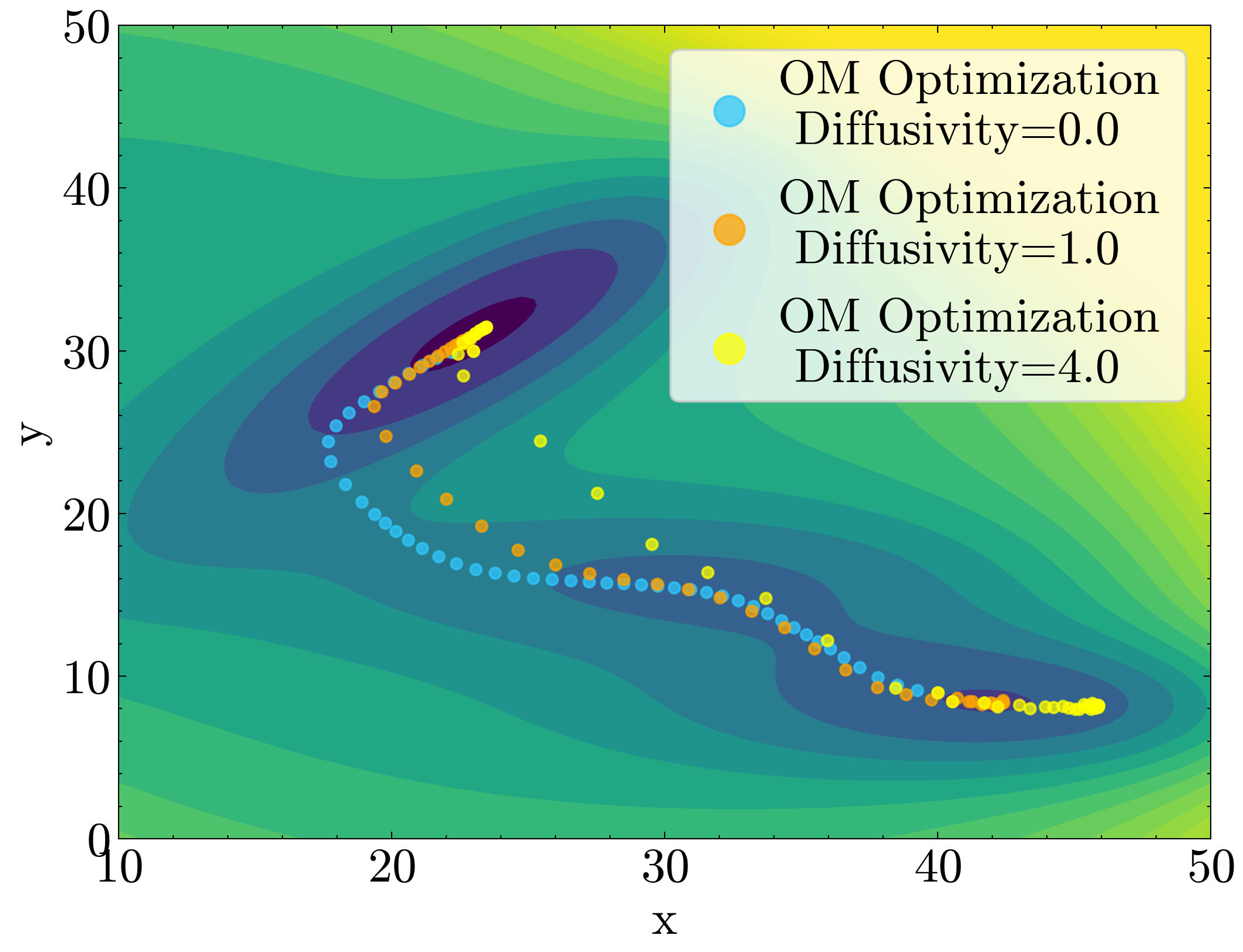}
    \caption{\textbf{OM optimization can be done with an analytical potential.} We show paths generated with OM optimization using the analytical potential with a timestep of 1, a friction of 1, and multiple diffusivities. Higher diffusivities, corresponding to higher temperatures in physical interpretation, can cross higher energy barriers, aligning with physical intuition.}
    \label{fig:mb_analytical}
\end{figure}

\paragraph{Data generation.} To ensure that the transition region is adequately represented with relatively short simulation times, we choose initial conditions for the simulations by uniformly sampling the transition path resulting from OM optimization under the true MB potential. We generate training data by running unbiased, constant-temperature simulations with the MB potential under Langevin dynamics. We run 1,000 parallel simulations for 1,000 steps, yielding a total of 1 million datapoints. Of these, 800,000 are used for training, and 200,000 are reserved for validation.

\paragraph{Training.} We then train a standard denoising diffusion model on this dataset, with the denoising model parameterized by a 3-layer MLP with a GELU activation \citep{hendrycks2023gaussianerrorlinearunits} and a hidden dimension of 256. The model is trained for 10 epochs, with a batch size of 4096 and a learning rate of $1\text{e}-3$ using the Adam optimizer \citep{kingma2017adammethodstochasticoptimization}. 

\begin{figure}
    \centering
    \includegraphics[width=0.5\linewidth]{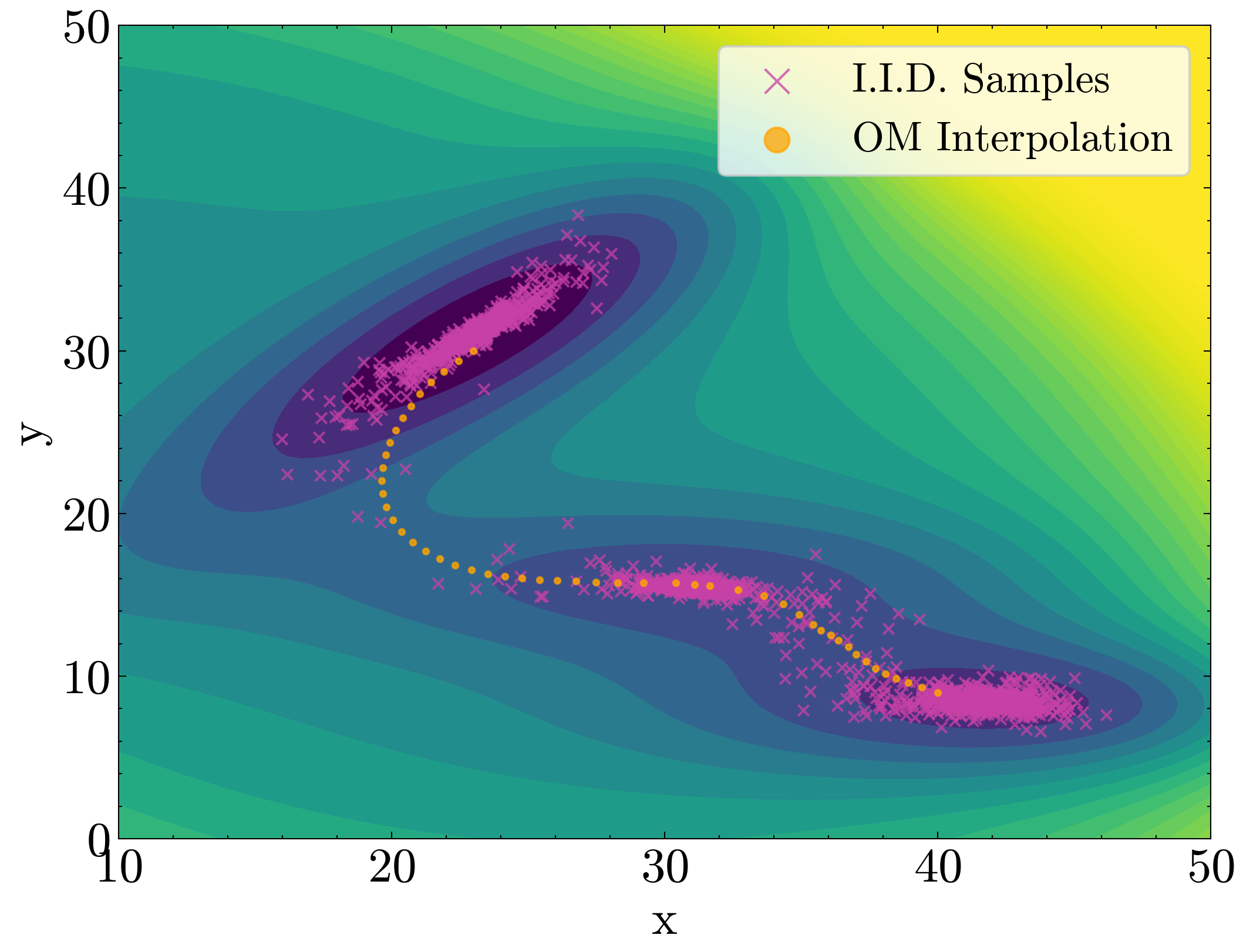}
    \caption{\textbf{OM optimization from a flow matching model}. The analogous experiment to \fref{fig:mb_analytical}, but using a flow matching model trained on Müller-Brown data, and using the extracted score via \eqref{eq:flow_force} for OM optimization. Since the stochastic encoding/decoding processes of flow matching models are deterministic and the Müller-Brown setting has a single minimal landscape, the encoding/decoding scheme in itself cannot generate diversity in this setting. The trajectory is generated at diffusivity $D=0$.}
    \label{fig:mb_flowmatching}
\end{figure}





\paragraph{Energy Laplacian Term.} We estimate the Laplacian of the potential energy surface by using the Hutchinson Trace Estimator (see Section \ref{sec:om_details}). As shown in \fref{fig:hutch}, one random vector ($N=1$) is enough to capture the local minima and the energy barrier using the Hutchinson Trace Estimator, so we use $N=1$ in our experiments. Using more random vectors gives a less noisy estimate of the Laplacian, trading off accuracy for computational expense.

\begin{figure}[h]
    \centering
    \includegraphics[width=0.95\linewidth]{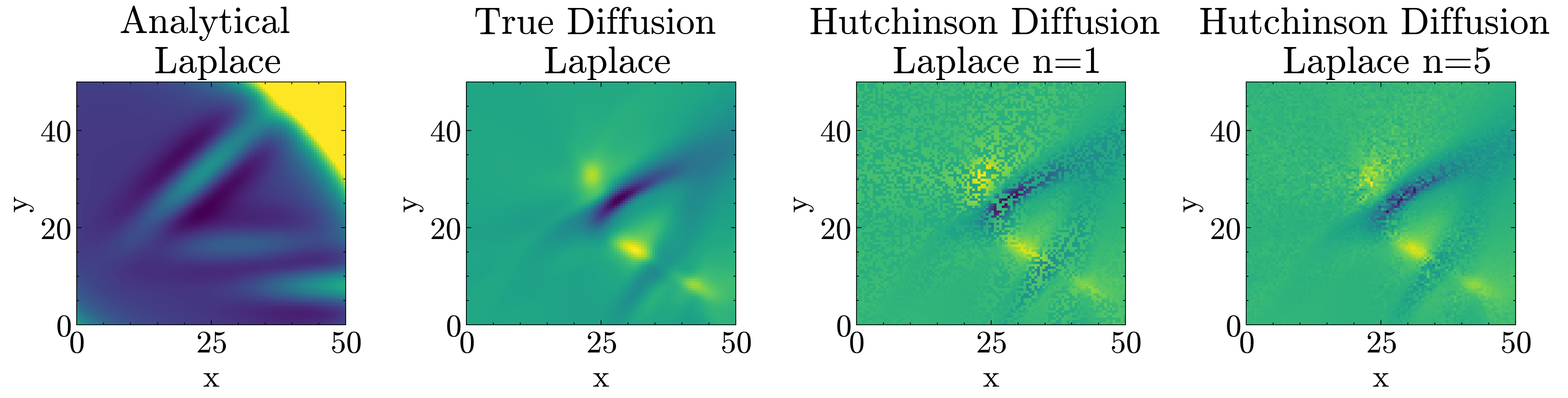}
    \caption{\textbf{Hutchinson Trace Estimator accurately estimates the Laplacian.} The diffusion model learns an estimate of the Laplacian that captures the Müller-Brown energy wells. The Hutchinson Trace Estimator efficiently approximates this Laplacian, and the estimate becomes less noisy when using more random samples.} 
    \label{fig:hutch}
\end{figure}

\paragraph{OM Optimization Details.} We pick two points on the potential energy surface (PES) at alternate ends of the transition barrier as target points for interpolation. The hyperparameters for the OM optimization are given in Table \ref{tab:om_optimization_details_mb}


\begin{table}[h]
\caption{Hyperparameters used for OM action optimization on the Müller-Brown potential with diffusion.}
\label{tab:om_optimization_details_mb}
\scriptsize
\centering
\begin{tabular}{lc} 
    \toprule
    \textbf{Hyperparameter} & \textbf{Value}\\
    \midrule
    Number of Generated Paths & 50\\
        Action Type & Full\\
        Initial Guess Time ($\tau_\text{initial}$) & 8\\
        Optimization Time ($\tau_\text{opt}$) & 8\\
        Optimization Steps & 200\\
        Optimizer & Adam \\
        Learning Rate & 0.2 \\
        Path Length ($L$) & 50\\
        Action Timestep ($\Delta t$) & 0.01\\
        Action Friction ($\gamma$) & 0.01\\
        Action Diffusivity ($D$) & 0, 1.0, 4.0\\
    \bottomrule
\end{tabular}
\end{table}

\paragraph{Committor Function and Rate Estimation}
We provide additional details on the committor function and transition rate estimation experiment presented in Section \ref{sec:mb_results}. 

According to transition path theory \cite{vanden2006towards}, for a transition event between endpoints $\mathcal{A}, \mathcal{B} \in \Omega$, the committor function $q(\xx)$, captures the probability that a trajectory initiated at $\xx_0 = \xx$ reaches $\mathcal{B}$ before $\mathcal{A}$:

\begin{equation}
\label{eq:committor_full}
q(\xx) = \mathbb{E}[h_\mathcal{B}(\xx_\tau) \mid \xx_0 = \xx]; \quad
\tau = \argmin_{\substack{t \in [0, +\infty)}} \quad  \left \{ \xx_t \in \mathcal{A} \cup \mathcal{B} :
\xx_0 = \xx \right \},
\end{equation}
where $h_\mathcal{B}$ is the indicator function for reaching state $\mathcal{B}$. The transition state ensemble is formally defined as the level set $\{ \xx \in \Omega: q(\xx) = 0.5 \}$. The committor function $q(\xx)$ can be used to estimate transition rates between a reactant state $\mathcal{A}$ and product state $\mathcal{B}$ using the following formula \cite{vanden2006towards}:
\begin{equation}
    \label{eq:committor_rate_equiv}
    k = \frac{k_BT}{\gamma}\int_{\xx \in \Omega} p(\xx) |\nabla_\xx q(\xx)|^2 dx = \frac{k_BT}{\gamma}\langle |\nabla_\xx q(\xx)|^2 \rangle_\Omega,
\end{equation}
where $p(\xx)$ is the probability density under the Boltzmann distribution and $\Omega$ is the configuration space. Thus, with a differentiable estimate of the committor function, we can compute reaction rates as an ensemble average over samples from $p(\xx)$. By applying the Backward Kolmogorov Equation (BKE) and Vainberg’s Theorem, we can show that the committor function is the solution to the following functional optimization problem \cite{hasyim2022supervised}:

\begin{equation}
    \label{eq:func_opt}
    q(\xx) = \argmin_{\widetilde{q}} \mathcal{L}[\widetilde{q}] =  \argmin_{\widetilde{q}}\frac{1}{2} \langle |\nabla_\xx \widetilde{q}(\xx)|^2 \rangle_{\Omega \backslash \mathcal{A} \cup \mathcal{B}},
\end{equation}
subject to the boundary conditions $\widetilde{q}(\xx) = 0, \xx \in \partial \mathcal{A}; \widetilde{q}(\xx) = 1, x \in \partial \mathcal{B}$. Thus, we can train a neural network approximation to the committor function $q_\theta(\xx)$ by extremizing the following loss function:

\begin{equation}
    \label{eq:committor_loss}
    L(\theta) = \frac{1}{2} \langle | \nabla_\xx q_\theta(\xx)| ^2 \rangle_{\Omega \backslash \mathcal{A} \cup \mathcal{B}} + \lambda_\mathcal{A} \frac{1}{2} \langle q_\theta(\xx) ^2 \rangle_\mathcal{A} + \lambda_\mathcal{B} \frac{1}{2} \langle (1-q_\theta(\xx))^2 \rangle_\mathcal{B}
\end{equation}

The first term minimizes the BKE functional, and the second and third terms enforce the boundary conditions with penalty strengths $\lambda_\mathcal{A}$ and $\lambda_\mathcal{B}$, respectively. The ensemble averages in the loss, which are high-dimensional integrals whose computational cost grows exponentially with system size, are in practice replaced by importance-sampled Monte Carlo averages over a dataset of samples $\mathcal{D}_\text{train} = \{ x_i \}_{i=1}^N
$ from MD simulations, MCMC, or any enhanced sampling method (such as OM optimization). Once trained, the neural network committor function $q_\theta(\xx)$ can be used to estimate the rate via Eqn \ref{eq:committor_rate_equiv}, substituting $q_\theta(\xx)$ in place of $q(\xx)$, and averaging over $\mathcal{D}_\text{train}$ instead of the intractable full ensemble average over $\Omega$. 

\begin{equation}
    k_\theta = \frac{k_BT}{\gamma}\langle |\nabla_\xx q_\theta(\xx)|^2 \rangle_\Omega \approx \hat{\mathbb{E}}_{\xx \sim \mathcal{D}_\text{train}} |\nabla_\xx q_\theta(\xx)|^2 
\end{equation}

Notice that the neural network estimate of the rate $k_\theta$ is equivalent (up to constants) to the BKE functional loss $\mathcal{L}[\widetilde{q}]$. The primary challenge with learning the committor function in this way is the issue of sampling: the optimization problem is dominated by rare events with large values of $|\nabla_\xx q_\theta(\xx)|^2$ (i.e events in the transition region). Therefore, if the transition region(s) are insufficiently sampled, the training procedure will likely fail to converge, and the estimate rate $k_\theta$ will be inaccurate. Therefore, committor and rate estimation using this method reduces to a rare event sampling problem. Inspired by \citet{hasyim2022supervised}, our idea is to use the transition paths obtained from our OM optimization procedure (shown in \fref{fig:mb_rates}a) as samples over which to minimize \eqref{eq:committor_loss}. Specifically, for the Müller-Brown potential, we first sampled more points by initiating unbiased Langevin dynamics simulations along the OM-optimized paths (100 simulations, each 50 ps long, $k_BT = 1$), using the diffusion model's score function at $\tau=0$ as the force field (we could have also used the analytical forces from the Müller-Brown potential). This resulted in a collection of points $\mathcal{D}_\text{train} = \{ x_i \}_{i=1}^N
$ (shown in \fref{fig:mb_rates}b). To account for the biased initial conditions of the simulations, we reweight the samples in $\mathcal{D}_\text{train}$ to the underlying Boltzmann distribution using the reweighting factors 
\begin{equation}
    w_i = \frac{\frac{e^{-\beta U(x_i)}}{p_\text{OM}(x_i)}}{\Sigma_i \frac{e^{-\beta U(x_i)}}{p_\text{OM}(x_i)}},
\end{equation}
where $\beta = \frac{1}{k_BT}$, $U(\xx)$ is the analytical Müller-Brown potential energy, and $p_\text{OM}$ is the empirical density of $\mathcal{D}_\text{train}$, obtained via binning. We train a 5-layer MLP with a hidden dimension of 64 and a sigmoid activation function to approximate the committor function. The model is trained for 2000 steps with a batch size of 4096, using an Adam optimizer with a learning rate of 0.0001 and a cosine decay scheduler. The boundary condition loss weights are $\lambda_\mathcal{A} = \lambda_\mathcal{B} = 20$. The final estimated committor function profile is shown in \fref{fig:mb_rates}c. We obtain a transition rate estimate of $1.3 \times 10^{-5}$, compared with the true rate of $5.4 \times 10^{-5}$ obtained by solving the Backward Kolmogorov Equation numerically using a finite-element method in FEniCS \cite{baratta2023dolfinx}. 

\begin{figure*}[h]
    \centering
    \includegraphics[width=0.7\linewidth]{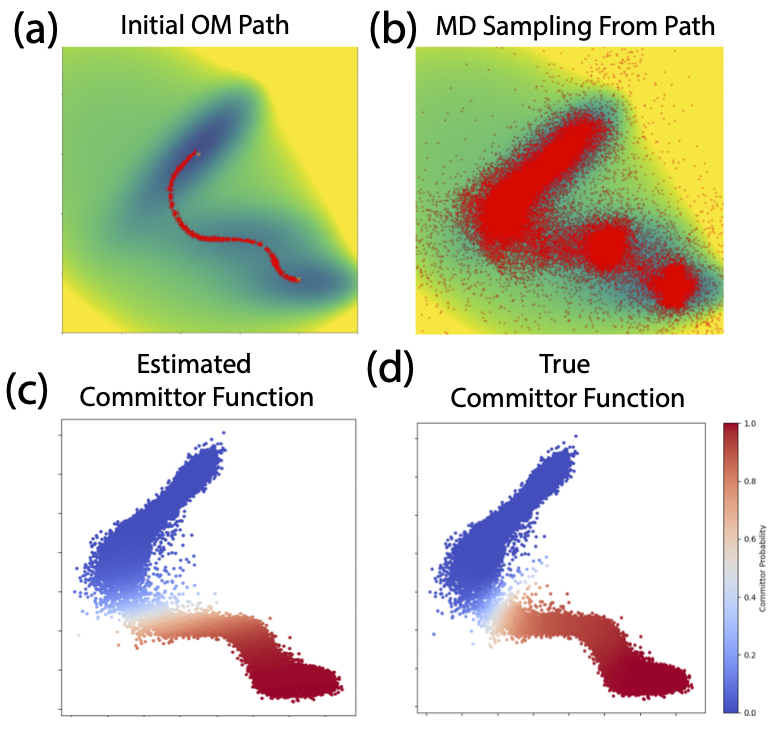}
    \caption{\textbf{Efficient committor and transition rate estimation on the 2D Müller-Brown potential using OM optimization.} \textbf{(a)} Initial OM optimized transition path with a pretrained denoising diffusion model, using $\tau_\text{opt}=4$ and 100 paths. \textbf{(b)} Subsequent diversification of the sampled transition paths by performing unbiased Langevin dynamics simulations initiated along the OM optimized paths (100 simulations, each 10,000 steps, $k_BT = 1$) with the diffusion model's score function at $\tau=0$. \textbf{(c)} Predicted values of the committor function $q(\xx)$, obtained via minimizing $\langle |\nabla_\xx q_\theta(\xx)|^2 \rangle_\Omega$, where $\langle \cdot \rangle_\Omega$ denotes an ensemble average over the samples collected in part b) and the committor function $q_\theta(\xx)$ is parameterized by a 5 layer MLP  with hidden dimension of 64 and sigmoid activation. \textbf{(d)} True committor function calculated by solving the Backward Kolmogorov Equation numerically using a finite-element method in FEniCS. The reaction rate $k$ is computed via the relation $k = \frac{k_BT}{\gamma}\langle |\nabla_\xx q(\xx)|^2 \rangle_\Omega$. Our predicted reaction rate is $1.3 \times 10^-5$, compared with the true rate of $5.38 \times 10^-5$.
    }
    \label{fig:mb_rates}
\end{figure*}



\section{Alanine Dipeptide Experiments.}
\label{sec:alanine_dipeptide_details}
We present additional results on the alanine dipeptide results in Section \ref{sec:alanine_dipeptide_results}.

\paragraph{Details on Traditional Enhanced Sampling Baselines.}
For MCMC, we report metrics found in \cite{du2024doob} for the fixed-length, two-way, exact method. We implement metadynamics ourselves with Ramachadran angles as CVs. We use a hill height of 0.2kcal/mol, a width of 10Å, and a temperature of 300K. We report the number of force field (or generative model score) evaluations required per 1,000 step transition path (a batched computation over many samples is considered to be one evaluation). 


\paragraph{Diffusion Model Training.}
We construct a training dataset from MD simulations of alanine dipeptide at 1000K (a large temperature is chosen to promote the inclusion of transition regions in the data). We then train a diffusion model on these samples, using the Graph Transformer architecture described in \sref{sec:fastfolding_details}. We train models for 50,000 steps, using an Adam optimizer with a learning rate of 0.0004 and a cosine annealing schedule reducing to a minimum learning rate of 0.00001. We use an exponential moving average with $\alpha = 0.995$. The diffusion model uses 1,000 integration steps at inference time.

\paragraph{OM Optimization Details.}
We choose a path length of $L=300$ and noise/denoise by 300 steps (out of 1,000) to initialize via Algorithm \ref{alg:gen_initial_guess}, then conduct OM optimization for 50,000 steps, fixing latent diffusion time $\tau_\text{opt}=3$. $\gamma$ and $\Delta t$ are set to 0.7 and 1e-4 respectively. After OM optimization, we run a short energy minimization procedure for 1400 steps using the Amber \textit{ff19SB} classical force-field to relax structures, followed by OM optimization using BFGS algorithm that required around 21,000 steps.

\paragraph{Calculating Free Energy Barriers with Umbrella Sampling.}
Since we optimized with the truncated OM action \eqref{eq:final_OM_truncated} we obtained minimum energy paths. To compute free energy barriers that are necessary for reaction rate predictions and comparisons with literature, we employ some sampling to estimate the entropic contribution. On alanine dipeptide, we use the following procedure to obtain free energy profiles along the OM-optimized paths.
\begin{enumerate}
\item Perform umbrella sampling \cite{torrie1977nonphysical}: short, 10ps, simulations with a harmonic string potential centered around the points that define the original minimum energy path. The spring constant used had value of 5 $\text{kcal}/\text{mol}/\text{rad}^2$
\item Fit a collective variable (CV) on the resulting samples. We choose Principal Component Analysis (PCA).
\item Employ the WHAM algorithm \cite{kumar1992weighted} to obtain the reweighted potential of mean force, shown in \fref{fig:free_energy}b.
\end{enumerate}

\section{Fast-Folding Protein Experiments}
\label{sec:fastfolding_details}

We present additional details on the fast-folding protein results in Section \ref{sec:fastfolders}. All experiments were performed on a single NVIDIA RTX A6000 GPU. 

\paragraph{MD Simulations with Diffusion Model}
To assess the computational efficiency and accuracy of our OM optimization relative to alternative methods of sampling the configurational space (see \fref{fig:fastfolders}a and b), we run Langevin MD simulations for varying lengths of time (up to 12 ns) using the diffusion model's score function as an effective force field \cite{arts2023two}, with a timestep of 2 fs. To ensure a fair comparison, we set the number of parallel MD trajectories to be the same as the number of transition paths sampled with OM optimization. 

\paragraph{Markov State Model Construction.}
We provide further details on the Markov State Model analysis used to evaluate the quality of transition paths for the fast-folding protein experiments. We largely follow the procedure described in \citet{jing2024generative}.

We perform k-means clustering of the reference MD simulations into 20 clusters using the top 2 Time Independent Component (TIC) dimensions, which are fit on the pairwise distances and dihedral angles of the protein configurations. We then fit a Markov State Model (MSM) with a lagtime of 200ps (the frequency at which the simulations were saved) to obtain a transition probability matrix $T$ between the 20 discrete states in the MSM (e.g, $T_{j,k} = p (s_{t+1} = k | s_t = j)$, where $s_t$ and $s_{t+1}$ are the states at time $t$ and $t+1$). This constitutes the reference MSM.

To evaluate transition paths sampled from our OM optimization method, we first discretize them under the reference MSM (i.e represent them as a sequence of cluster indices between 1 and 20). We subsample the paths to be of length 20. From this, we compute the probability of the path under the reference MSM via:

We also sample 1,000 discrete, reference paths of length $L=20$ (corresponding to a transition time of $200 \text{ps} \times 20 = 4 \text{ns}$) from the reference MSM, conditioned on the start and end states $s_1$ and $s_{L}$ (these are the cluster indices of the transition endpoints $\xx^{(0)}$ and $\xx^{(L)}$) . This can be achieved by sampling states $s_2 \ldots s_{19}$ iteratively as 
\begin{equation}
    s_{t+1} \sim \frac{T_{:, s_{L}}^{(L-t-1)} T_{s_{L}, :}}{T_{s_t, s_{L}}^{(L-t)}},
\end{equation}
where the superscipt denotes a matrix exponential. See \citet{jing2024generative} for precise details. 

With both the reference and generated discretized paths, we compute the following metrics:
\begin{enumerate}
    \item \textbf{Jensen-Shannon Divergence.} Consider the probability of each MSM state based on the frequency at which it is visited in the discretized paths. We compute these probabilities for both the reference and generated paths, and compute the JSD between the resulting categorical distributions.
    \item \textbf{Transition Negative Log Likelihood.} The average negative log likelihood of a transition from one discretized MSM state to the next, averaged over transitions from all generated paths with nonzero probability under the reference MSM. Since there are $L-1$ individual transitions for a path of length $L$, under the Markovian assumption the average negative log likelihood of a transition is given by $-\frac{1}{L-1}\log P(s_1 \ldots s_L) = -\frac{1}{L-1}\sum_{t=1}^{L-1} \log \Bigg( \frac{T^{(L-t-1)}_{s_t, s_L} \cdot T_{s_t, s_{t+1}}}{T^{(L-t)}_{s_t, s_L}} \Bigg)$.
    \item \textbf{Fraction of Valid Paths.} The fraction of generated paths with nonzero probability under the reference MSM.
\end{enumerate}

When considering replicate MD simulations of different lengths (e.g 1 $\mu$s, 10 $\mu$s), we fit a MSM to the simulations using the same discretized clusters as were used to fit the reference MSM, and sample 1,000 paths of length $L=20$ in the same way described above. 

We note that the reported metrics are sensitive to the choice of the MSM trajectory length $L$ used for the reference simulations. Varying $L$ changes the time horizon of the transition and yields qualitatively different reference paths. Since we chose $L=20$, the reported metrics reflect the correspondence of generated paths with true paths of length 4 ns. We also find that in practice, the optimal choice of $\Delta t$ in the OM action (i.e., the choice that yields the best MSM metrics as defined above) results in time horizons $T_p = L \Delta t$ (where $L$ here is the number of path discretization points during OM optimization) considerably smaller than the reference time horizon of $4$ ns (e.g., for the typical choices of $L = 200$, $\Delta t= 1 fs$, we have $T_p = 0.2 ps$). In other words, OM optimization yields accelerated transition dynamics, because it largely bypasses local/minor fluctuations that would occur in an actual MD simulation. 

\paragraph{Committor Function Analysis.} 
The committor function $q(\xx)$ (see \sref{sec:mb_results} for definition) is obtainable as the solution to the steady-state backward Kolmogorov equation (BKE) \cite{hasyim2022supervised}, which is generally infeasible to solve directly or numerically for high-dimensional systems. For the fast-folding proteins, we obtain an empirical estimate of the committor function by dividing the TIC configuration space of each protein into $100^2$ discrete bins, and replacing the expectation in \eqref{eq:committor_full} with an empirical average over trajectories starting from each bin in the reference MD simulations from \citet{lindorff2011fast}. The resulting committor estimates for the fast-folding proteins are shown in \fref{fig:protein_committors}.

\begin{figure*}[h]
    \centering
\includegraphics[width=\linewidth]          {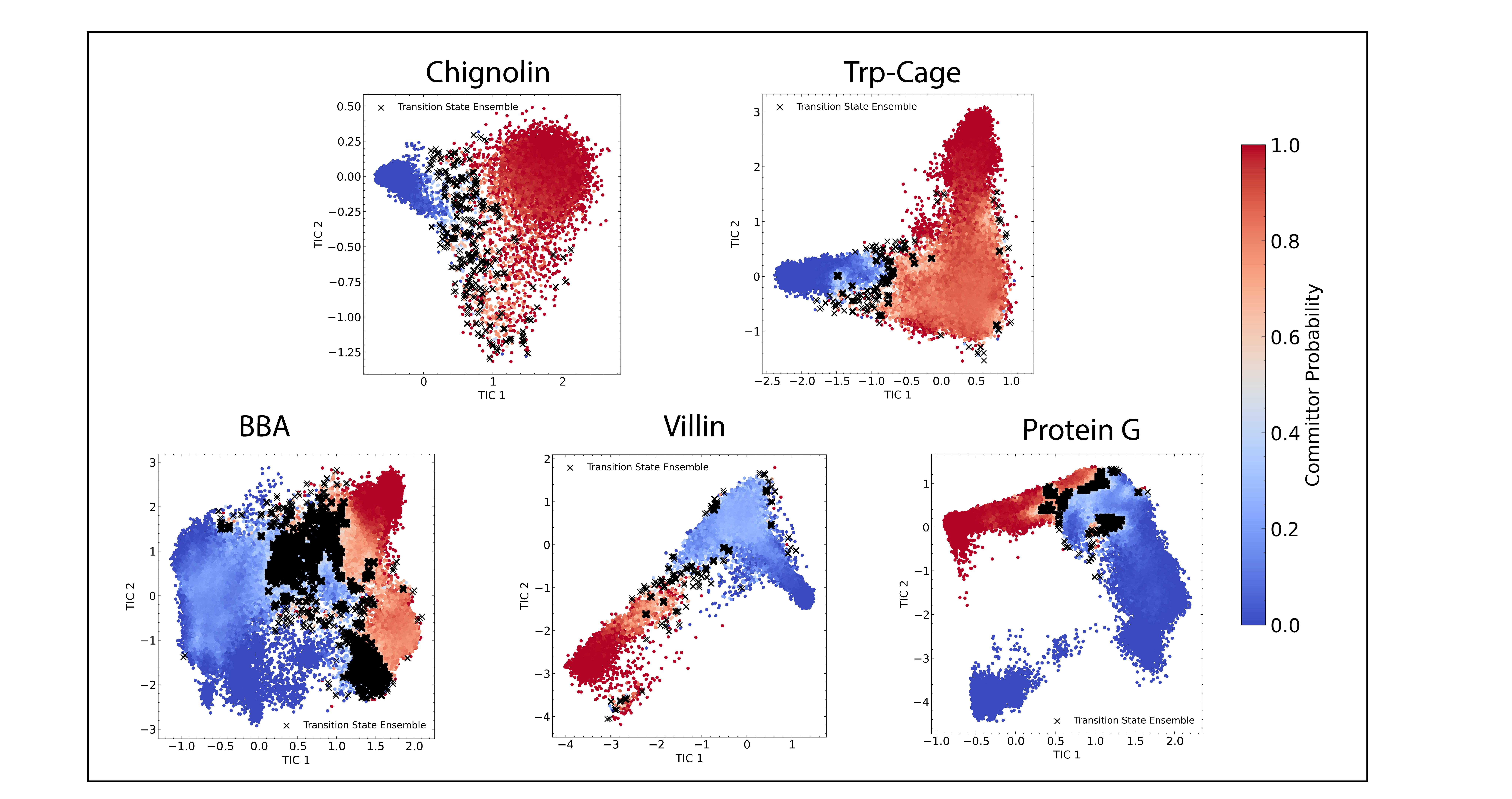}
    \caption{\textbf{Empirical committor landscapes for fast-folding proteins.} The committor is computed by binning the conformational space into $100^2$ bins and measuring the frequency at which reference MD trajectories initiated in each bin reach the target state before the start state. The empirical transition ensemble (shown in black) is defined as the level set $\left \{ \xx : 0.45 \leq q(\xx) \leq 0.55 \right \}$.}
    \label{fig:protein_committors}
\end{figure*}

\paragraph{Model Architecture and Training.} Our denoising diffusion and flow matching generative models are parameterized by a Graph Transformer architecture identical to what was used in \citet{arts2023two} (in the case of diffusion, we use the exact pretrained model from \citet{arts2023two}). To summarize, nodes are featurized by the ordering of each residue in the overall sequence, while edges are featurized by the pairwise $C_\alpha$-$C_\alpha$ distances. Nodes and edges are then jointly treated as tokens for input to the Transformer, which updates the token representations. A scalar output is obtained by summing learned linear projections of the token representations. Both the denoising diffusion vector field $\epsilon_\theta$ and the flow model velocity field $v_\theta$ are parameterized as the gradient of the final scalar output of the model with respect to the input $C_\alpha$ coordinates.

For denoising diffusion, we use the pretrained models from \citet{arts2023two}. For flow matching, we train our own models. We train models with 3 attention layers for 1 million iterations, using an Adam optimizer with a learning rate of 0.0004 and a cosine annealing schedule reducing to a minimum learning rate of 0.00001. We use an exponential moving average with $\alpha = 0.995$. The diffusion models use 1,000 integration steps at inference time, while the flow matching models use 10 steps. 

Protein-specific training and architecture hyperparameters are given in Table \ref{tab:fastfolder_architecture_training}.

\begin{table*}[h]
\caption{Architecture and training hyperparameters for diffusion and flow matching generative models on fast-folding proteins.}
\label{tab:fastfolder_architecture_training}
\scriptsize
\centering
    \begin{tabular}{lccccc}
        \toprule
        \textbf{Hyperparameter} & \textbf{Chignolin} & \textbf{Trp-cage} & \textbf{BBA} & \textbf{Villin} & \textbf{Protein G} \\
        \midrule
        Batch size & 512 & 512 & 512 & 512 & 256 \\
        Number of hidden features & 64 & 128 & 96 & 128 & 128 \\
        \bottomrule
    \end{tabular}
\end{table*}

\paragraph{OM Optimization Details.}

We list all the optimization hyperparameters used to perform OM optimization on the fast-folding proteins, for both diffusion and flow matching models, in Tables \ref{tab:om_optimization_details_diffusion} and \ref{tab:om_optimization_details_flow}.

\begin{table}[h]
\caption{Hyperparameters used for OM action optimization on fast-folding proteins with diffusion models.}
\label{tab:om_optimization_details_diffusion}
\scriptsize
\centering
\begin{tabular}{lccccc} 
    \toprule
    \textbf{Hyperparameter} & \textbf{Chignolin} & \textbf{Trp-cage} & \textbf{BBA} & \textbf{Villin} & \textbf{Protein G} \\
    \midrule
    Number of Generated Paths & 8 & 8 & 32 & 4 & 4 \\
        Action Type & Truncated & Truncated & Full & Truncated & Truncated \\
        Initial Guess Time ($\tau_\text{initial}$) & 250 & 250 & 250 & 250 & 250 \\
        Optimization Time ($\tau_\text{opt}$) & 20 & 15 & 20 & 10 & 10 \\
        Optimization Steps & 5000 & 5000 & 5000 & 5000 & 5000 \\
        Optimizer & Adam & Adam & SGD & SGD & SGD \\
        Learning Rate & 0.2 & 0.2 & 1e-5 & 1e-5 & 1e-5 \\
        Path Length ($L$) & 200 & 200 & 200 & 200 & 200 \\
        Action Timestep ($\Delta t$) & 0.001 & 0.001 & 0.001 & 0.005 & 0.002 \\
        Action Friction ($\gamma$) & 1 & 1 & 1 & 1 & 1 \\
        Action Diffusivity ($D$) & 0 & 0 & 1 & 0 & 0 \\
    \bottomrule
\end{tabular}
\end{table}

\begin{table*}[h]
\caption{Hyperparameters used for OM action optimization on fast-folding proteins with flow matching models.}
\label{tab:om_optimization_details_flow}
\scriptsize
\centering
    \begin{tabular}{lccccc}
        \toprule
        \textbf{Hyperparameter} & \textbf{Chignolin} & \textbf{Trp-cage} & \textbf{BBA} & \textbf{Villin} & \textbf{Protein G} \\
        \midrule
        Number of Generated Paths & 8 & 8 & 32 & 4 & 4 \\
        Action Type & Truncated & Truncated & Full & Truncated & Truncated \\
        Initial Guess Time ($\tau_\text{initial}$) & 7 & 7 & 7 & 7 & 7 \\
        Optimization Time ($\tau_\text{opt}$) & 0.5 & 0.5 & 0.5 & 0.5 & 0.5 \\
        Optimization Steps & 5000 & 5000 & 5000 & 5000 & 5000 \\
        Optimizer & Adam & Adam & SGD & SGD & SGD \\
        Learning Rate & 0.2 & 0.2 & 1e-4 & 1e-4 & 1e-4 \\
        Path Length ($L$) & 200 & 200 & 200 & 200 & 200 \\
        Action Timestep ($\Delta t$) & 0.0008 & 0.0005 & 0.001 & 0.001 & 0.001 \\
        Action Friction ($\gamma$) & 1 & 1 & 1 & 1 & 1 \\
        Action Diffusivity ($D$) & 0 & 0 & 1 & 0 & 0 \\
        \bottomrule
    \end{tabular}
\end{table*}

\paragraph{Comparison to Transition Paths from Reference MD Simulations.}

We directly compare our generated paths against the unbiased MD transition trajectories from the D.E.Shaw reference simulations \cite{lindorff2011fast}, finding that our paths sample the correct regions of phase space (\fref{fig:unbiased_md_tic}).

\begin{figure*}[h]
    \centering
    \includegraphics[width=0.5\linewidth]{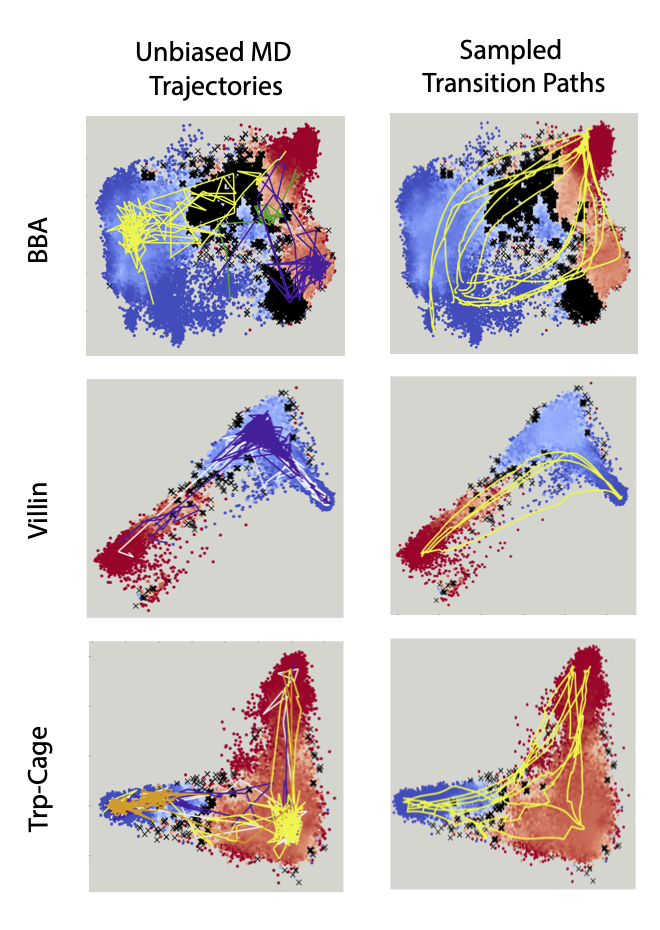}
    \caption{Sample trajectories from reference, unbiased fast-folding protein MD simulations from D.E.Shaw (left) and paths sampled via OM optimization (right) overlaid on TICA plots with committor values shaded from blue to red. We observe strong agreement in the TICA space, with our OM paths traversing the same transition regions and exhibiting similar structural transitions. While the OM paths are smoother/do not contain the same fluctuations found in unbiased MD trajectories, they capture the same essential features. OM optimization somewhat oversamples transition paths which avoid energy wells. This could be improved by increasing the time horizon of the transition to accommodate time spent in kinetic traps.}
    \label{fig:unbiased_md_tic}
\end{figure*}

\paragraph{Visualization of Transition Paths.} In Figures \ref{fig:diffusion_tps_atoms}, \ref{fig:flow_tps_atoms}, and \ref{fig:more_tic_paths}, we provide additional visualizations of transition paths sampled by our diffusion and flow matching models for the fast folding proteins, both in TIC and atomic space.

\begin{figure*}[h]
    \centering
\includegraphics[width=\linewidth]          {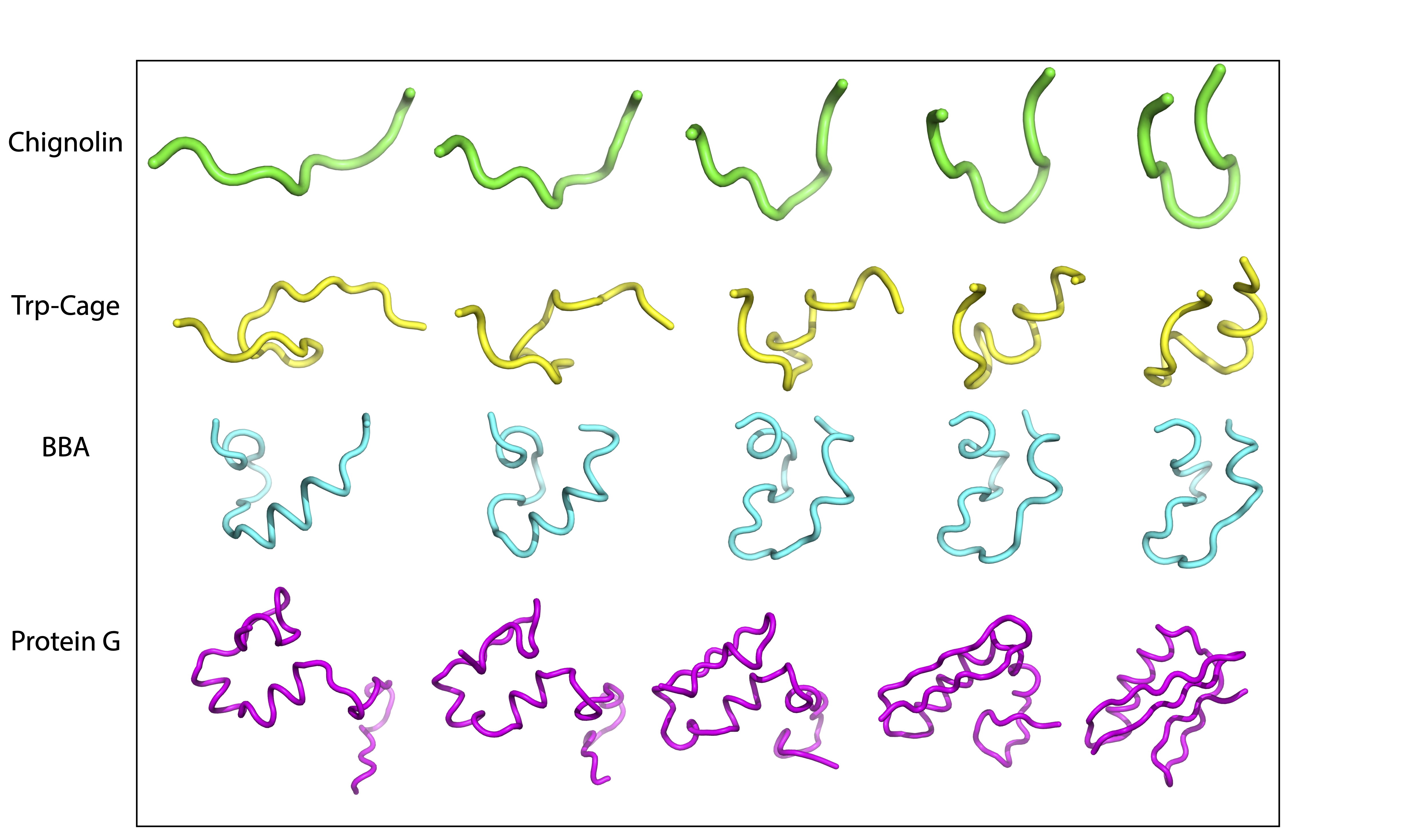}
    \caption{Visualization of sampled transition paths from OM optimization with pretrained diffusion models for fast-folding proteins.}
    \label{fig:diffusion_tps_atoms}
\end{figure*}

\begin{figure*}[h]
    \centering
\includegraphics[width=\linewidth]          {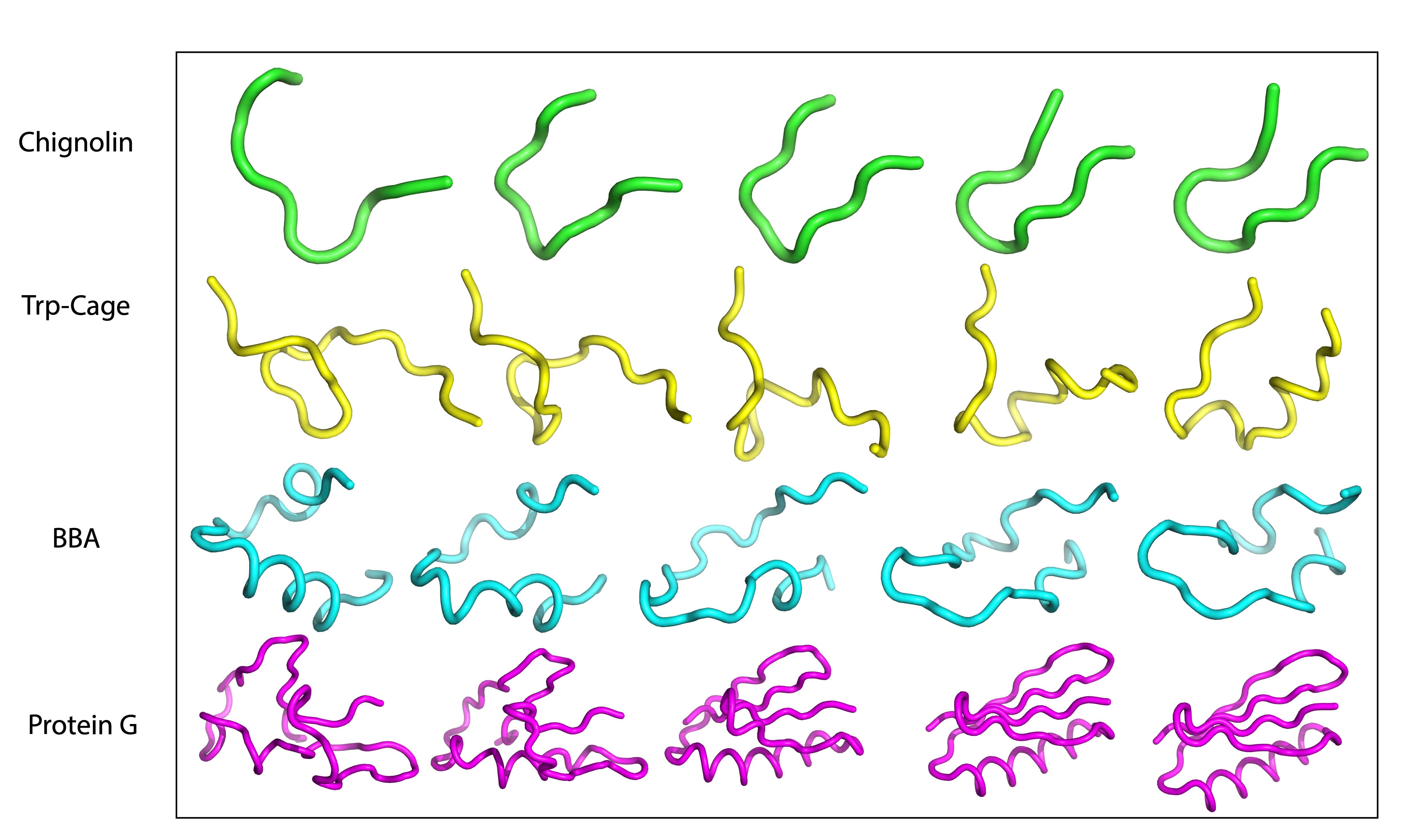}
    \caption{Visualization of sampled transition paths from OM optimization with pretrained flow matching models for fast-folding proteins.}
    \label{fig:flow_tps_atoms}
\end{figure*}

\begin{figure*}[h]
    \centering
\includegraphics[width=\linewidth]          {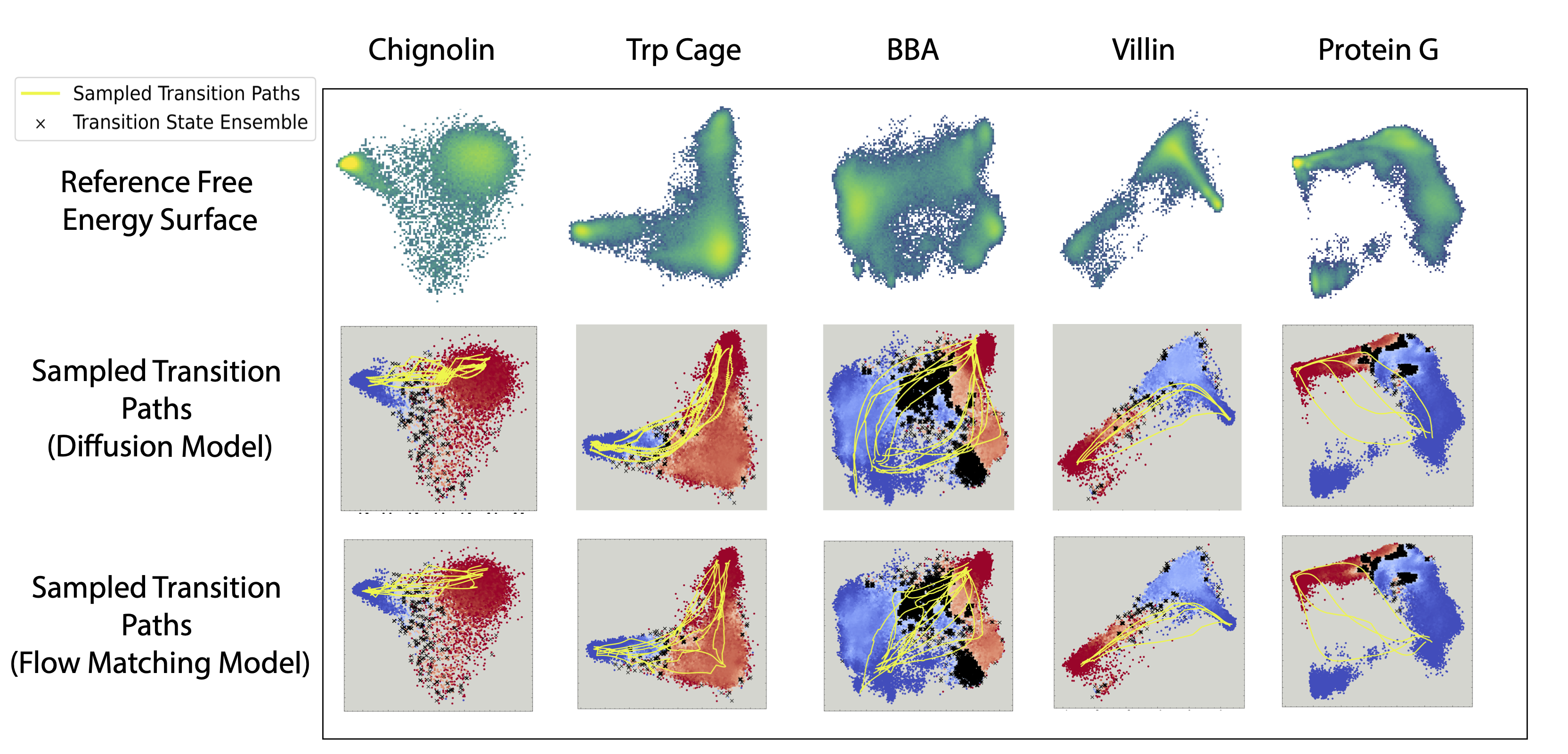}
    \caption{Visualization of sampled fast-folding protein transition paths in TIC space.}
    \label{fig:more_tic_paths}
\end{figure*}

\paragraph{Transition Paths with Varying Time Horizons}

We vary the time horizon $T_p$ over which OM optimization is performed, and increase the timestep $\Delta t$ to maintain the same number of discretization points $L$ for computational efficiency. As shown in \fref{fig:vary_dt}, paths with varying time horizons explore different parts of the transition state ensemble (black), with longer horizon paths spending more time near the bottom-right energy well corresponding to the folded state, and shorter paths taking more direct paths between the endpoints. 

\begin{figure*}[h]
    \centering
    \includegraphics[width=1.0\linewidth]{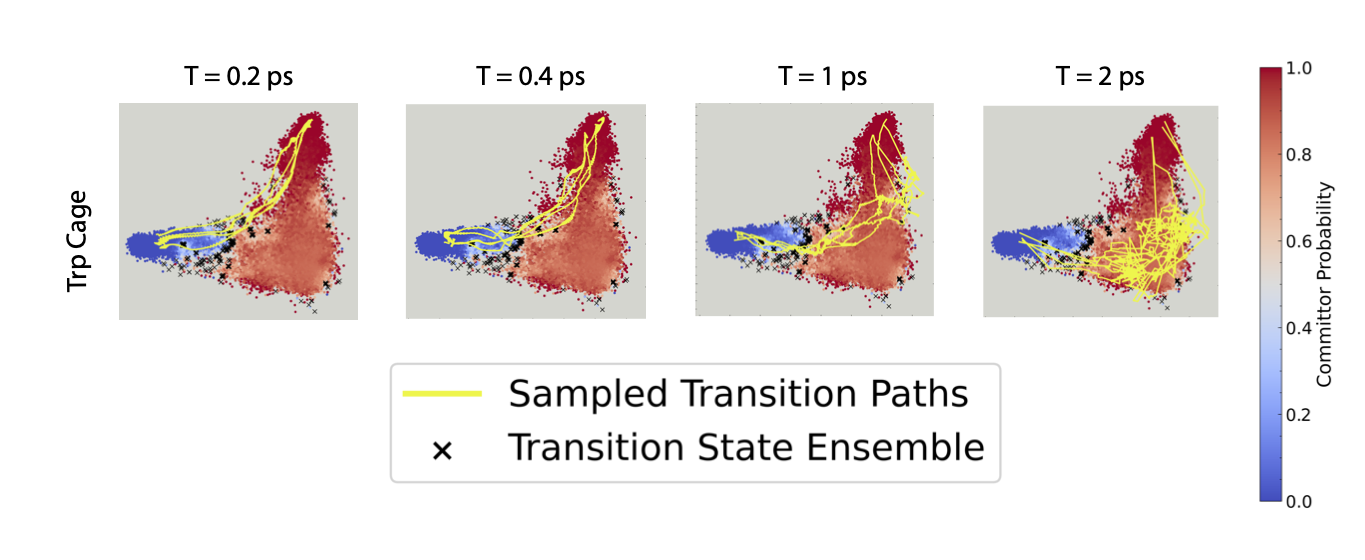}
    \caption{\textbf{Varying time horizons of path sampling.} OM-optimized transition paths for Trp-Cage using physical values for hyperparameters controlling relative contributions of path and force norm terms in the OM action. Here we set $\gamma=1 \text{ps}^{-1}$, and vary the transition time horizon over $T_p \in \{0.2, 0.4, 1, 2\} \text{ps}$ and the timestep over $\Delta t \in \{0.001, 0.002, 0.005, 0.01\} \text{ps}$, which fixes $L = 200$ discretization points.}
    \label{fig:vary_dt}
\end{figure*}

\paragraph{Training without Transition Regions.} We provide additional details on the data-starved experiment described in Section \ref{sec:fastfolders}. Transition states are challenging to sample, and therefore may not be abundant in reference MD simulations or structural databases, which typically serve as training datasets for generative models. To simulate the scenario in which the underlying dataset is not exhaustive and under-represents the rare, transition regions, we remove 99\% of the datapoints for which $0.1 \leq q(\xx) \leq 0.9$, where $q(\xx)$ is the empirical committor value (described in \textbf{Committor Function Analysis}). Thus, most of the remaining datapoints have committor values close to 0 or 1, meaning they initiate trajectories which stay in their respective local energy minima without transitioning across the path. For Chignolin, Trp-Cage, and BBA, the subsampling procedure removes 1.4\%, 68\%, and 85\% of the datapoints, respectively. We train diffusion models on these subsampled datasets using the same hyperparameters described earlier, followed by OM optimization between the same endpoints, using the same hyperparameters as used before. As shown in \fref{fig:data_subsampled}, the produced transition paths are similar to those shown in Figures \ref{fig:fastfolders}a and \ref{fig:more_tic_paths}. The paths still pass through the expected transition state regions (denoted in black), despite having seen them at a much lower frequency during training.

\begin{figure*}[h]
    \centering
\includegraphics[width=\linewidth]          {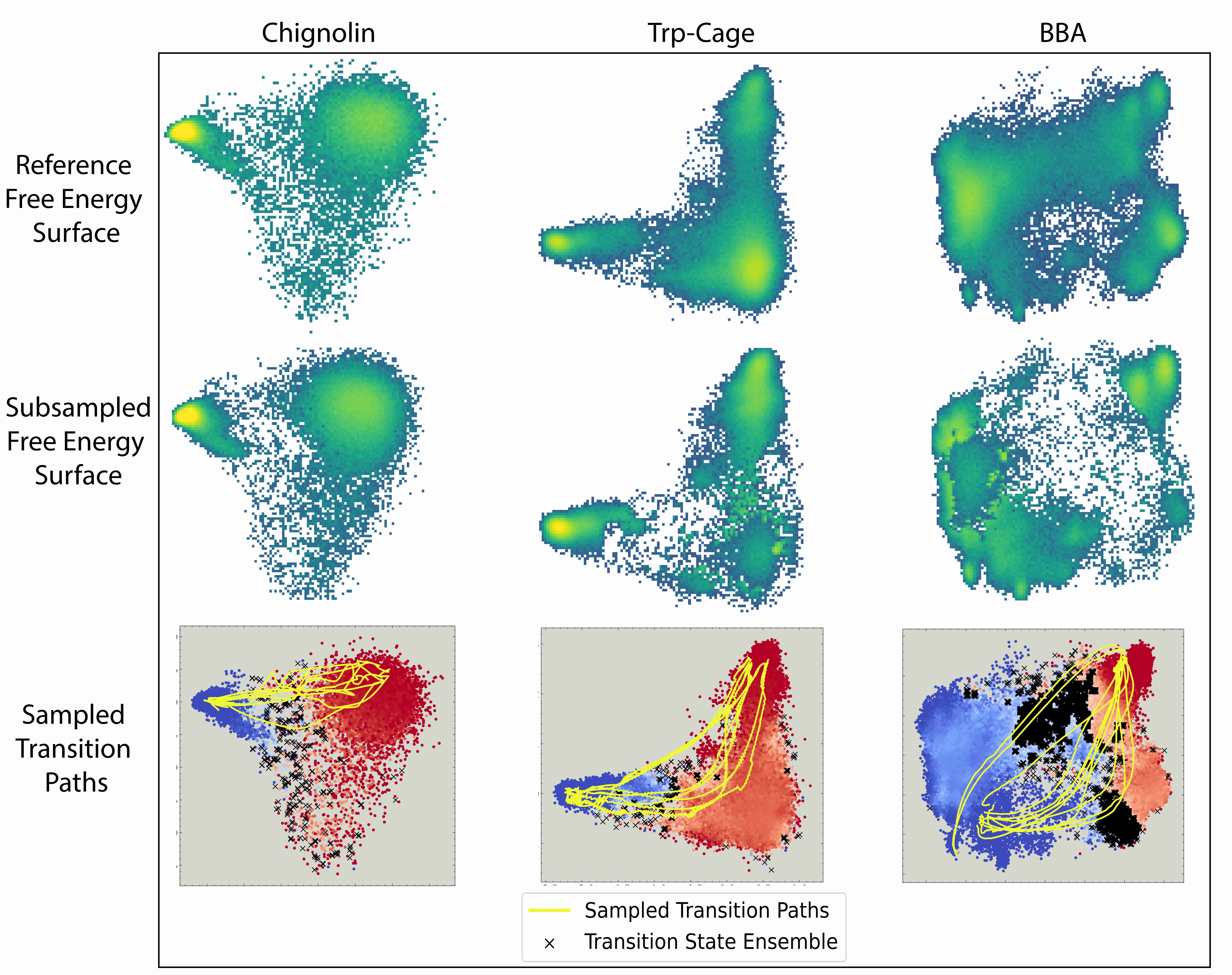}
    \caption{\textbf{Training datasets and sampled transition paths resulting from removing intermediate committor function values.} \textbf{(Top Row)} Original training datasets. \textbf{Middle Row} Datasets resulting from removing 99\% of datapoints with committor values (obtained empirically) between 0.1 and 0.9. \textbf{Bottom row.} Transition paths resulting from OM optimization with a diffusion model trained on the subsampled datasets.}
    \label{fig:data_subsampled}
\end{figure*}

\section{Tetrapeptide Experiments}
\label{sec:tetra_details}

We provide further details on the tetrapeptide experiments from Section \ref{sec:tetrapeptides}. All experiments were performed on a single NVIDIA RTX A6000 GPU.  

\paragraph{Heavy-Atom Representation.} Following \citet{jing2024generative}, we use a heavy-atom representation of the tetrapeptides (that is, hydrogens are excluded from the representation). The terminal oxygen (OXT) of the C-terminus is also excluded. This results in tetrapeptides with at most 56 atoms each. 

\paragraph{Training.} We train a flow matching model, parameterized by a Graph Transformer with all the same architecture and training hyperparameters used for the fast-folding proteins (Section \ref{sec:fastfolding_details}), with the only differences being the inclusion of learnable atom type embeddings and the use of padding tokens due to the variable number of atoms in each tetrapeptide. The atom embeddings are concatenated to the residue ordering and the flow timestep to form the node features. We train on a dataset of 3109 tetrapeptides simulated in \cite{jing2024generative}, taking 10,000 evenly spaced configurations from the simulations for each tetrapeptide. We use the same optimizer and learning rate scheduler as with the fast folding proteins (\ref{sec:fastfolding_details}), training for 100,000 steps with a batch size of 2048. 

\paragraph{OM Optimization on Held-Out Proteins.} We perform OM optimization on 100 held-out tetrapeptide sequences not seen during training (using the same splits as in \cite{jing2024generative}). As in \citet{jing2024generative}, we choose endpoints for the transition paths by picking the pair of states from the MSM with the lowest, non-zero probability flux between them, ensuring that the transitions are challenging but feasible to find. 

The optimization hyperparameters are given in Table \ref{tab:om_optimization_details_diffusion_tetra}.

\begin{table}[h]
\caption{Hyperparameters used for OM action optimization on tetrapeptides with flow matching models.}
\label{tab:om_optimization_details_diffusion_tetra}
\scriptsize
\centering
    \begin{tabular}{lc}
        \toprule
        \textbf{Hyperparameter} & \textbf{Value}\\
        \midrule
        Number of Generated Paths & 16\\
        Action Type & Truncated\\
        Initial Guess Time & 7 \\
        Optimization Time & 0.5\\
        Optimization Steps & 250 \\
        Optimizer & Adam \\
        Learning Rate & 0.2\\
        Path Length ($L$) & 100 \\
        Action Timestep ($\Delta t$) &0.0001\\
        Action Friction ($\gamma$) & 1\\
        Action Diffusivity ($D$) & 0\\
        \bottomrule
    \end{tabular}
\end{table}

\paragraph{Energy Minimization.} After OM optimization, we add the missing OXT and hydrogen atoms (at neutral pH) back to the structures using PDBFixer (\url{https://github.com/openmm/pdbfixer}). We then perform up to 200 steps of energy minimization in OpenMM in vacuum with the amber14 force field using L-BFGS. We restrict the Kabsch-aligned RMSD between the initial and final structures to 1 Angstrom to ensure that only very minor structural changes occur. 

\paragraph{Evaluation.} We use the same Markov State Model-based evaluation pipeline described in Section \ref{sec:fastfolding_details} for the fast-folding proteins. Following \cite{jing2024generative}, the TIC dimensions are fit on the backbone and sidechain torsion angles. The reported metrics are averaged over the 100 held-out test proteins.

\paragraph{Visualization of Sampled Paths.} We provide TIC visualizations of sampled transition paths for selected tetrapeptides alongside paths from the reference MSM in \fref{fig:tetra_figure_appendix}.
\begin{figure*}[h]
    \centering
\includegraphics[width=\linewidth]          {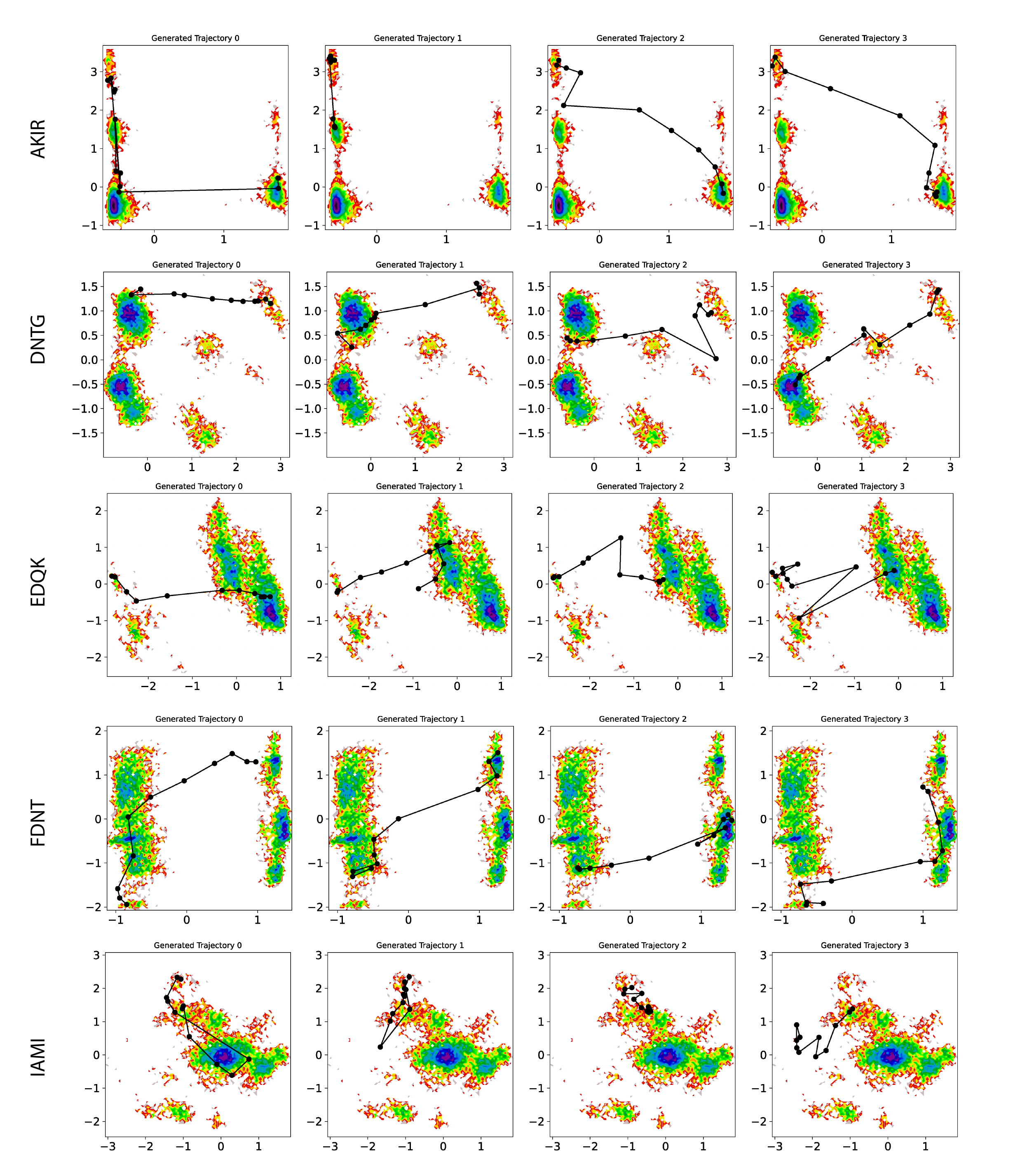}
    \caption{\textbf{Sampled transition paths from OM optimization on selected, held-out tetrapeptide sequences.} The sampled paths are diverse and intuitively pass through high density regions in the TIC free energy landscape.}
    \label{fig:tetra_figure_appendix}
\end{figure*}

\end{document}